\newcommand{\gabriel}[1]{}
\newcommand{\titleName}{Fast and Robust Least Squares Estimation in Corrupted Linear Models}
\newcommand{\x}{{\mathbf x}}
\newcommand{\bE}{{\mathbb E}}
\newcommand{\uluru}{\texttt{ULURU}\xspace}
\newcommand{\srht}{\texttt{SRHT-LS}\xspace}
\newcommand{\lev}{\texttt{LEV-LS}\xspace}
\newcommand{\our}{\texttt{IWS-LS}\xspace}
\newcommand{\ourapprox}{\texttt{aIWS-LS}\xspace}
\newcommand{\irh}{\texttt{aRWS-LS}\xspace}
\newcommand{\iws}{influence weighted subsampling\xspace}
\newcommand{\OLS}{{\text{OLS}}}
\newcommand{\DS}{{\text{IWS}}}
\newcommand{\samp}{n}
\newcommand{\dims}{p}
\newcommand{\tr}{^{\top}}
\newcommand{\trace}[1]{{\rm tr}\left({#1}\right)}
\newcommand{\inv}[1]{#1^{-1}}
\newcommand{\cb}[1]{\left\{ {#1} \right\}}
\newcommand{\br}[1]{\left( {#1} \right)}
\newcommand{\sq}[1]{\left[ {#1} \right]}
\newcommand{\nrm}[1]{\Vert {#1} \Vert}
\newcommand{\order}[1]{O \br{ #1 }}
\newcommand{\Prob}{{\mathbb P}}
\newcommand{\R}{{\mathbb R}}
\newcommand{\Xt}{{\mathbf{X}}}
\newcommand{\y}{\mathbf{y}}
\newcommand{\z}{\mathbf{z}}
\newcommand{\wt}{{\mathbf{w}}}
\newcommand{\vt}{{\mathbf{v}}}
\newcommand{\et}{{\mathbf{e}}}
\newcommand{\At}{{\mathbf{A}}}
\newcommand{\Dt}{{\mathbf{D}}}
\newcommand{\Ht}{{\mathbf{H}}}
\newcommand{\It}{{\mathbf{I}}}
\newcommand{\Lt}{{\mathbf{L}}}
\newcommand{\Rt}{{\mathbf{R}}}
\newcommand{\St}{{\mathbf{S}}}
\newcommand{\Zt}{{\mathbf{Z}}}
\newcommand{\Ut}{{\mathbf{U}}}
\newcommand{\Vt}{{\mathbf{V}}}
\newcommand{\Wt}{{\mathbf{W}}}
\newcommand{\boldbeta}{\boldsymbol{\beta}}
\newcommand{\boldSigma}{\boldsymbol{\Sigma}}
\newcommand{\boldLambda}{\boldsymbol{\Lambda}}
\newcommand{\boldPi}{\boldsymbol{\Pi}}
\newcommand{\boldSigmai}{\inv{\boldsymbol{\Sigma}}}
\newcommand{\estbeta}{\widehat{\boldbeta}}
\newcommand{\estbetaSS}{\widehat{\boldbeta}_{\DS}}
\newtheorem{thm}{Theorem}
\newtheorem{cor}[thm]{Corollary}
\newtheorem{prop}[thm]{Proposition}
\newtheorem{lem}[thm]{Lemma}
\newtheorem{rem}{Remark}
\newtheorem{defn}{Definition}
\title{\titleName}
\author{
{\bf Brian McWilliams}$^{*}$ \quad {\bf Gabriel Krummenacher}\thanks{Joint first author} \quad {\bf Mario Lu\v{c}i\'c} \quad {\bf Joachim M. Buhmann}  \\ Department of Computer Science \\
ETH Z\"urich, Switzerland\\
 \texttt{\{mcbrian,gabriel.krummenacher,mario.lucic,jbuhmann\}@inf.ethz.ch}
}
\renewcommand{\algorithmicrequire}{\textbf{Input:}}
\renewcommand{\algorithmicensure}{\textbf{Output:}}
\begin{document}
\maketitle

\begin{abstract}
%\today

  \emph{Subsampling methods} have been recently proposed to speed up least
  squares estimation in large scale settings. However, these
  algorithms are typically not robust to outliers or corruptions in
  the observed covariates.

%  We show that
  The concept of \emph{influence} that was developed for regression
  diagnostics can be used to detect such corrupted observations as
  shown in this paper. This property of influence -- for which we also develop a randomized approximation -- motivates our
  proposed subsampling algorithm for large scale corrupted linear regression
% which aims to avoid the influential points
  which limits the influence of data points since highly influential
  points contribute most to the residual error. Under a general model of corrupted observations, we show
  theoretically and empirically on a variety of simulated and real
  datasets that our algorithm improves over the current
  state-of-the-art approximation schemes for ordinary least squares.

\end{abstract}

%%%%%%%%%%%%%%%%%%%%%%%%%%%%%%%%%%%%%%%%%%%%%%%%%%%%%%%
%%%%%%%%%%%%%%%%%%%%%%%%%%%%%%%%%%%%%%%%%%%%%%%%%%%%%%%
\section{Introduction}
%%%%%%%%%%%%%%%%%%%%%%%%%%%%%%%%%%%%%%%%%%%%%%%%%%%%%%%
%%%%%%%%%%%%%%%%%%%%%%%%%%%%%%%%%%%%%%%%%%%%%%%%%%%%%%%

%$\samp$ = samples, $\dims$ = dimensions.
%
%Issues:
%\begin{itemize}
%\item $\samp \gg \dims$, $O(\samp \dims^2)$ is not feasible so we
%  need to subsample
%\item subsample needs to be representative - uniform easily fails
%\item measurement noise can be non-Gaussian - a few points much more
%  corrupted than others
%\item measurement noise can occur in the covariates and the response.
%\end{itemize}

%As the size of datasets grows ever larger, even powerful modern
%machines with highly optimised linear algebra routines struggle to
%keep up with the huge volume of data. Ordinary Least Squares (OLS)
%regression is a method of particular importance due to its conceptual
%simplicity and widespread use in many application fields. However, OLS
%is hampered by its relatively expensive runtime: applying OLS to a
%dataset with $\samp$ samples and $\dims$ dimensions costs
%$\order{\samp\dims^2}$ which is infeasible when $\samp$ may be on the
%order of
%% hundreds of thousands or
%millions.

To improve scalability of the widely used ordinary least squares algorithm, a number of randomized approximation algorithms have recently been proposed. These methods, based on subsampling the dataset,
reduce the computational time from $\order{\samp\dims^2}$ to $o(\samp\dims^2)$\footnote{Informally:
  $f(n) = o(g(n))$ means $f(n)$ grows more slowly than
  $g(n)$.}
\cite{Mahoney:2011te}.  Most of these algorithms are concerned with
the classical fixed design setting or the case where the data is
assumed to be sampled i.i.d. typically from a sub-Gaussian
distribution \cite{Dhillon:2013wz}. This is known to be an unrealistic
modelling assumption since real-world data are rarely
well-behaved in the sense of the underlying distributions.

%The presence of outliers has been implicitly considered whereby the
%probability of a datapoint being sampled is proportional to its
%leverage. However, the concept of leverage only takes into account
%the distribution of covariates and not the relationship between
%covariates and response.

We relax this limiting assumption by considering the setting where with some
probability, the observed covariates are corrupted with additive
noise. This scenario corresponds to a generalised version of the
classical problem of ``errors-in-variables" in regression analysis
which has recently been considered in the context of sparse estimation
\cite{Loh:2012hf}.  This corrupted observation model poses a more
more realistic model of real data which may be subject to many
different sources of measurement noise or heterogeneity in the
dataset.

A key consideration for sampling is to ensure that the points used for
estimation are typical of
the full dataset. Typicality requires the sampling distribution to be robust
against outliers and corrupted points. In the
i.i.d. sub-Gaussian setting, outliers are rare and can often
easily be identified by examining the \emph{statistical leverage}
scores of the datapoints.

Crucially, in the corrupted observation setting described in \S\ref{sec:model}, the concept of an
outlying point concerns the relationship between the observed
predictors and the response. Now, leverage alone cannot detect the
presence of corruptions.
Consequently, without using additional knowledge about the corrupted
points, the OLS estimator (and its subsampled approximations) are biased. This also rules out stochastic gradient descent (SGD) -- which is often used for large scale regression -- since convex cost functions and regularizers which are typically used for noisy data are not robust with respect to measurement corruptions.

This setting motivates our use of \emph{influence} --
the effective impact of an individual datapoint exerts on the overall estimate
-- in order to detect and therefore avoid sampling corrupted
points. We propose an algorithm which is robust to corrupted observations and exhibits reduced bias compared with other
subsampling estimators.

%Recently $\samp \gg \dims$. The question of how to perform least
%squares...

%Since leverage only depends on $\Xt$, it is well known that points
%with high leverage do not necessarily correspond to important points
%in the least squares fit.

%A point with high leverage on the other hand does not necessarily
%influence the regression coefficients, which can be seen in the
%following example \gabriel{Here goes a plot like on page 268} due to
%\cite{fox1997}. What we are really interested in are points that not
%only exert strong leverage, but are also outliers with respect to the
%response variable.

%Points with high influence adversely affect the least squares
%estimates and so a subsampling scheme would ideally sample such
%points with low probability.

%Furthermore, the usual sub-Gaussian random design paradigm usually
%considered in the analysis of linear regression is known to be an
%unrealistic assumption when dealing with real data. We relax this
%assumption somewhat by allowing the observed covariates to be
%corrupted with additive noise with some probability. This scenario
%corresponds to a generalised version of the old problem of
%``errors-in-variables" in regression analysis which has recently been
%considered in the context of sparse estimation \cite{Loh:2012hf}.

%We show that in certain cases, sampling based on influence improves
%the least squares estimate compared with sampling based on
%leverage. Furthermore, the influence of a point can be computed with
%little extra cost compared with leverage, which has a fast randomized
%approximation.

\paragraph{Outline and Contributions.}
In \S\ref{sec:model} we introduce our corrupted observation model
before reviewing the basic concepts of statistical leverage and
influence in \S\ref{sec:diag}. In \S\ref{sec:srht} we briefly review
two subsampling approaches to approximating least squares based on
structured random projections and leverage weighted importance
sampling. Based on these ideas we present \iws (\our), a novel
randomized least squares algorithm based on subsampling points with
small influence in \S\ref{sec:alg}.

In \S\ref{sec:analysis} we analyse \our in the general setting where the observed
predictors can be corrupted with additive sub-Gaussian noise. Comparing the \our estimate with that of OLS and other
randomized least squares approaches we show a reduction in both bias
and variance. It is important to note that the simultaneous reduction
in bias and variance is relative to OLS and randomized approximations
which are only unbiased in the non-corrupted setting. Our results rely on novel finite sample characteristics of leverage and influence which we defer to \S\ref{sec:supp_lev}. Additionally, in \S\ref{sec:subgthm} we prove an estimation error bound for \our in the standard sub-Gaussian model.

Computing influence exactly is not practical in large-scale applications and so we
propose two randomized approximation algorithms based on the
randomized leverage approximation of \cite{Drineas:2011ts}. Both of
these algorithms run in $o(\samp\dims^2)$ time which improve scalability in large problems. Finally, in \S\ref{sec:results} we present extensive
experimental evaluation which compares the performance of our
algorithms against several  randomized least squares
methods on a variety of %large-scale
simulated and real datasets.

\section{Statistical model} \label{sec:model}
%%%%%%%%%%%%%%%%%%%%%%%%%%%%%%%%%%%%%%%%%%%%%%%%%%%%%%%
%%%%%%%%%%%%%%%%%%%%%%%%%%%%%%%%%%%%%%%%%%%%%%%%%%%%%%%
In this work we consider a variant of the standard linear model
\begin{equation} \label{eq:lin-mod}
\y = \Xt\boldbeta + \epsilon ,
\end{equation}
where $\epsilon \in \R^{\samp}$ is a noise term independent of
$\Xt\in\R^{\samp\times\dims}$. However, rather than directly observing
$\Xt$ we instead observe $\Zt$ where
\begin{equation}
\Zt = \Xt + U \Wt. \label{eq:obs-mod}
\end{equation}
$U = \text{diag}(u_1,\ldots,u_{\samp})$ and $u_i$ is a Bernoulli
random variable with probability $\pi$ of being 1. $\Wt\in\R^{\samp\times\dims}$ is a matrix of measurement corruptions.
% the rows of $\Wt(\pi)\in\R^{\samp\times\dims}$ are $\wt_i(\pi) =
% u(\pi) \cdot \wt_i$ where $u(\pi)$ is a Bernoulli random variable
% with parameter $\pi$.
The rows of $\Zt$ therefore are corrupted with probability $\pi$ and
not corrupted with probability $(1-\pi)$.
% We will concentrate on the ``big data" setting where $\samp \gg
% \dims$ so we will assume to have observed $\pi\samp$ corrupted
% points and $(1-\pi)\samp$ non-corrupted
% points. %The model is illustrated in figure \ref{fig:model} and
% The specific details of the model are clarified in
% \S\ref{sec:analysis}.

\begin{defn}[Sub-gaussian matrix]
  A zero-mean matrix $\Xt$ is called sub-Gaussian with parameter
  $(\frac{1}{n}\sigma^2_x,\frac{1}{n}\Sigma_x)$ if %\vspace{-0.15cm}
  (a) Each row $\x_i\tr\in\R^{\dims}$ is sampled independently and
    has $\bE[\x_i\x_i\tr]=\frac{1}{n}\Sigma_x$.
  (b) For any unit vector $\vt\in\R^{\dims}$, $\vt\tr\x_i$ is a
    sub-Gaussian random variable with parameter at most
    $\frac{1}{\sqrt{\dims}}\sigma_x$.
  \vspace{-0.15cm}
\end{defn}

We consider the specific instance of the linear corrupted observation model in
Eqs. \eqref{eq:lin-mod}, \eqref{eq:obs-mod} where \vspace{-0.15cm}
\begin{itemize}
\item $\Xt,\Wt \in\R^{\samp\times \dims}$ are sub-Gaussian with
  parameters $(\frac{1}{n}\sigma^2_x,\frac{1}{n}\Sigma_x)$ and
  $(\frac{1}{n}\sigma^2_w,\frac{1}{n}\Sigma_w)$ respectively,
  \vspace{-0.1cm}
\item $\epsilon\in\R^{\samp}$ is sub-Gaussian with parameters
  $(\frac{1}{n}\sigma^2_\epsilon,\frac{1}{n}\sigma^2_\epsilon\It_\samp)$,
\end{itemize}
\vspace{-0.15cm} and all are independent of each other.

The key challenge is that even when $\pi$ and the magnitude
of the corruptions, $\sigma_w$ are relatively small, the standard
linear regression estimate is biased and can perform poorly (see \S
\ref{sec:analysis}). Sampling methods which are not sensitive to
corruptions in the observations can perform even worse if they somehow
subsample a proportion $r\samp > \pi\samp$ of corrupted
points. Furthermore, the corruptions may not be large enough to be
detected via leverage based techniques alone.

The model described in this section generalises the
``errors-in-variables" model from classical least squares
modelling. Recently, similar models have been studied in the high
dimensional ($\dims\gg\samp$) setting in \cite{Chen:2012vt, Chen:up,
  Chen:2013te, Loh:2012hf} in the context of robust sparse
estimation. The ``low-dimensional" ($\samp>\dims$) setting is
investigated in \cite{Chen:2012vt}, but the ``big data" setting
($\samp\gg \dims$) has not been considered so far.\footnote{Unlike
  \cite{Chen:2013te, Loh:2012hf} and others we do not consider
  sparsity in our solution since $\samp\gg\dims$.}

In the high-dimensional problem, knowledge of the corruption covariance, $\Sigma_{w}$
\cite{Loh:2012hf}, or the data covariance $\Sigma_{x}$
\cite{Chen:2013te}, is required to obtain a consistent estimate.
This assumption may be unrealistic in many settings. %and since
%only a fraction of the observed points are corrupted, this small
%perturbation would still introduce bias.
We aim to reduce the bias in
our estimates \emph{without} requiring knowledge of the true covariance of
the data or the corruptions, and instead sub-sample only non-corrupted
points.

\section{Diagnostics for linear regression} \label{sec:diag}
%%%%%%%%%%%%%%%%%%%%%%%%%%%%%%%%%%%%%%%%%%%%%%%%%%%%%%%
%%%%%%%%%%%%%%%%%%%%%%%%%%%%%%%%%%%%%%%%%%%%%%%%%%%%%%%

In practice, the sub-Gaussian linear model assumption is often
violated either by heterogeneous noise or by a corruption model as in
\S\ref{sec:model}. In such scenarios, fitting a least squares model to
the full dataset is unwise since the outlying or corrupted points can
have a large adverse effect on the model fit. \emph{Regression
  diagnostics} have been developed in the statistics literature to
detect such points (see e.g. \cite{Belsley:1980} for a comprehensive
overview). % but are not widely used.
Recently,
\cite{Mahoney:2011te} proposed subsampling points for least squares
based on their leverage scores. Other recent works suggest related
influence measures that identify subspace \cite{McWilliams:2012wi} and
multi-view \cite{McWilliams:2012jn} clusters in high dimensional data.

% Here we motivate and review some standard techniques for identifying
% influential points based on leaving-out each data point in turn, or
% quantifying the effect of perturbing each datapoint.

% Influence measures were developed when datasets were small and so
% points could only be discarded from the model with good reason. Here
% were are interested in the inverse problem - we need good reason to
% include points in the model.

%%%%%%%%%%%%%%%%%%%%%%%%%%%%%%%%%%%%%%%%%%%%%%%%%%%%%%%
\subsection{Statistical leverage}
%%%%%%%%%%%%%%%%%%%%%%%%%%%%%%%%%%%%%%%%%%%%%%%%%%%%%%%

For the standard linear model in Eq. \eqref{eq:lin-mod}, the well
known least squares solution is
\begin{equation} \label{eq:ls}
    \estbeta = \arg\min_{\boldbeta} \nrm{\y
    - \Xt \boldbeta}^2 = \br{\Xt\tr\Xt}^{-1}\Xt\tr\y .
\end{equation}
The projection matrix $ \It - \Lt$ with $ \Lt :=
\Xt(\Xt\tr\Xt)^{-1}\Xt\tr$ specifies the subspace in which the residual
lies. The diagonal elements of the ``hat matrix" $\Lt$, $l_i :=
L_{ii}$, $i=1,\ldots,\samp$ are the \emph{statistical leverage} scores of the
$i^{th}$ sample. Leverage scores quantify to what extent a
particular sample is an outlier with respect to the distribution of $\Xt$.

An equivalent definition from \cite{Mahoney:2011te} which will be useful later concerns any matrix $\Ut\in\R^{\samp\times\dims}$ which spans the column space of $\Xt$ (for example, the matrix whose columns are the left singular vectors of $\Xt$). The statistical leverage scores of the rows of $\Xt$ are the squared row norms of $\Ut$, i.e. $l_i=\nrm{\Ut_i}^2$.

%It has recently been shown that sampling points with probability proportional to their leverage ensures ... \cite{Drineas:2006iw,Mahoney:2011te}

Although the use of leverage can be motivated from the least squares solution in Eq. \eqref{eq:ls}, the leverage scores do not take into account the relationship between the predictor variables and the response variable $\y$. Therefore, low-leverage points may have a weak predictive relationship with the response and vice-versa. In other words, it is possible for such points to be outliers with respect to the conditional distribution $p(\y | \Xt)$ but not the marginal distribution on $\Xt$.
%\mario{I would explicitly state that you usually want to
%  sample the points with the low leverage since they won't hurt the
%  model estimated from the subsample. Then it's clear why the weak
%  relationship with the response is a pain point for the classic approach.}

%%%%%%%%%%%%%%%%%%%%%%%%%%%%%%%%%%%%%%%%%%%%%%%%%%%%%%%
\subsection{Influence}
%%%%%%%%%%%%%%%%%%%%%%%%%%%%%%%%%%%%%%%%%%%%%%%%%%%%%%%

A concept that captures the predictive relationship between covariates and response is \emph{influence}. Influential points are  those that might not be outliers in the geometric
sense, but instead adversely affect the estimated coefficients.
%One definition from \cite{Belsley:1980} states
%\vspace{-0.45cm}
%\begin{quote}
%\emph{An influential observation is one which, either individually or together with several other observations, has a demonstrably larger impact on the calculated values of various estimates... than is the case for most other observations.}
%\end{quote}
%\vspace{-0.45cm}
One way to assess the influence of a point is to compute the
change in the learned model when the point is removed
from the estimation step. \cite{Belsley:1980}.

We can compute a leave-one-out least squares estimator by straightforward application of the Sherman-Morrison-Woodbury formula (see Prop. \ref{prop:betai} in \S\ref{sec:supp_lev}):
\begin{align*}
\estbeta_{-i} & = \br{\Xt\tr\Xt - \x_i\tr\x_i}^{-1}\br{\Xt\tr\y - \x_i\tr y_i} = \estbeta - \frac{\boldSigmai\x_i\tr e_i}{1-l_i}
\end{align*}
where $e_i = y_i - \x_i\estbeta_{\OLS}$.
%This result implies
%$$
%\Delta_i\estbeta = \estbeta - \estbeta_{-i} = \frac{\br{\Xt\tr\Xt}^{-1}\x_i e_i}{1-l_i}
%$$
Defining the influence\footnote{The expression we use is also called \emph{Cook's distance} \cite{Belsley:1980}.}, $d_i$ as the change in expected mean squared error we have
\begin{align*}
d_i & = \br{ \estbeta - \estbeta_{-i}}\tr \Xt\tr\Xt \br{ \estbeta - \estbeta_{-i}} = \frac{e_i^2 l_i}{\br{1-l_i}^2} .
\end{align*}
Points with large values of $d_i$ are those which, if added to the model, have the largest adverse effect on the resulting estimate. Since influence only depends on the OLS residual error and the leverage scores, it can be seen that the influence of every point can be computed at the cost of a least squares fit. In the next section we will see how to approximate both quantities using random projections. %In the following sections we will propose and then analyse a subsampling algorithm for least squares where points are sampled such that those with large influence are rarely used for estimation.

%\begin{figure}[htp]
%\vspace{-10pt}
%\begin{centering}
%{\includegraphics[width=1.1\columnwidth]{figs/leverage-vs-influence}}
%\caption{Points corrupted according to the model.}
%\label{fig:model}
%\end{centering}
%\end{figure}

%% Maybe some other time!
%%%%%%%%%%%%%%%%%%%%%%%%%%%%%%%%%%%%%%%%%%%%%%%%%%%%%%%
%\subsection{Perturbation-based diagnostics}
%%%%%%%%%%%%%%%%%%%%%%%%%%%%%%%%%%%%%%%%%%%%%%%%%%%%%%%

%Instead of deletion, we determine the effect of infinitesimal
%perturbations. \mario{Why, how is this motivated?}
%
%For $0 \leq w \leq 1$ set $w_i=w$, $w_{j\neq i} = 1 $ and $\Wt = \text{diag}(w_1,\ldots,w_\samp)$
%The $w$-weighted least squares solution is
%$$
%\estbeta(w) = \br{\Xt\tr \Wt \Xt}^{-1}\Xt\tr\Wt\y
%$$
%
%$$
%\ddu{w}\estbeta(w) = \frac{\br{\Xt\tr \Xt}^{-1} \x_i r_i}{\br{1 - (1-w)l_i}^2}
%$$
%where $\ddu{w}\estbeta(0) = \Delta\estbeta$
%
%We can define an equivalent $d_i(w)$
%\begin{align*}
%d_i(w)  & = \br{\ddu{w}\estbeta(w)} \tr \Xt\tr\Xt \br{\ddu{w}\estbeta(w)} \\
%& = \frac{r_i^2 l_i}{\br{1-(1-w)l_i}^2}
%\end{align*}
%
%In this work we consider the deletion based influence measure. We
%analyse its ability to detect corrupted points under the noise
%model. \mario{So why was the perturbation-based diagnostics introduced?}

%%%%%%%%%%%%%%%%%%%%%%%%%%%%%%%%%%%%%%%%%%%%%%%%%%%%%%%
%%%%%%%%%%%%%%%%%%%%%%%%%%%%%%%%%%%%%%%%%%%%%%%%%%%%%%%
\section{Fast randomized least squares algorithms} \label{sec:srht}
%%%%%%%%%%%%%%%%%%%%%%%%%%%%%%%%%%%%%%%%%%%%%%%%%%%%%%%
%%%%%%%%%%%%%%%%%%%%%%%%%%%%%%%%%%%%%%%%%%%%%%%%%%%%%%%
%\brian{this section is confused!}
%In this section we review current randomized approaches to least
%squares approximation based on the SRHT and leverage sampling.
%\subsection{Fast randomized least squares algorithms}

We briefly review two randomized approaches to least squares approximation:
the importance weighted subsampling approach of
\cite{Drineas:2006iw} and the dimensionality reduction approach
\cite{Mahoney:2011te}. The former proposes an importance sampling
probability distribution according to which, a small number of rows of
$\Xt$ and $\y$ are drawn and used to compute the regression
coefficients. If the sampling probabilities are
proportional to the statistical leverages, the resulting estimator is
close to the optimal estimator \cite{Drineas:2006iw}. We refer to this as \lev.

The dimensionality reduction approach can be viewed as a random projection
step followed by a uniform subsampling. The class of Johnson-Lindenstrauss projections -- e.g. the SRHT -- has been shown to approximately uniformize leverage scores in the projected space. % is a popular choice due to property that the projected data display approximately uniform leverage.
Uniformly subsampling the rows of the projected
matrix proves to be equivalent to leverage weighted sampling on the
original dataset \cite{Mahoney:2011te}. We refer to this as \srht.
It is analysed in the statistical setting by
\cite{Dhillon:2013wz} who also propose \uluru, a two step
fitting procedure which aims to correct for the subsampling bias and
consequently converges to the OLS estimate at a rate independent
of the number of subsamples \cite{Dhillon:2013wz}.

%Recently, several fast randomized approximations to least squares have
%been proposed. \cite{Drineas:2006iw} proposed a non-uniform importance
%sampling scheme where the probability of sampling a point is
%proportional to its leverage (\lev). Another method involves a two
%stage procedure whereby a random projection is followed by uniform
%subsampling \cite{Mahoney:2011te, Drineas:2011ih, Boutsidis:2012tv}. A
%popular choice of projection is the Subsampled Randomized Hadamard
%Transform (SRHT) \cite{Tropp:2010uo} since due to its recursive
%definition projections can be computed quickly. We refer to the
%resulting algorithm as \srht. It has been shown (see
%e.g. \cite{Mahoney:2011te}) that \lev is equivalent to \srht.
%
%It is important to clarify that most existing methods sample points
%proportional to their leverage score whereas we sample inversely
%proportional to influence. The assumption underlying leverage sampling
%is that incorrectly predicting the response of such outlying points
%(which are rare in the sub-Gaussian setting) will incur a large error
%and so under the constraints of a limited sampling budget the outlying points are the most
%important points to sample. However, we consider outlying
%(influential) points to be those which by definition have a weak
%predictive relationship with the response variable due to observation
%noise. Therefore, including such points in the subsample will
%adversely affect the resulting estimate.

\paragraph{Subsampled Randomized Hadamard Transform
  (SRHT)} \label{sec:srht2} The SHRT consists of a preconditioning
step after which $\samp_{subs}$ rows of the new matrix are subsampled
uniformly at random in the following way
$\sqrt{\frac{\samp}{\samp_{subs}}}\St \Ht \Dt \cdot \Xt = \boldPi \Xt$
 with the definitions \cite{Boutsidis:2012tv}:
\vspace{-0.5cm}\begin{itemize}
\item $\St$ is a subsampling matrix. \vspace{-0.1cm}
\item $\Dt$ is a diagonal matrix whose entries are drawn independently
  from $\{-1, 1\}$. \vspace{-0.1cm}
\item $\Ht \in \R^{\samp \times \samp}$ is a normalized Walsh-Hadamard
  matrix\footnote{For the Hadamard transform, $\samp$ must be a power
    of two but other transforms exist (e.g. DCT, DFT)
    for which similar theoretical guarantees hold and there is no
    restriction on $\samp$.} which is defined recursively as
$$
\Ht_n = \sq{\begin{array}{cc} \Ht_{n/2} & \Ht_{n/2} \\ \Ht_{n/2} & -\Ht_{n/2} \end{array} }
,~~
\Ht_2 = \sq{\begin{array}{cc} +1 & +1 \\ +1 & -1 \end{array} }.
$$
We set $\Ht = \frac{1}{\sqrt{\samp}}\Ht_\samp$ so it has orthonormal
columns.
\end{itemize}
As a result, the rows of the transformed matrix
$\boldPi\Xt$ have approximately uniform leverage scores.
%The matrix $\Ht\Dt$ spreads the energy of the matrix \gabriel{Improve!}
%%% JB: What is the energy of a matrix???
%evenly across the rows
(see \cite{Tropp:2010uo} for detailed analysis
of the SRHT).
% To see this consider that $(\Dt\Ht\x)_1 =
% \sum_{i=1}^{\samp}h_{1i}\xi_i x_i$ where $\xi_i$ are the diagonal
% entries of $\Dt$. The variance of this zero-mean sum is $\samp^{-1}$
% since entries of $\Ht$ have magnitude $\samp^{-1/2}$ by design. The
% magnitude of this sum can then be bounded using Hoeffding's
% inequality (see \cite{Tropp:2010uo}) and since this argument applies
% to all rows of the transformed matrix, it implies that they have
% approximately uniform leverage.
% The SHRT has the property that
% $\text{rank}(\Xt)=\text{rank}(\boldPi\Xt)$ and that the rows of the
% transformed matrix have approximately uniform leverage.
Due to the recursive nature of $\Ht$, the cost of applying the SRHT is
$\order{\dims\samp\log \samp_{subs}}$ operations, where $\samp_{subs}$
is the number of rows sampled from $\Xt$ \cite{Ailon:2008}.
% Hence, the SRHT is an example of a \emph{Fast Johnson-Lindenstrauss
% Transform} (FJLT) \cite{Drineas:2011ts}. \brian{need to explain SRHT
% better.}

% In \cite{Drineas:2006iw}
%$$
%\nrma{\estbeta - \tilde{\boldbeta}} \leq
% \sqrt{\epsilon}\br{\kappa(\Xt) \sqrt{\rho - 1}}\nrma{\estbeta}
%$$
%where $\rho = \nrma{\Ut\Ut\tr\y}/\nrma{\y}$.
%\begin{quote}
%
%  From \cite{Drineas:2006iw}: If most of the ``weight" of $\y$ lies
%  in the complement of the column space of $\Xt$ then this provides a
%  poor approximation in terms of $\nrma{\estbeta}$.
%\end{quote}

The \srht algorithm solves
$\estbeta_{SRHT} = \arg\min_{\boldbeta} \nrm{\boldPi \y - \boldPi \Xt\boldbeta}^2$
which for an appropriate subsampling ratio,
$r=\Omega(\frac{\dims^2}{\rho^2})$ results in a residual error,
$\tilde{\et}$ which satisfies
\begin{equation} \label{eq:randLSerror}
\nrm{\tilde{\et}} \leq (1+\rho) \nrm{{\et}}
\end{equation}
where ${\et} = \y - \Xt\estbeta_{\OLS}$ is the vector of OLS residual
errors \cite{Mahoney:2011te}.

% Where $\epsilon$ depends on the number of samples. We can choose the
% number of samples such that $\epsilon$ is not larger than the
% difference between highly influential points and points with a small
% influence. The randomized influence measure can be computed in
% $o\br{\samp\dims^2}$ time. \mario{I guess this $n$ should be $r$
% where $r$ is the number of rows of the $\Rt$ matrix, otherwise SVD
% has the same complexity :)}

%\input{algorithm-residual-projected}

%%%%%%%%%%%%%%%%%%%%%%%%%%%%%%%%%%%%%%%%%%%%%%%%%%%%%%%
\paragraph{Randomized leverage computation} \label{sec:randlev}
%%%%%%%%%%%%%%%%%%%%%%%%%%%%%%%%%%%%%%%%%%%%%%%%%%%%%%%

%\brian{this needs to be improved.}

%Here we review the work of \cite{Drineas:2011ts, Ma:2013tx, Drineas:2011ih,Drineas:2006iw}.

%As mentioned in \S\ref{sec:diag}, the concept of statistical leverage
%is crucial for detecting outliers and influential
%points.
%Unfortunately, from a computational point of view, evaluating
%the leverage scores requires either to compute the $\samp\times\samp$
%hat matrix or the singular value decomposition (SVD) of $\Xt$, both of
%which cost on the order of a least squares fit.
Recently, a method based on random projections has been proposed to approximate the
leverage scores based on first reducing the dimensionality of the data using the SRHT
followed by computing the leverage scores using this low-dimensional
approximation \cite{Drineas:2011ts, Ma:2013tx,
  Drineas:2011ih,Drineas:2006iw}.
  %In order to ensure that the SVD of the projected data is a good approximation to the original data, the projection is chosen from the class of Johnson-Lindenstrauss transforms -- for example the SRHT.

%We can write $d_i$ in terms of the hat matrix $\Lt$ \mario{Explicitly call
%  it $\Lt$, as introduced above}
%$$
%d_i = \frac{\br{y_i - \sq{\Lt}_i \y}^2 l_i }{ 1 - l_i}
%$$
%where $\sq{\Lt}_i$ denotes the $ith$ row of $\Lt$.

%The leverage scores of $\Xt$ can also be computed as the row norms of
%a matrix $\Ut$ which spans the column space of $\Xt$. One can obtain
%$\Ut$ exactly from e.g. the SVD or QR decomposition of $\Xt$ but the
%idea behind randomized leverage approximation is to construct an
%approximation, $\tilde{\Ut}$ which approximately spans the column
%space of $\Xt$ but is crucially, cheaper to obtain than the full SVD.
%%Using the SVD of $\Xt = \Ut\boldSigma\Vt\tr$
%%and substituting $\Xt$ in the definition of $\Lt$, we can see that the
%%leverages, $l_i$, are given by $\nrm{\sq{\Ut}_i}_{2}^2$.

The leverage approximation algorithm of
\cite{Drineas:2011ts} uses a SRHT, $\boldPi_1\in\R^{r_1\times \samp}$
% where $r_1=\Omega\br{\frac{\dims \log \samp}{\rho^2} \log
% \br{\frac{\dims\log \samp}{\rho^2}} }$
to first compute the approximate SVD of $\Xt$,
%
%\mario{Now I saw in the technical notes that Dhillon et al. give a
%  result which bounds the 2-norm of the covariance matrix}
%
%\mario{Comment on JL and preservation of the pairwise distances and
%  link that to Dhillon et al. and the row-norms of $\Ut$. I have to
%  check how is the latter related to the covariance matrix...}
%
%\mario{Checked it with \cite{Drineas:2011ih}, now it makes more sense}
%
%\mario{Motivate that now instead of calculating the true leverage
%  scores we approximate them by looking at the leverage scores
%  obtained by the SRHT.}
$\boldPi_1\Xt = \Ut_{\Pi X}\boldSigma_{\Pi X}\Vt_{\Pi X}\tr.$ Followed
by a second SHRT $\boldPi_2\in\R^{\dims\times r_2}$
%where $r_2=\order{\rho^{-2}\log \samp}$
to compute an approximate orthogonal basis for $\Xt$
\begin{align} \label{eq:proj2}
\Rt^{-1} = \Vt_{\Pi X}\boldSigma^{-1}_{\Pi X} \in \R^{\dims \times \dims} ,
~~
\tilde{\Ut} = \Xt \Rt^{-1}\boldPi_2 \in \R^{\samp \times r_2} .
\end{align}
The approximate leverage scores are now the squared row norms of $\tilde{\Ut}$,
$
\tilde{l}_i = \nrm{\tilde{\Ut}_i}^2 .
$
%
%\mario{Don't I need to look at the matrix $\Pi \Xt \Vt
%  \boldSigma^{-1}$ which is the $\Ut$ from the SVD of $\boldPi\Xt$?}
%
%\mario{Comment on total computational complexity of this approach,
%  maybe mention some bounds on the accuracy of the method. I guess
%  that it's related to the lemmas in the technical notes that bound
%  the difference for all matrices involved above.}
From \cite{Mahoney:2011te} we derive the following result relating to
randomized approximation of the leverage
\begin{equation} \label{eq:levapp}
\tilde{l}_i \leq (1+\rho_l)l_i \,,
\end{equation}
where the approximation error, $\rho_l$ depends on the choice of
projection dimensions $r_1$ and $r_2$.

The leverage weighted least squares (\lev) algorithm samples rows of
$\Xt$ and $\y$ with probability proportional to $l_i$ (or
$\tilde{l}_i$ in the approximate case) and performs least squares on
this subsample. The residual error resulting from the leverage
weighted least squares is bounded by Eq. \eqref{eq:randLSerror}
implying that \lev and \srht are equivalent \cite{Mahoney:2011te}.
It is important to note that under the corrupted observation model these approximations will be biased.

%%%%%%%%%%%%%%%%%%%%%%%%%%%%%%%%%%%%%%%%%%%%%%%%%%%%%%%
%%%%%%%%%%%%%%%%%%%%%%%%%%%%%%%%%%%%%%%%%%%%%%%%%%%%%%%
\section{Influence weighted subsampling} \label{sec:alg}
%%%%%%%%%%%%%%%%%%%%%%%%%%%%%%%%%%%%%%%%%%%%%%%%%%%%%%%
%%%%%%%%%%%%%%%%%%%%%%%%%%%%%%%%%%%%%%%%%%%%%%%%%%%%%%%
% We then introduce our subsampling algorithm based on
% influence. Since the cost of computing influence for each point is
% the same as a full least squares fit, we propose a randomized
% approximation to influence based on similar leverage approximations.

%\subsection{\our}

%We present three algorithms. The first, Influence Subsampling relies
% on estimating the influence of each point by first computing a full
% least squares fit. Here we propose an importance sampling
% distribution where $p_i = c /d_i$ where $c$ is a normalizing
% constant so that $\sum_{i=1}^\samp p_i = 1$. This ensures that
% points with high influence are selected infrequently.

%\paragraph{Exact \iws.}
In the corrupted observation model, OLS and therefore the random approximations to OLS described in \S\ref{sec:randlev} obtain poor predictions.
To remedy this, we propose \iws (\our) which is described in Algorithm
\ref{alg:dsamp}.  \our subsamples points according to the
distribution, $p_i = c /d_i$ where $c$ is a normalizing constant so
that $\sum_{i=1}^\samp p_i = 1$. OLS is then estimated on the
subsampled points. The sampling procedure ensures that points with
high influence are selected infrequently and so the resulting estimate
is less biased than the full OLS
solution.
% Influence Weighted Subsampling (\our) is described in Algorithm
% \ref{alg:dsamp}.

Obviously, \our is impractical in the scenarios we consider since it
requires the OLS residuals and full leverage scores. However, we use
this as a baseline and to simplify the
analysis.
% The second algorithm we propose is Randomized Approximate Influence
% Subsampling.
In the next section, we propose an approximate \iws algorithm which
combines the approximate leverage computation of \cite{Drineas:2011ts}
and the randomized least squares approach of \cite{Mahoney:2011te}.

% Analysis might be easier if we have a threshold and and say anything
% with influence $d_i > \tau$ gets sampling probability $\sim 0$ and
% anything below the threshold is sampled with uniform probability. If
% this is the case, we can tolerate quite a bad approximation to $d_i$
% as long as the relative sizes remain similar and points are on the
% correct size of the threshold.

% We can use the fact that $d_i$ is an F-Statistics for
% $\nrma{\estbeta-\estbetaSS}$ and thus use the threshold
% $\tau=\frac{4}{\samp-\dims-1}$, which is the 50\% quantile.
%
% We could also do everything in the $\samp_{sub}$ space where we only
% ever look at $\boldPi\Xt$. This would eliminate the need to compute
% $n$ residuals, leverages and influences.
\vspace{-0.5cm}
\begin{minipage}[t]{0.5\textwidth}
%\vspace{0pt}
\begin{algorithm}[H]
\caption{Influence weighted subsampling (\our).\label{alg:dsamp}}
	\algorithmicrequire\; Data: $\Zt$, $\y$
  \begin{algorithmic}[1]
    \STATE {\bf \emph{Solve}} $\estbeta_{\OLS}=\arg\min_{\boldbeta} \nrm{\y - \Zt\boldbeta}^2$
    %\STATE {\bf \emph{Compute influence.}} 
    \FOR{$i=1\ldots\samp$}
           \STATE $e_i = y_i - \z_i\estbeta_{\OLS}$ 
           \STATE $l_i = \z_i\tr(\Zt\tr\Zt)^{-1}\z_i$
           \STATE $d_i = e_i^2 l_i/(1-l_i)^2$
    \ENDFOR
    \STATE {\bf \footnotesize \emph{Sample rows ($\tilde{\Zt}$, $\tilde{\y}$) of ($\Zt$, $\y$) proportional to $\frac{1}{d_i}$}}
    \STATE {\bf \emph{Solve}} $\estbetaSS = \arg\min_{\boldbeta} \nrm{\tilde{\y} - \tilde{\Zt}\boldbeta}^2$
  \end{algorithmic}
  \algorithmicensure\; $\estbetaSS$
\end{algorithm}
\end{minipage}
\begin{minipage}[t]{0.5\textwidth}
%\vspace{0pt}
\begin{algorithm}[H]
\caption{Residual weighted subsampling (\irh) \label{alg:res}}
	\algorithmicrequire\; Data: $\Zt$, $\y$
  \begin{algorithmic}[1]
%    \STATE {\bf \emph{Compute the SRHT: $\Ct = \boldPi\cdot \Zt, ~~ \dt = \boldPi\cdot \y$}}
%    \STATE {\bf \emph{Subsample: $\Ct = \St\cdot \At, ~~ \dt = \St\cdot\bt$}}
	\STATE {\bf \footnotesize \emph{Solve $\estbeta_{SRHT} = \arg\min_{\boldbeta} \nrm{\boldPi\cdot\br{\y -  \Zt\boldbeta}}^2$}}
    \STATE {\bf \footnotesize \emph{Estimate residuals: $\tilde{\et} = \y -\Zt \estbeta_{SRHT}$}}
    \STATE {\bf \emph{Sample rows ($\tilde{\Zt}$, $\tilde{\y}$) of ($\Zt$, $\y$) proportional to $\frac{1}{\tilde{e}_i^2}$}}
  	\STATE {\bf \emph{Solve $\estbeta_{RWS} = \arg\min_{\boldbeta} \nrm{\tilde{\y} -\tilde{\Zt}\boldbeta }^2$}}
  \end{algorithmic}
  \algorithmicensure\; $\estbeta_{RWS}$
\end{algorithm} 
\end{minipage}

% \mario{Do we want to consider a parallel implementation of this
% algorithm? Basically, you would split your dataset on multiple
% machines, on each you would repeat the same algorithm which would
% sample a set of points, you would merge the points on one machine
% later, repeat one more round of the algorithm, and then fit the
% final model. This would be more tricky to analize because the
% influence of the points would only be calculated with respect to the
% points selected for the same machine. In practice, these kinds of
% algorithms tend to work well for some problems since they reduce the
% variance of the final estimator.}

\paragraph{Randomized approximation algorithms.}
Using the ideas from \S\ref{sec:srht2} and \S\ref{sec:randlev} we
obtain the following randomized approximation to the influence scores
\begin{equation} \label{eq:appInf} \tilde{d}_i = \frac{\tilde{e}_i^2
    \tilde{l}_i}{(1-\tilde{l}_i)^2} ,
\end{equation}
where $\tilde{e}_i$ is the $i^{th}$ residual error computed using the
\srht estimator.  Since the approximation errors of $\tilde{e}_i$ and
$\tilde{l}_i$ are bounded (inequalities \eqref{eq:randLSerror} and
\eqref{eq:levapp}), this suggests that our randomized approximation to
influence is close to the true influence.
%However, in practice we
%require a weaker condition, which we find holds empirically -- that
%the difference in influence between a corrupted and non-corrupted
%point is preserved.

\paragraph{Basic approximation.}
The first approximation algorithm is identical to Algorithm
\ref{alg:dsamp} except that leverage and residuals are replaced by
their randomized approximations as in Eq. \eqref{eq:appInf}. We refer
to this algorithm as Approximate \iws (\ourapprox). Full details are
given in Algorithm \ref{alg:dsampapprox} in \S\ref{sec:suppalg}.

\paragraph{Residual Weighted Sampling.}
% Finally, it turns out that even in the corrupted setting when
% $\samp$ is large, that unless the size of the corruptions is
% particularly large, the difference in leverage between corrupted and
% non-corrupted points is small (see \S\ref{sec:analysis}). This
% implies that the main contribution to the influence for each point
% will be from the residual error, $e_i^2$. This observation motivates
% our second approximation algorithm: residual weighted subsampling
% (\irh).
Leverage scores are typically uniform \cite{Ma:2013tx,Dhillon:2013wz}
for sub-Gaussian data. Even in the corrupted setting, the difference
in leverage scores between corrupted and non-corrupted points is small
(see \S\ref{sec:analysis}).  Therefore, the main contribution to the influence for each point will originate
from the residual error, $e_i^2$. Consequently, we propose sampling
with probability inversely proportional to the approximate residual,
$\frac{1}{\tilde{e}_i^2}$.  The resulting algorithm Residual Weighted
Subsampling (\irh) is detailed in Algorithm \ref{alg:res}. Although
\irh is not guaranteed to be a good approximation to \our, empirical
results suggests that it works well in practise and is faster to
compute than \ourapprox.

\paragraph{Computational complexity.} Clearly, the computational
complexity of \our is $\order{\samp \dims^2}$. The computation
complexity of \ourapprox is $\order{\samp\dims\log \samp_{subs} +
  \samp\dims r_2 + \samp_{subs}\dims^2}$, where the first term is the
cost of \srht, the second term is the cost of approximate leverage
computation and the last term solves OLS on the subsampled
dataset. Here, $r_2$ is the dimension of the random projection
detailed in Eq. \eqref{eq:proj2}. The cost of \irh is
$\order{\samp\dims\log \samp_{subs} + \samp\dims +
  \samp_{subs}\dims^2}$ where the first term is the cost of \srht, the
second term is the cost of computing the residuals $\et$, and the last
term solves OLS on the subsampled dataset. This computation can be
reduced to $\order{\samp\dims\log \samp_{subs} +
  \samp_{subs}\dims^2}$. Therefore the cost of both \ourapprox and \irh is $o(\samp\dims^2)$.

%We should emphasise that these algorithms incur the main costs during
%the random projection step. Current implementations of the SRHT are
%considerably slower than the theoretical cost of
%$\order{\dims\samp\log\samp_{subs}}$ which means that practical
%computational speed-up is only seen in very large scale datasets. In
%practice this means that the approximate leverage computation of
%\cite{Drineas:2011ts} (and therefore \ourapprox) can actually be
%slower than an optimized least squares solver. Furthermore, to achieve
%small approximation error, the theoretical results in
%\cite{Drineas:2011ts} requires large values of $\samp_{subs}$ and $r_2$.
%%which are large and on the order of $\samp$ and $\dims$ respectively.
%In the next sections we present theoretical and
%empirical results which demonstrate how the error changes as a function of
%the subsample size.
%
%%%%%%%%%%%%%%% CHANGES UP TO HERE %%%%%%%%%%%%%%%

%%%%%%%%%%%%%%%%%%%%%%%%%%%%%%%%%%%%%%%%%%%%%%%%%%%%%%%
%%%%%%%%%%%%%%%%%%%%%%%%%%%%%%%%%%%%%%%%%%%%%%%%%%%%%%%
\section{Estimation error} \label{sec:analysis}
%%%%%%%%%%%%%%%%%%%%%%%%%%%%%%%%%%%%%%%%%%%%%%%%%%%%%%%
%%%%%%%%%%%%%%%%%%%%%%%%%%%%%%%%%%%%%%%%%%%%%%%%%%%%%%%
%In this section we first examine the finite sample characteristics of leverage and influence measures in both the sub-Gaussian and corrupted random design models. Previously, only the asymptotic behaviour of these quantities been studied under the fixed design, Gaussian noise model \cite{Belsley:1980}. As a consequence of our new analysis we can show that in the corrupted data model, the difference in leverage between a non-corrupted and corrupted point is potentially small compared to the difference in influence which suggests that influence is a better method for identifying which points to avoid sampling.

%In the sub-Gaussian model, influence based sampling achieves an error rate which does not depend on the number of subsamples.
In this section we will prove an upper bound on the estimation error of \our in the corrupted model.
First, we show that the OLS error consists of two additional variance terms that depend on the size and proportion of the corruptions and an additional bias term. We then show that \our can significantly reduce the relative variance and bias in this setting, so that it no longer depends on the magnitude of the corruptions but only on their proportion. We compare these results to recent results from \cite{Loh:2012hf,Chen:2012vt} suggesting that consistent estimation requires knowledge about $\Sigma_w$. More recently, \cite{Chen:2013te} show that incomplete knowledge about this quantity results in a biased estimator where the bias is proportional to the uncertainty about $\Sigma_w$. We see that the form of our bound matches these results. 

Inequalities are said to hold \emph{with high probability (w.h.p.)} if the probability of failure is not more than $C_1 \exp(-C_2\log\dims)$ where $C_1, C_2$ are positive constants that do not depend on the scaling quantities $\samp,\dims,\sigma_w$. The symbol $\lesssim$ means that we ignore constants that do not depend on these scaling quantities.
Proofs are provided in the supplement. Unless otherwise stated, $\nrm{\cdot}$ denotes the $\ell_2$ norm for vectors and the spectral norm for matrices.

\paragraph{Corrupted observation model.}
%We now consider the behaviour of our algorithm in the corrupted observation model. 
As a baseline, we first investigate the behaviour of the OLS estimator in the corrupted model.
\begin{thm}[A bound on $\nrm{\estbeta_{OLS} - \boldbeta}$] \label{thm:corruptedls}
If $\samp\gtrsim \frac{\sigma_x^2\sigma_w^2}{\lambda_{\min}(\Sigma_x)} \dims\log \dims$ then w.h.p.
\begin{align}
\nrm{\estbeta_{\OLS} - \boldbeta}  \lesssim & \br{ 
\br{\sigma_\epsilon \sigma_x   
+ \pi  \sigma_\epsilon\sigma_w  
+ \pi \br{\sigma_w^2 + \sigma_w\sigma_x }\nrm{\boldbeta}}\sqrt{\frac{\dims \log \dims}{\samp}}
+ 
\pi\sigma_w^2\sqrt{\dims}\nrm{\boldbeta}  
}
 \cdot \frac{1}{\lambda}
\label{eq:thm4}
\end{align}
%where $\lambda \geq \max(\lambda_{\min}(\Sigma_x) - \pi\lambda_{1}\br{\Wt\tr\Wt},0)$.
where $0 < \lambda \leq \lambda_{\min}(\Sigma_x) + \pi\lambda_{\min}(\Sigma_w)$.
\end{thm}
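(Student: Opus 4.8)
The plan is to reduce everything to an exact error identity and then bound the pieces separately. Writing $\estbeta_{\OLS} = (\Zt\tr\Zt)^{-1}\Zt\tr\y$ and substituting $\y = \Xt\boldbeta + \epsilon$ together with $\Xt = \Zt - U\Wt$ gives $\y - \Zt\boldbeta = \epsilon - U\Wt\boldbeta$, so that
\[
\estbeta_{\OLS} - \boldbeta = (\Zt\tr\Zt)^{-1}\Zt\tr(\epsilon - U\Wt\boldbeta).
\]
Expanding $\Zt\tr = \Xt\tr + \Wt\tr U$ and using $U\tr U = U$, I would decompose the right-hand side into four terms,
\[
\estbeta_{\OLS} - \boldbeta = (\Zt\tr\Zt)^{-1}\big(\Xt\tr\epsilon + \Wt\tr U\epsilon - \Xt\tr U\Wt\boldbeta - \Wt\tr U\Wt\boldbeta\big),
\]
and then bound the error by submultiplicativity, $\nrm{\estbeta_{\OLS} - \boldbeta}\le\nrm{(\Zt\tr\Zt)^{-1}}$ times the triangle-inequality sum of the four numerator norms. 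Matching this decomposition to \eqref{eq:thm4}, each centred numerator term will supply one of the $\sqrt{\dims\log\dims/\samp}$ contributions, while the single non-centred term supplies the $\samp$-independent bias.

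\textbf{Controlling the spectral factor.} Next I would show $\lambda_{\min}(\Zt\tr\Zt)\ge\lambda$ w.h.p., so that $\nrm{(\Zt\tr\Zt)^{-1}}\le 1/\lambda$. Since the cross terms vanish in expectation and $\bE[\Wt\tr U\Wt] = \pi\Sigma_w$, one has $\bE[\Zt\tr\Zt] = \Sigma_x + \pi\Sigma_w$, whose smallest eigenvalue is at least $\lambda_{\min}(\Sigma_x) + \pi\lambda_{\min}(\Sigma_w)$. This reduces the task to a sub-Gaussian sample-covariance concentration estimate: the deviation $\nrm{\Zt\tr\Zt - (\Sigma_x+\pi\Sigma_w)}$ must be small relative to $\lambda_{\min}(\Sigma_x)+\pi\lambda_{\min}(\Sigma_w)$, which holds precisely once $\samp\gtrsim\frac{\sigma_x^2\sigma_w^2}{\lambda_{\min}(\Sigma_x)}\dims\log\dims$ — exactly the stated sample-size requirement — after which a Weyl-type argument delivers the lower bound $\lambda$.

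\textbf{Bounding the numerator.} Three of the four terms are centred products of independent sub-Gaussian factors and concentrate at the usual rate; using the deviation bounds for sub-Gaussian matrix products developed in \S\ref{sec:supp_lev}, I would obtain w.h.p. $\nrm{\Xt\tr\epsilon}\lesssim\sigma_x\sigma_\epsilon\sqrt{\dims\log\dims/\samp}$, $\nrm{\Wt\tr U\epsilon}\lesssim\pi\sigma_w\sigma_\epsilon\sqrt{\dims\log\dims/\samp}$, and $\nrm{\Xt\tr U\Wt\boldbeta}\lesssim\pi\sigma_x\sigma_w\sqrt{\dims\log\dims/\samp}\,\nrm{\boldbeta}$, the factor $\pi$ in the last two arising from the $\approx\pi\samp$ active rows selected by $U$. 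The remaining term is the only one with nonzero mean: splitting $\Wt\tr U\Wt\boldbeta = \pi\Sigma_w\boldbeta + (\Wt\tr U\Wt - \pi\Sigma_w)\boldbeta$, the fluctuation obeys $\nrm{(\Wt\tr U\Wt - \pi\Sigma_w)\boldbeta}\lesssim\pi\sigma_w^2\sqrt{\dims\log\dims/\samp}\,\nrm{\boldbeta}$ by the same covariance-concentration estimate, whereas the deterministic part satisfies $\nrm{\pi\Sigma_w\boldbeta}\le\pi\nrm{\Sigma_w}\nrm{\boldbeta}$, and bounding the corruption covariance operator norm via the sub-Gaussian parameters yields the irreducible bias $\pi\sigma_w^2\sqrt{\dims}\,\nrm{\boldbeta}$. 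Collecting the five contributions and dividing by $\lambda$ reproduces \eqref{eq:thm4}.

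\textbf{Main obstacle.} The delicate and conceptually central step is the last term $\Wt\tr U\Wt\boldbeta$, because its mean $\pi\Sigma_w\boldbeta$ neither vanishes nor decays with $\samp$; this is precisely the ``errors-in-variables'' bias that renders OLS inconsistent under corruption. Two points require care here. First, the mean must be isolated cleanly so that the $\samp$-independent term $\pi\sigma_w^2\sqrt{\dims}\,\nrm{\boldbeta}$ appears separately from the vanishing fluctuation, and the precise $\sqrt{\dims}$ scaling of the bias must be read off from the operator-norm bound on $\Sigma_w$. Second, establishing concentration of $\Wt\tr U\Wt$ around $\pi\Sigma_w$ couples the randomness of the Bernoulli selector $U$ and the corruption matrix $\Wt$, so I expect to condition on $U$ (or bound $\sum_i u_i\w_i\w_i\tr$ directly via a sub-exponential inequality). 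Once these two estimates are in hand, the three centred cross terms and the eigenvalue bound follow from standard sub-Gaussian tail inequalities, and assembling them gives the claim.
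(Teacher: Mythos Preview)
Your proposal is correct and follows essentially the same route as the paper: the paper invokes Lemma~\ref{lem:chenLSbound} with $\hat\gamma=\Zt\tr\y$, $\widehat{\boldSigma}=\Zt\tr\Zt$, expands $\hat\gamma-\widehat{\boldSigma}\boldbeta$ into the same four terms $\Xt\tr\epsilon$, $\Wt\tr\epsilon$, $\Xt\tr\Wt\boldbeta$, $\Wt\tr\Wt\boldbeta$ (absorbing $U$ into $\Wt$ via Remark~\ref{rem:pi}), bounds each with Lemma~\ref{lem:Chen1}, splits off the mean of $\Wt\tr\Wt\boldbeta$ to isolate the $\samp$-independent bias, and controls $\lambda_{\min}(\Zt\tr\Zt)$ through Lemma~\ref{lem:inverseCov} exactly as you outline. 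The only cosmetic difference is that you keep the selector $U$ explicit and centre at $\pi\Sigma_w$ rather than $\sigma_w^2\It_\dims$, which is if anything slightly cleaner.
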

%Theorem \ref{thm:corruptedls} characterises the behaviour of the OLS estimator in the corrupted observation model.
\begin{rem}[No corruptions case]
Notice for a fixed $\sigma_w$, taking $\lim_{\pi\rightarrow 0}$ or for a fixed $\pi$ taking $\lim_{\sigma_w\rightarrow 0}$  (i.e. there are no corruptions) %Eqs. (\ref{eq:errVar2}-\ref{eq:errBias}) $\rightarrow 0$ and 
the above error reduces to the least squares result (see for example \cite{Chen:2012vt}).
%$$
%\nrm{\estbeta_{\OLS} - \boldbeta} \lesssim \frac{ \sigma_\epsilon \sigma_x  }{\lambda_{\min}(\Sigma_x)} \sqrt{\frac{\dims\log \dims }{\samp}}  .
%$$
\end{rem}

\begin{rem}[Variance and Bias]
The first three terms in \eqref{eq:thm4} scale with $\sqrt{1/\samp}$ so as $\samp\rightarrow\infty$, these terms tend towards 0. The last term does not depend on $\sqrt{1/\samp}$ and so for some non-zero $\pi$ the least squares estimate will incur some bias depending on the fraction and magnitude of corruptions.
\end{rem}

We are now ready to state our theorem characterising the mean squared error of the \iws estimator.
\begin{thm}[Influence sampling in the corrupted model] \label{thm:corrupt}
%For $\samp\gtrsim \max\cb{\frac{\sigma_x^2\sigma_w^2}{\lambda_{\min}(\Sigma_{\Theta x})},1}\dims\log \dims$ we have
For $\samp\gtrsim \frac{\sigma_x^2\sigma_w^2}{\lambda_{\min}(\Sigma_{\Theta x})}\dims\log \dims$ we have
\begin{align*}
\nrm{\estbetaSS - \boldbeta}  \lesssim &  \br{ \br{\sigma_\epsilon \sigma_x    
+  \frac{\pi \sigma_\epsilon}{(\sigma_w+1)} 
+  \pi \nrm{\boldbeta} }\sqrt{\frac{\dims \log \dims}{\samp_{subs}}}
+ \pi \sqrt{\dims} \nrm{\boldbeta}
} 
. \frac{1}{\lambda}
 \end{align*}
where $0 < \lambda \leq \lambda_{\min}(\Sigma_{\Theta x})$ and $\Sigma_{\Theta x}$ is the covariance of the influence weighted subsampled data.
\end{thm}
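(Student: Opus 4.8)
The plan is to mirror the proof of Theorem~\ref{thm:corruptedls}, but carried out on the influence-weighted subsample rather than the full data. Writing $\tilde{\Xt},\tilde{\Wt},\tilde{\epsilon},\tilde{U},\tilde{\y}$ for the rows of $\Xt,\Wt,\epsilon,U,\y$ selected by the sampling step (indexed by the sampled set $S$ with $|S|=\subsamp$), the \our estimator is $\estbetaSS=(\tilde{\Zt}\tr\tilde{\Zt})^{-1}\tilde{\Zt}\tr\tilde{\y}$ with $\tilde{\Zt}=\tilde{\Xt}+\tilde{U}\tilde{\Wt}$. Substituting $\tilde{\y}=\tilde{\Xt}\boldbeta+\tilde{\epsilon}$ and $\tilde{\Xt}=\tilde{\Zt}-\tilde{U}\tilde{\Wt}$ yields the decomposition
\begin{align*}
\estbetaSS - \boldbeta = -(\tilde{\Zt}\tr\tilde{\Zt})^{-1}\tilde{\Zt}\tr\tilde{U}\tilde{\Wt}\boldbeta + (\tilde{\Zt}\tr\tilde{\Zt})^{-1}\tilde{\Zt}\tr\tilde{\epsilon} ,
\end{align*}
isolating a bias term (first) and a variance term (second). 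Expanding $\tilde{\Zt}\tr=\tilde{\Xt}\tr+\tilde{\Wt}\tr\tilde{U}$ and using $\tilde{U}^2=\tilde{U}$ splits these into four pieces: $\tilde{\Xt}\tr\tilde{U}\tilde{\Wt}\boldbeta$ and $\tilde{\Wt}\tr\tilde{U}\tilde{\Wt}\boldbeta$ from the bias, and $\tilde{\Xt}\tr\tilde{\epsilon}$ and $\tilde{\Wt}\tr\tilde{U}\tilde{\epsilon}$ from the variance. I would then bound $\nrm{\estbetaSS-\boldbeta}$ by $\nrm{(\tilde{\Zt}\tr\tilde{\Zt})^{-1}}$ times the sum of the norms of these four pieces.

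First I would control $\nrm{(\tilde{\Zt}\tr\tilde{\Zt})^{-1}}$ by lower-bounding $\lambda_{\min}(\tilde{\Zt}\tr\tilde{\Zt})$, which concentrates around $\subsamp\,\lambda_{\min}(\Sigma_{\Theta x})$; this is exactly where the hypothesis $\samp\gtrsim\frac{\sigma_x^2\sigma_w^2}{\lambda_{\min}(\Sigma_{\Theta x})}\dims\log\dims$ enters, guaranteeing invertibility and $\nrm{(\tilde{\Zt}\tr\tilde{\Zt})^{-1}}\lesssim 1/(\subsamp\lambda)$ w.h.p. The two ``clean'' pieces $\tilde{\Xt}\tr\tilde{\epsilon}$ and $\tilde{\Xt}\tr\tilde{U}\tilde{\Wt}\boldbeta$ are handled by the same sub-Gaussian concentration estimates used for Theorem~\ref{thm:corruptedls}, but applied to $\subsamp$ rows, producing the $\sigma_\epsilon\sigma_x$ and $\pi\nrm{\boldbeta}$ contributions scaled by $\sqrt{\dims\log\dims/\subsamp}$ rather than $\sqrt{\dims\log\dims/\samp}$. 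The replacement of $\samp$ by $\subsamp$ accounts entirely for the change in variance scaling.

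The crux is the two corruption-dominated pieces $\tilde{\Wt}\tr\tilde{U}\tilde{\Wt}\boldbeta$ and $\tilde{\Wt}\tr\tilde{U}\tilde{\epsilon}$, whose $\sigma_w$-dependence must be shown to collapse. The mechanism is that a corrupted row ($u_i=1$) has OLS residual dominated by the corruption, so $e_i^2\asymp\sigma_w^2\nrm{\boldbeta}^2$ and hence $d_i=e_i^2 l_i/(1-l_i)^2$ carries a factor $\asymp\sigma_w^2$, making the inclusion weight $p_i\propto 1/d_i$ carry a factor $\asymp 1/\sigma_w^2$. When $\tilde{\Wt}\tr\tilde{U}\tilde{\Wt}=\sum_{i\in S}u_i\wt_i\wt_i\tr$ is assembled over $S$, each corrupted contribution of magnitude $\asymp\sigma_w^2$ is effectively multiplied by an inclusion probability carrying $1/\sigma_w^2$, so the $\sigma_w^2$ cancels and only the proportion $\pi$ survives. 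Concretely, I would show each corrupted piece reproduces the corresponding OLS bound of Theorem~\ref{thm:corruptedls} but with an extra $1/\sigma_w^2$ (respectively $1/\sigma_w$ for the cross term): the OLS bias $\pi\sigma_w^2\sqrt{\dims}\nrm{\boldbeta}$ becomes $\pi\sqrt{\dims}\nrm{\boldbeta}$, and the OLS factor $\pi\sigma_\epsilon\sigma_w$ becomes $\frac{\pi\sigma_\epsilon}{\sigma_w+1}$, where the ``$+1$'' regularizes the small-$\sigma_w$ regime in which corrupted and clean residuals become comparable. To make this rigorous I would invoke the finite-sample characterizations of $l_i$ and $e_i$ for corrupted versus clean rows from \S\ref{sec:supp_lev}, compute the normalizer $\sum_j 1/d_j$ (dominated by the clean points), and bound the expected corrupted mass in $S$ together with a concentration inequality for the resulting bounded random sum.

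The main obstacle I anticipate is the data-dependence of the sampling distribution: the probabilities $p_i\propto 1/d_i$ are themselves functions of $\estbeta_{\OLS}$, $l_i$ and $e_i$, which depend on the entire corrupted dataset, so the sampled rows are \emph{not} independent draws under fixed probabilities and the four pieces are not sums of independent mean-zero terms. Decoupling this --- by conditioning on a high-probability event on which the $d_i$ (hence the $p_i$) are uniformly close to deterministic surrogates, and only then applying sub-Gaussian and matrix concentration to the subsample --- is the delicate step, and it is what forces both the sample-complexity condition and the w.h.p.\ qualifier. The secondary difficulty is controlling $\Sigma_{\Theta x}$ itself: one must show that the influence-reweighted design covariance stays close to the clean $\Sigma_x$ so that $\lambda_{\min}(\Sigma_{\Theta x})$ is bounded away from zero, which is precisely what justifies the improved condition involving $\lambda_{\min}(\Sigma_{\Theta x})$ in place of $\lambda_{\min}(\Sigma_x)+\pi\lambda_{\min}(\Sigma_w)$.
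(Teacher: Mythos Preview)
Your proposal is correct and follows essentially the same route as the paper: the same four-term decomposition, the same $\lambda_{\min}$ control via concentration, and the same cancellation mechanism whereby the $\sigma_w^2$ in the corrupted-point influence bound (Lemma~\ref{prop:infl2}) divides out against the $\sigma_w^2$ appearing in the corrupted-term norms. The paper formalizes the sampling via a diagonal reweighting $\Theta=\sqrt{\samp/\subsamp}\,\St\Dt$, plugs the deterministic bounds of Lemmas~\ref{prop:infl1} and~\ref{prop:infl2} directly into $\Dt$ (which is exactly your ``condition on a high-probability event and replace $d_i$ by surrogates'' step), and then extracts the clean constants by taking $\lim_{E\to\infty}$ of the resulting ratio---a shortcut you should anticipate when simplifying the bound into its stated form.
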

\begin{rem}
Theorem \ref{thm:corrupt} states that the \iws estimator removes the proportional dependance of the error on $\sigma_w$ so the  additional variance terms scale as $O(\pi/\sigma_w \cdot \sqrt{\dims/\samp_{subs}})$ and $O(\pi\sqrt{\dims/\samp_{subs}})$. The relative contribution of the bias term is $\pi\sqrt{\dims}\nrm{\boldbeta}$ compared with $\pi\sigma_w^2\sqrt{\dims}\nrm{\boldbeta}$ for the OLS or non-influence-based subsampling methods.
\end{rem}

\paragraph{Comparison with fully corrupted setting.}
We note that the bound in Theorem \ref{thm:corruptedls} is similar to the bound in \cite{Chen:2013te} for an estimator where all data points are corrupted (i.e. $\pi=1$) and where incomplete knowledge of the covariance matrix of the corruptions, $\Sigma_w$ is used. The additional bias in the estimator is proportional to the uncertainty in the estimate of $\Sigma_w$ -- in Theorem \ref{thm:corruptedls} this corresponds to $\sigma_w^2$. Unbiased estimation is possible if $\Sigma_w$ is known. See the Supplementary Information for further discussion, where the relevant results from \cite{Chen:2013te} are provided in Section~\ref{sec:disc} as Lemma \ref{lem:chen2}.

%\input{analysis}
%%%%%%%%%%%%%%%%%%%%%%%%%%%%%%%%%%%%%%%%%%%%%%%%%%%%%%%
\begin{figure}[!tp]
\vspace{-20pt}
\begin{centering}
\subfloat[Influence (1.1)]{
\begin{minipage}{0.25\columnwidth}
\includegraphics[width=0.99\textwidth,keepaspectratio=false]{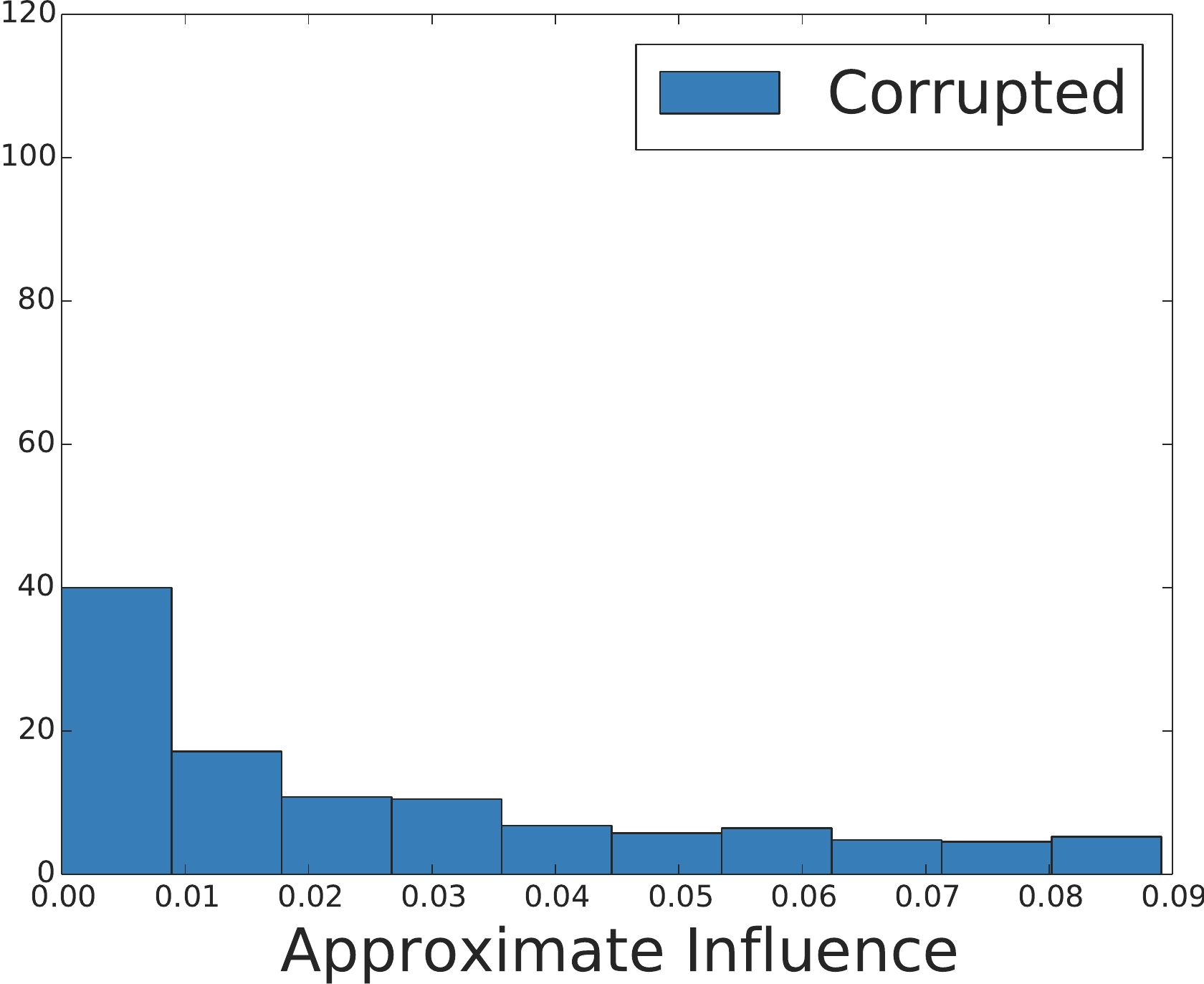}
\end{minipage}
\begin{minipage}{0.25\columnwidth}
\includegraphics[width=0.99\columnwidth,keepaspectratio=false]{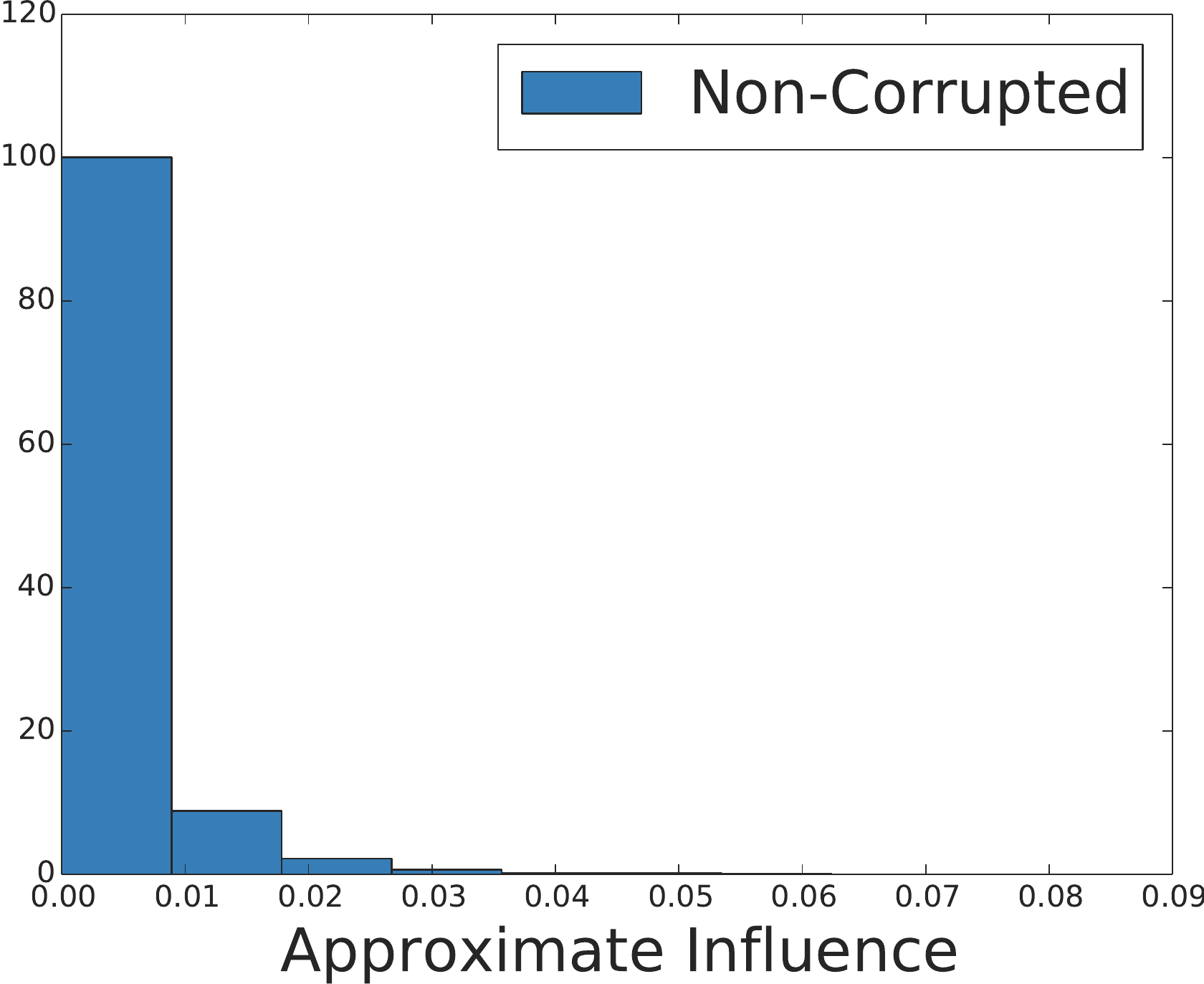}
\end{minipage}
}
\subfloat[Leverage (0.1)]{
\begin{minipage}{0.25\columnwidth}
\includegraphics[width=0.99\textwidth,keepaspectratio=false]{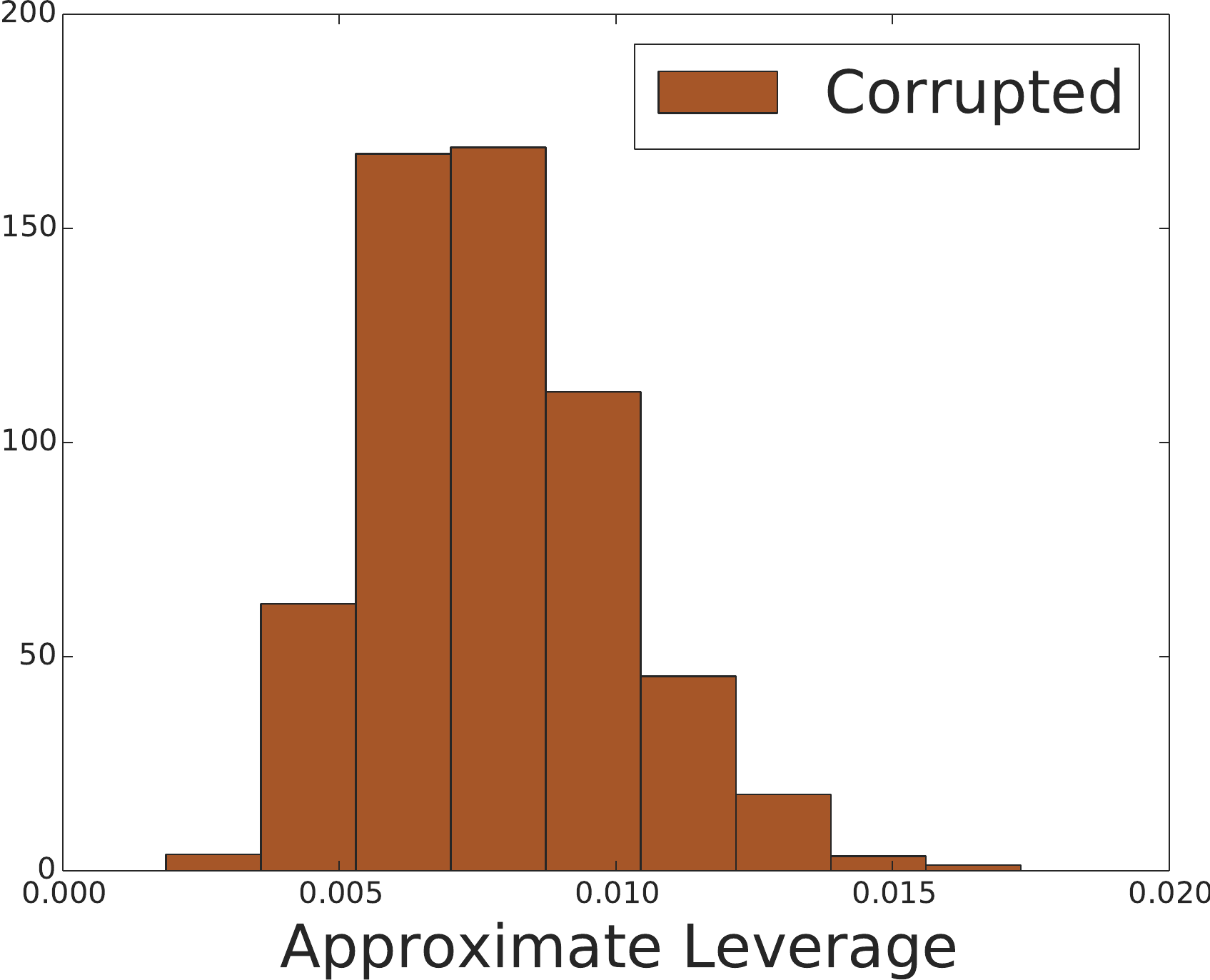}
\end{minipage}
\begin{minipage}{0.25\columnwidth}
\includegraphics[width=0.99\columnwidth,keepaspectratio=false]{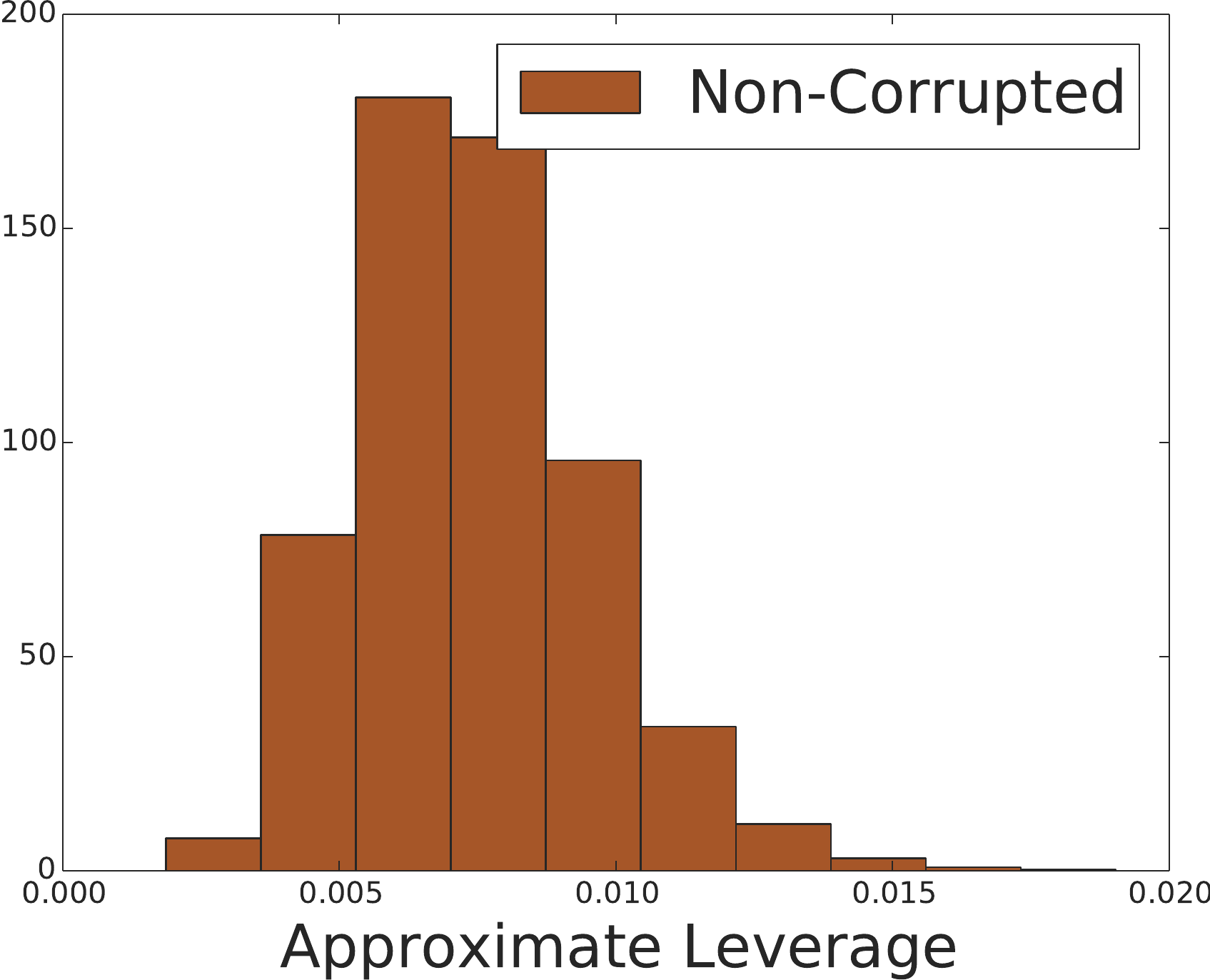}
\end{minipage}
}
\vspace{-10pt}
\caption{Comparison of the distribution of the influence and leverage for corrupted and non-corrupted points. To quantify the difference in these distributions, the $\ell_1$ distance between the histograms is shown in brackets. \label{fig:r1}}
\end{centering}
\vspace{-10pt}
\end{figure}
%%%%%%%%%%%%%%%%%%%%%%%%%%%%%%%%%%%%%%%%%%%%%%%%%%%%%%%
\section{Experimental results} \label{sec:results}
%%%%%%%%%%%%%%%%%%%%%%%%%%%%%%%%%%%%%%%%%%%%%%%%%%%%%%%
%%%%%%%%%%%%%%%%%%%%%%%%%%%%%%%%%%%%%%%%%%%%%%%%%%%%%%%
\vspace{-5pt}
We compare \our against the methods \srht
\cite{Mahoney:2011te}, \uluru \cite{Dhillon:2013wz}. These competing
methods represent current state-of-the-art in fast randomized least
squares. Since \srht is equivalent to \lev
\cite{Drineas:2006iw} the comparison will highlight the difference
between importance sampling according to the two difference types of
regression diagnostic in the corrupted model.
%Like \our, in the
%standard regression setting \uluru has been shown to obtain the
%optimal least squares error rate, therefore this comparison will
%highlight the difference between theoretical and empirical
%performance.
Similar to \our, \uluru is also a two-step procedure where the
first is equivalent to \srht. The second reduces bias by subtracting
the result of regressing onto the residual. The experiments with the
corrupted data model will demonstrate the difference in robustness of
\our and \uluru to corruptions in the observations. Note that we do not compare with SGD. Although SGD has excellent properties for large-scale linear regression, we are not aware of a convex loss function which is robust to the corruption model we propose.
%\vspace{-0.4cm}
%\paragraph{Simulation setting}

We  assess the empirical
performance of our method compared with standard and state-of-the-art
randomized approaches to linear regression in several difference scenarios. We evaluate these methods
on the basis of the estimation error:
the $\ell_2$ norm of the difference between the true weights and the
learned weights, $\nrm{\estbeta - \boldbeta}$. We present additional results for root mean squared prediction error (RMSE) on the test set in \S\ref{sec:addres}.

%\paragraph{Estimating influence and residual}
%The quality of the estimate of the influence and the residual of the full regression problem depends on the number of samples used for estimating these measures. Our results on the data sets with corrupted measurements and real world data sets show, that even with few samples we can estimate the residual and influence with a small enough error to give good performance. Nonetheless on gaussian data, where the distribution of the leverage is uniform, the number of samples used to approximate the influence and residual are crucial. \gabriel{Add distibution of influence and leverage here}.

%\paragraph{Non-corrupted data.}
%We first
%compare performance in three different leverage regimes taken from \cite{Ma:2013tx}: uniform leverage scores (multivariate Gaussian), slightly non-uniform (multivariate-t with 3 degrees of freedom, T-3), highly non-uniform (multivariate-t with 1 degree of freedom, T-1). Full details of the data simulating process can be found in \cite{Ma:2013tx}.
%%\begin{itemize}
%%\item Gaussian - uniform leverage
%%\item Student-t with 3 d.o.f. (T-3)
%%\item Student-t with 1 d.o.f. (T-1)
%%\end{itemize}
%
For all the experiments on simulated data sets we use $\samp_{train}=100,000$,  $\samp_{test}=1000$, $\dims=500$. For datasets of this size, computing exact leverage is impractical and so we report on results for \our in \S\ref{sec:addres}. For \ourapprox and \irh we used the same number of sub-samples to approximate the leverage scores and residuals as for solving the regression. For \ourapprox we set $r_2=\dims/2$ (see Eq. \eqref{eq:proj2}). The results are averaged over 100 runs.

\vspace{-10pt}
\paragraph{Corrupted data.}
We investigate the corrupted data noise model described in Eqs. \eqref{eq:lin-mod}-\eqref{eq:obs-mod}. We show three scenarios where $\pi=\{ 0.05, 0.1, 0.3 \}$. $\Xt$ and $\Wt$ were sampled from independent, zero-mean Gaussians with standard deviation $\sigma_x = 1$ and $\sigma_w = 0.4$ respectively. The true regression coefficients, $\boldbeta$ were sampled from a standard Gaussian. We added i.i.d. zero-mean Gaussian noise with standard deviation $\sigma_e = 0.1$.

Figure~\ref{fig:r1} shows the difference in distribution of influence and leverage between non-corrupted points (top) and corrupted points (bottom) for a dataset with 30\% corrupted points. The distribution of leverage is very similar between the corrupted and non-corrupted points, as quantified by the $\ell_1$ difference. This suggests that leverage alone cannot be used to identify corrupted points. On the other hand, although there are some corrupted points with small influence, they typically have a much larger influence than non-corrupted points. We give a theoretical explanation of this phenomenon in \S\ref{sec:supp_lev} (remarks \ref{rem:comp_lev} and \ref{rem:comp_inf}).
%%% JB: GAbriel, can you calculate the Jensen-Shannon distance or the
%%% mutual information between the corrupted and the non-corrupted
%%% histograms? That would give a much better impression how effective
%%% the two scores are in controlling data corruption.

%\begin{figure}[htp]
%\vspace{-20pt}
%\begin{centering}
%\subfloat[Influence (1.1)]{
%\begin{minipage}{0.25\columnwidth}
%\includegraphics[width=0.99\textwidth,keepaspectratio=false]{hist_corrupted-n_corruptions-1000_approximate-influence_corrupted-points.pdf}
%\end{minipage}
%\begin{minipage}{0.25\columnwidth}
%\includegraphics[width=0.99\columnwidth,keepaspectratio=false]{hist_corrupted-n_corruptions-1000_approximate-influence_non-corrupted-points.pdf}
%\end{minipage}
%}
%\subfloat[Leverage (0.1)]{
%\begin{minipage}{0.25\columnwidth}
%\includegraphics[width=0.99\textwidth,keepaspectratio=false]{hist_corrupted-n_corruptions-3000_approximate-leverage_corrupted-points.pdf}
%\end{minipage}
%\begin{minipage}{0.25\columnwidth}
%\includegraphics[width=0.99\columnwidth,keepaspectratio=false]{hist_corrupted-n_corruptions-3000_approximate-leverage_non-corrupted-points.pdf}
%\end{minipage}
%}
%\vspace{-10pt}
%\caption{Comparison of the distribution of the influence and leverage for corrupted and non-corrupted points. To quantify the difference in these distributions, the $\ell_1$ distance between the histograms is shown in brackets. \label{fig:r1}}
%\end{centering}
%\vspace{-10pt}
%\end{figure}

Figure \ref{fig:Corrupted}\subref{fig:c1} and \subref{fig:c2} shows the estimation error and the mean squared prediction error for different subsample sizes. In this setting, computing \our is impractical (due to the exact leverage computation) so we omit the results but we notice that \ourapprox and \irh quickly improve over the full least squares solution and the other randomized approximations in all simulation settings. In all cases, influence based methods also achieve lower-variance estimates.  %For $>3000$ subsamples, the bias correction step of \uluru causes it to diverge from OLS and ultimately perform worse than uniform.

%\footnotetext{The Jenson-Shannon divergence is a smoothed and symmetrised version of the Kullback-Leibler divergence.}

For $30\%$ corruptions for a small number of samples \uluru outperforms the other subsampling methods. However, as the  number of samples increases, influence based methods start to outperform OLS.
%For $30\%$ corruptions, the approximate influence algorithms achieve almost exactly the same performance as \our. Even for a small number of samples all of the influence methods far outperform \OLS. As the proportion of corruptions increases further, the rate at which the approximate influence algorithms approach \our slows and the relative difference between \our and OLS decreases slightly.
 Here, \uluru converges quickly to the OLS solution but is not able to overcome the bias introduced by the corrupted datapoints. Results for $10\%$ corruptions are shown in Figs. \ref{fig:Corrupted2_est} and \ref{fig:Corrupted2_rmse} and we provide results on smaller corrupted datasets (to show the performance of \our) as well as non-corrupted data simulated according to \cite{Ma:2013tx} in \S\ref{sec:addres}.  %\S\ref{sec:addres}.

%We also ran comparisons with \uluru but found it performed significantly worse than uniform sampling in all scenarios. These figures are presented in \S\ref{sec:addres}.

\begin{figure*}[!tp]
\vspace{-20pt}
\begin{centering}
\subfloat[{5\% Corruptions}]{
\begin{minipage}{0.33\textwidth}
    \includegraphics[width=0.98\textwidth, keepaspectratio=true]{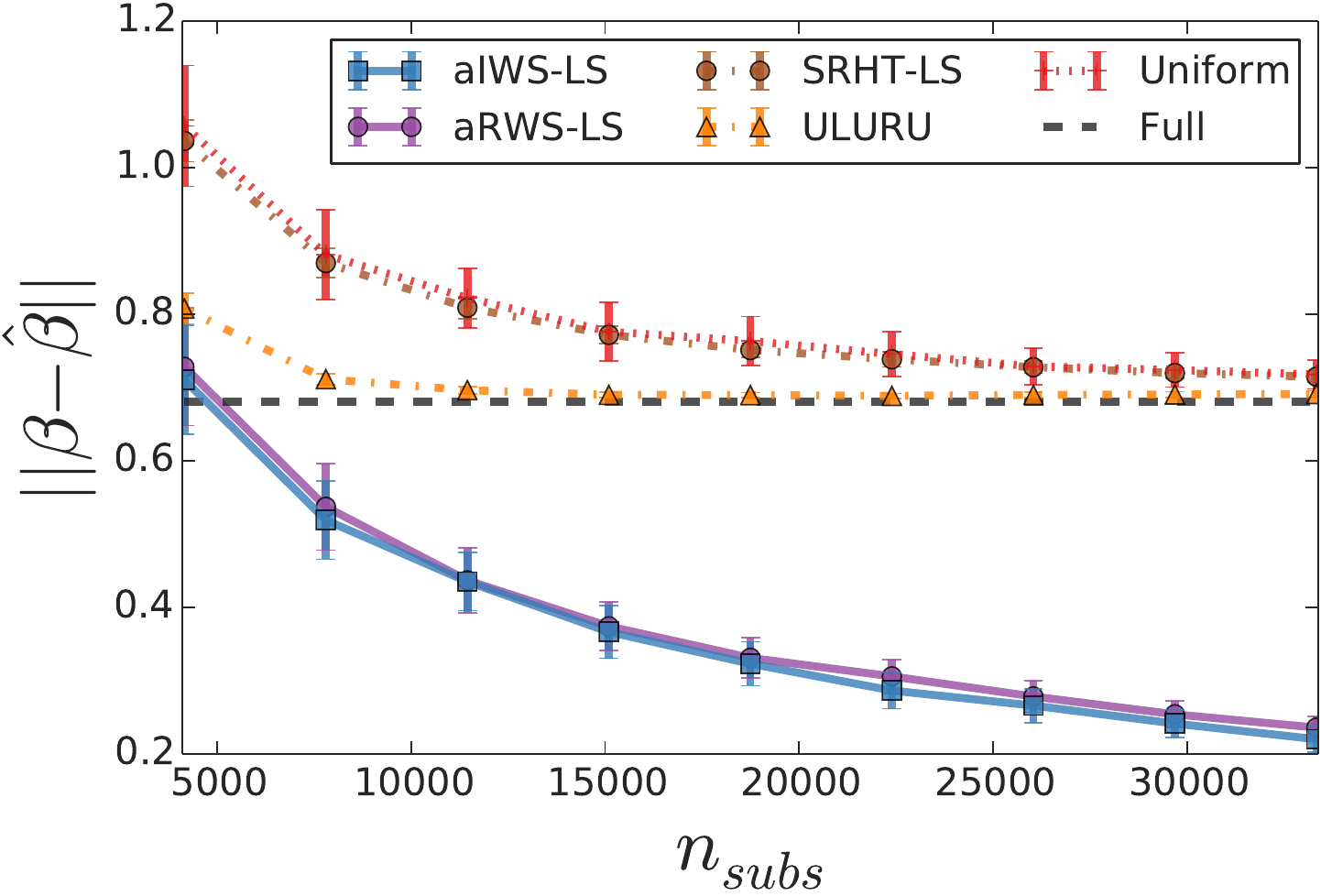}

    \label{fig:c1}
\end{minipage}
}
%\subfloat[{10\% Corruptions}]{
%\begin{minipage}{0.33\textwidth}
%    \includegraphics[width=0.95\textwidth, keepaspectratio=true]{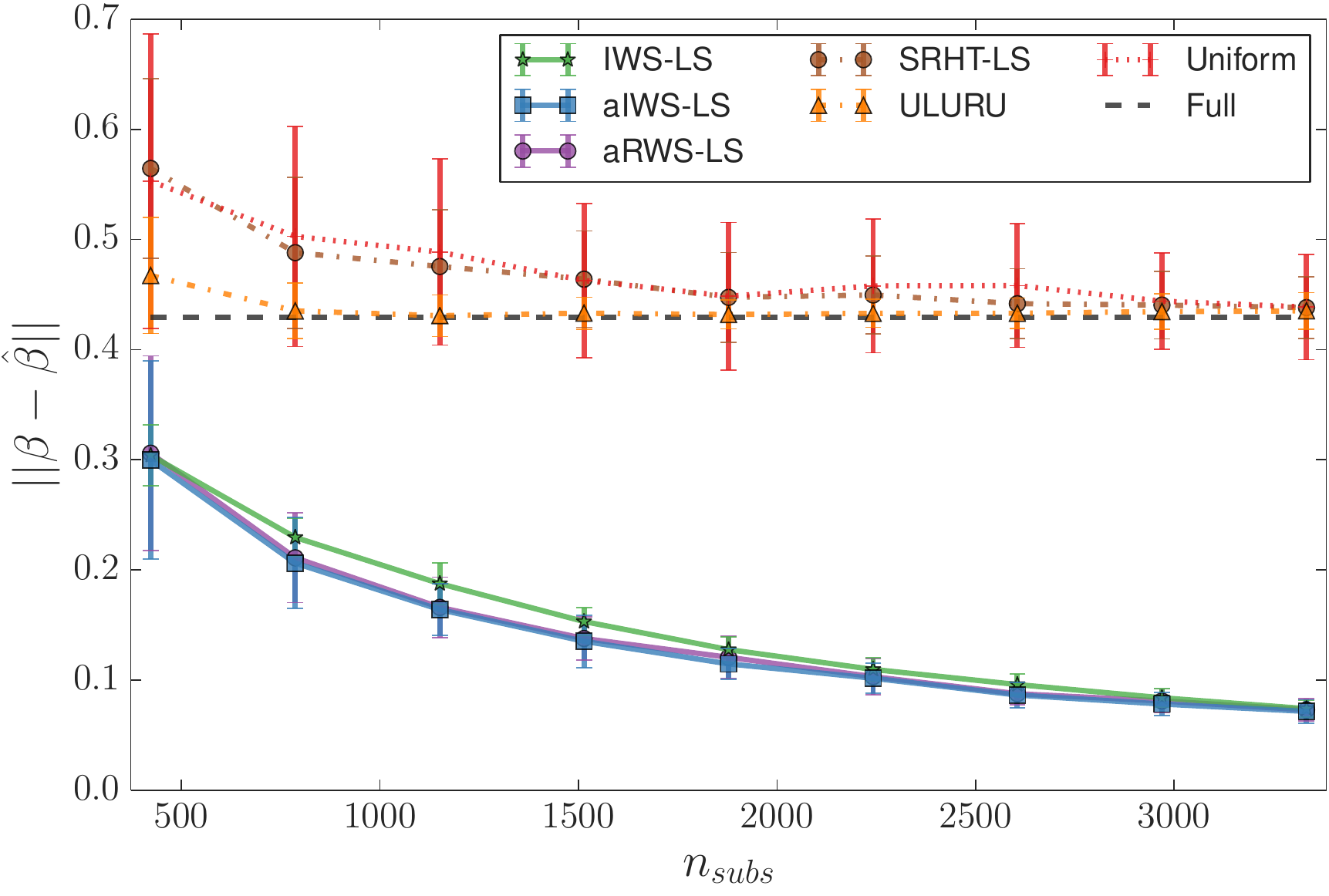}
%
%     \includegraphics[width=0.95\textwidth, keepaspectratio=true]{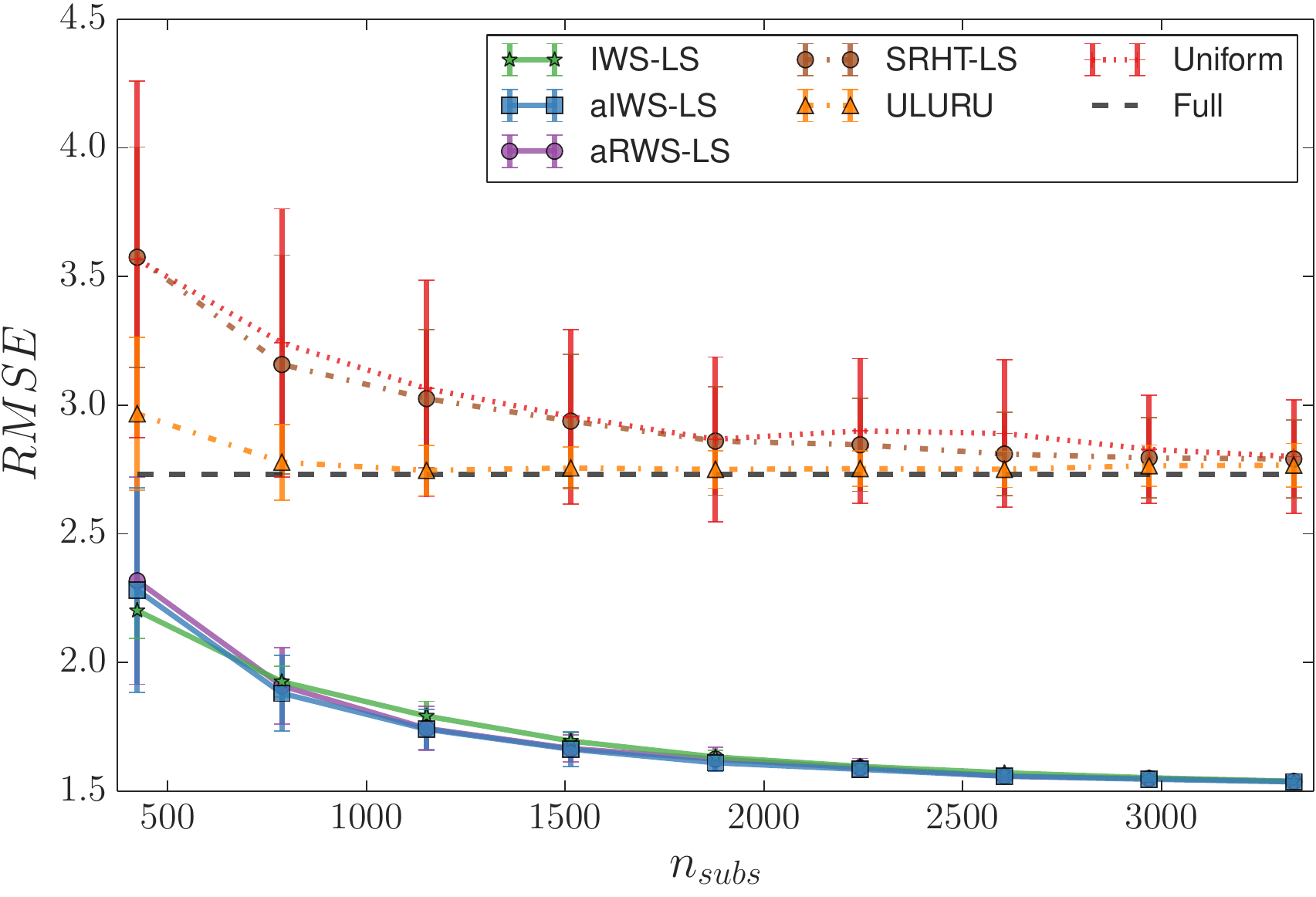}
%    \label{fig:c2}
%\end{minipage}
%}
\subfloat[{30\% Corruptions}]{
\begin{minipage}{0.33\textwidth}
    \includegraphics[width=0.98\textwidth, keepaspectratio=true]{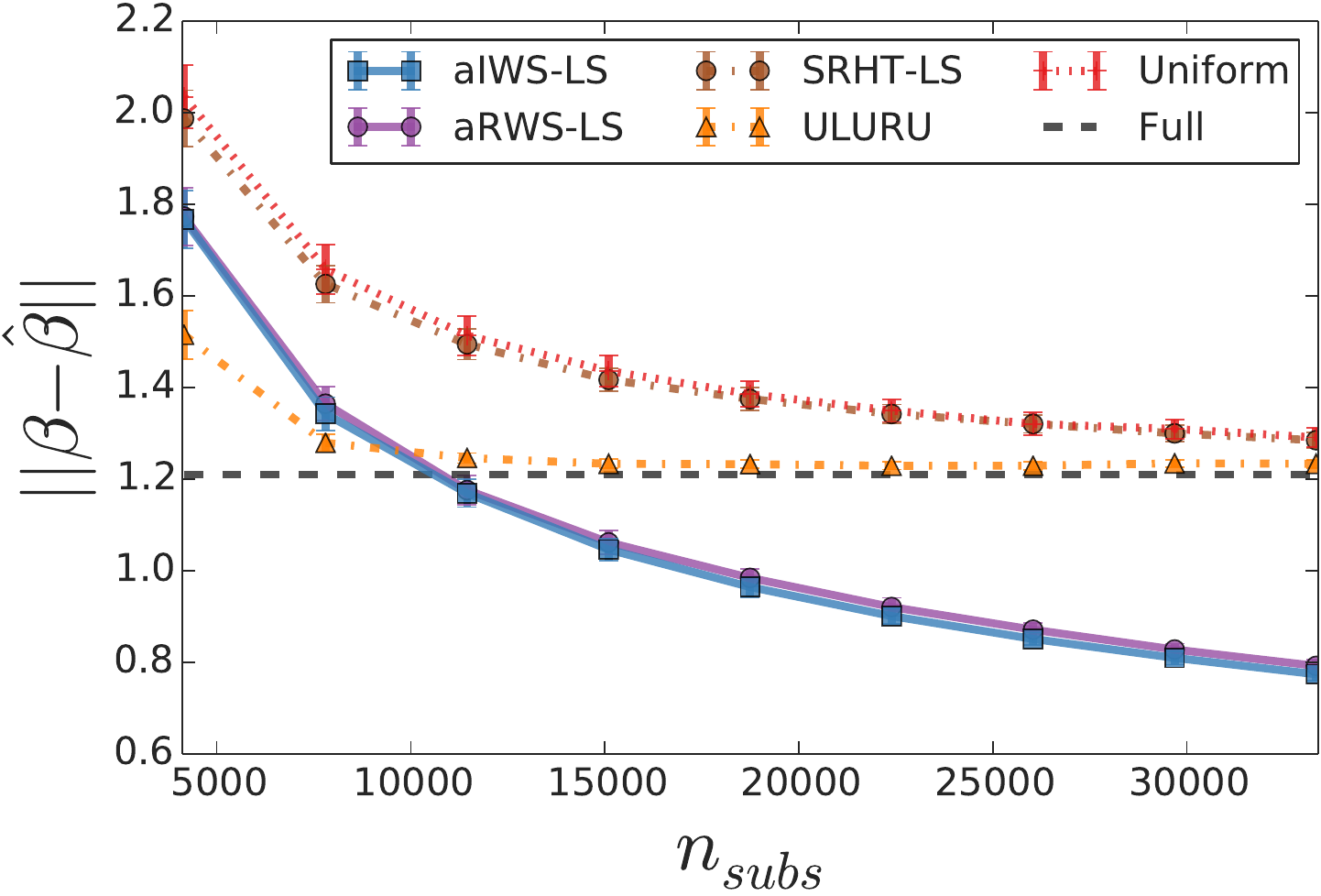}

    \label{fig:c2}
\end{minipage}
}
\subfloat[{Airline delay}]{
\begin{minipage}{0.33\textwidth}
    {\includegraphics[width=0.98\columnwidth]{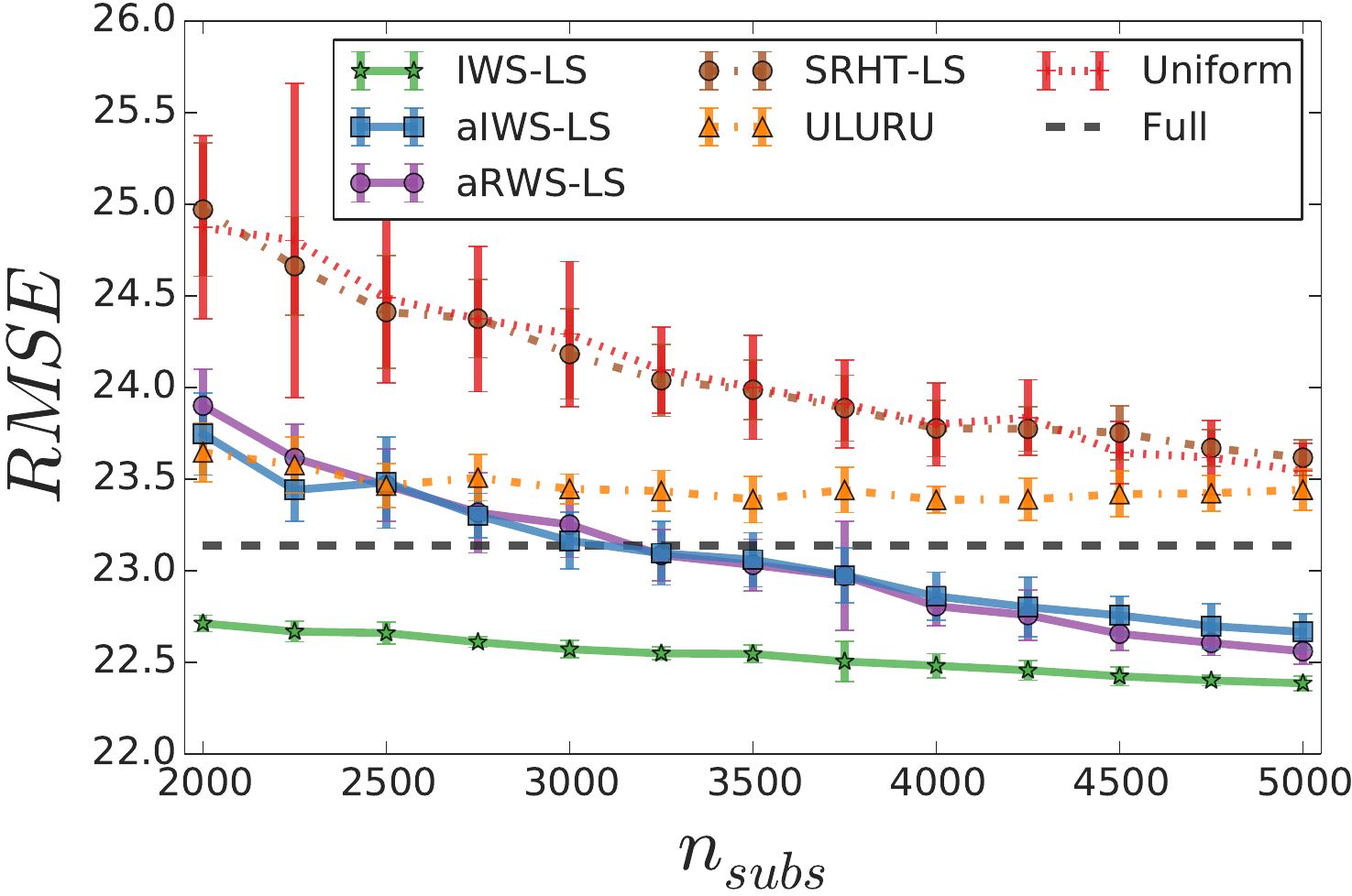} \label{fig:real1}}

\end{minipage}
}
\vspace{-5pt}
\caption{Comparison of mean estimation error and standard deviation on two corrupted simulated datasets and the airline delay dataset. \label{fig:Corrupted}}
\vspace{-5pt}
\end{centering}
\end{figure*}

%\begin{figure*}[htp]
%\vspace{-20pt}
%\begin{centering}
%\subfloat[{1\% Corruptions}]{\includegraphics[width=0.33\textwidth]{figs/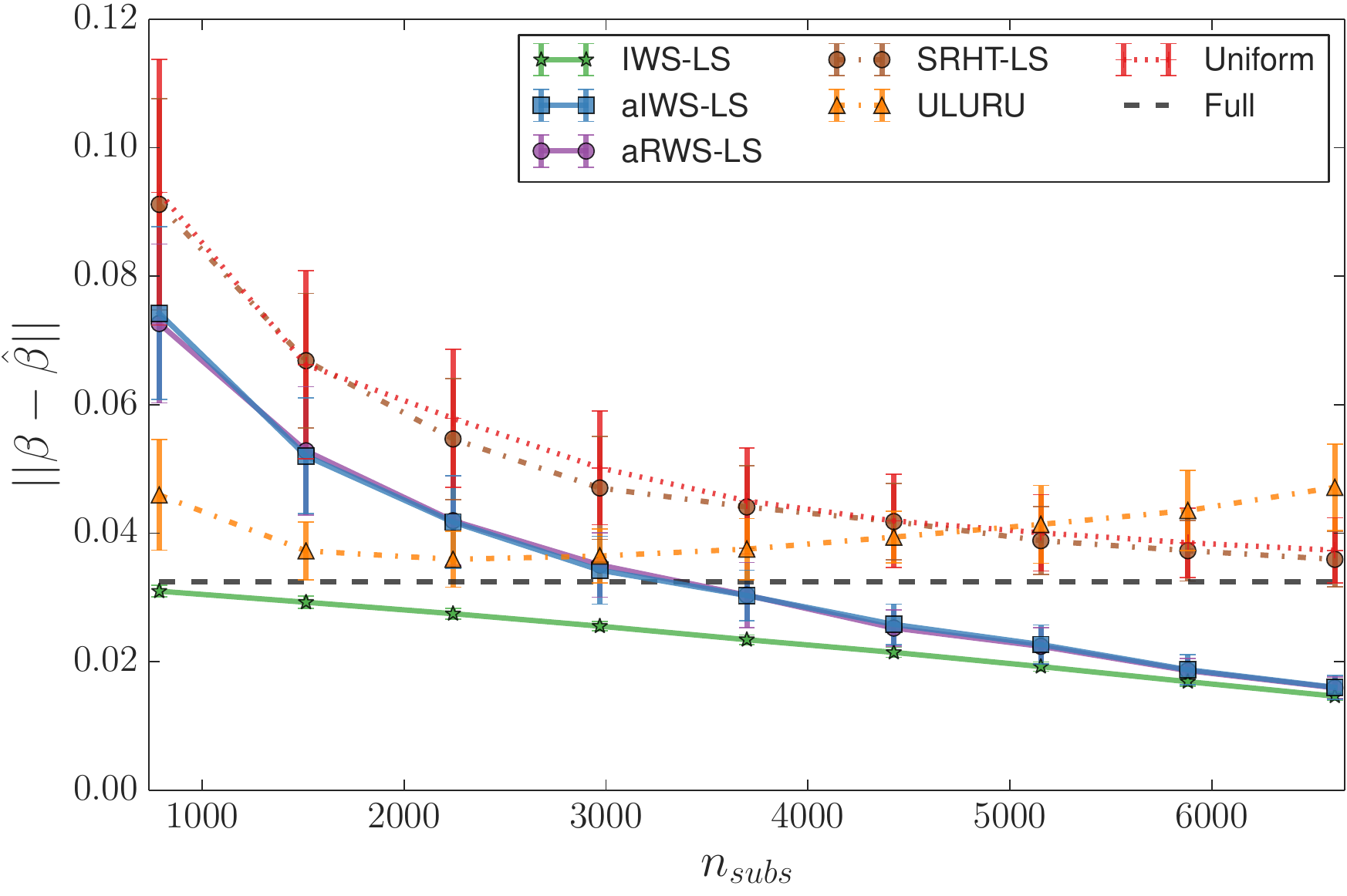} \label{fig:c1}}
%\subfloat[{10\% Corruptions}]{\includegraphics[width=0.33\textwidth]{figs/paper_figure_corrupted-n10000-p50-corruption-n_corrupt1000_norm_diff} \label{fig:c2}}
%\subfloat[{30\% Corruptions}]{\includegraphics[width=0.33\textwidth]{figs/paper_figure_corrupted-n10000-p50-corruption-n_corrupt3000_norm_diff} \label{fig:c3}}
%\vspace{-10pt}
%\subfloat[{1\% Corruptions}]{\includegraphics[width=0.33\textwidth]{figs/paper_figure_corrupted-n10000-p50-corruption-n_corrupt500_test_error} \label{fig:c4}}
%\subfloat[{10\% Corruptions}]{\includegraphics[width=0.33\textwidth]{figs/paper_figure_corrupted-n10000-p50-corruption-n_corrupt1000_test_error} \label{fig:c5}}
%\subfloat[{30\% Corruptions}]{\includegraphics[width=0.33\textwidth]{figs/paper_figure_corrupted-n10000-p50-corruption-n_corrupt3000_test_error} \label{fig:c6}}
%\vspace{-5pt}
%\caption{Comparison of mean prediction error and standard deviation on a selection of corrupted datasets. \label{fig:Corrupted}}
%\end{centering}
%\end{figure*}
%\vspace{-0.5cm}
%\paragraph{Further experiments.}

\paragraph{Airline delay dataset}
%We compare the methods on two real world datasets.
%\paragraph{Airline delay dataset.}
\vspace{-5pt}

%\begin{figure}[htp]
%\vspace{-5pt}
%\begin{centering}
%%\subfloat[Airline delay prediction test set error]
%{\includegraphics[width=0.98\columnwidth]{figs/airline-delay-prediction_test_error} \label{fig:real1}}
%%\includegraphics[width=1\columnwidth, keepaspectratio=true]{../figures/mahoney_student_1_norm_diff}   % LAYOUT PLACEHOLDER!
%%\subfloat[Release year prediction test set error]{\includegraphics[width=0.5\columnwidth]{figs/music-year-prediction_test_error} \label{fig:real2}}
%\vspace{-10pt}
%\caption{Airline delay prediction. Influence weighted subsampling outperforms both uniform sampling and \srht. The RMSE is the error in predicted delay time in minutes. %For $\geq 3000$ samples, \iws outperforms OLS. The result suggests that our corrupted observation model is a good model for this dataset. Furthermore, \uluru  diverges from the OLS solution as the number of samples increases suggesting that its ``bias correction" step is not robust to corruptions.
% }
%\label{fig:real}
%\end{centering}
%\end{figure}
%\vspace{-5pt}
The dataset consists of details of all commercial flights in the USA over 20
years.\footnote{Dataset along with visualisations available from\\ \url{http://stat-computing.org/dataexpo/2009/}} Selecting the first $\samp_{train}=13,000$ US Airways flights from January 2000 (corresponding to approximately 1.5 weeks) our goal is to predict the
delay time of the next $\samp_{test}=5,000$ US Airways flights.
The features in this dataset consist of a binary vector representing origin-destination pairs and a real value representing distance ($\dims=170$).

The dataset might be expected to violate the usual i.i.d. sub-Gaussian
design assumption of standard linear regression since the length of
delays are often very different depending on the day. For example,
delays may be longer due to public holidays or on weekends. Of course, such regular events could be accounted for in the modelling step, but some unpredictable outliers such as weather delay may also occur.
Results are presented in Figure \ref{fig:Corrupted}\subref{fig:real1}, the RMSE is the error in predicted delay time in minutes. % shows that \ourapprox and \irh outperforms even full OLS for $\geq 3000$ subsamples. %This suggests that our corrupted observation model assumptions can be fulfilled in real life.
 Since the dataset is smaller, we can run \our to observe the accuracy of \ourapprox and \irh in comparison.
 For more than $3000$ samples, these algorithm outperform OLS and quickly approach \our. The result suggests that the corrupted observation model is a good model for this dataset. Furthermore, \uluru  is unable to achieve the full accuracy of the OLS solution.% suggesting that its ``bias correction" step is not robust to corruptions.

%\paragraph{Release year of a song from audio features.}
%This dataset \footnote{\url{http://archive.ics.uci.edu/ml/datasets/YearPredictionMSD}} is
%a subset of the Million Song Dataset \cite{Bertin-Mahieux2011} which
%provides audio features and metadata for a million contemporary
%popular music tracks. Each instance contains of 90 attributes (12 for
%timbre average, 78 for timbre covariance). The i.i.d. assumption is invalidated since the
%underlying model is a mixture model (music from possibly the same
%artists or producers).

%%% Local Variables:
%%% mode: latex
%%% TeX-master: "randomSamplingUAI"
%%% End:

%%%%%%%%%%%%%%%%%%%%%%%%%%%%%%%%%%%%%%%%%%%%%%%%%%%%%%%
%%%%%%%%%%%%%%%%%%%%%%%%%%%%%%%%%%%%%%%%%%%%%%%%%%%%%%%
%\section{Logistic Regression}
%We can also use Algorithm~\ref{alg:res} for classification
% problems. By setting $\et_i=\bt_i\cdot\beta_{SRHT-LR}\At$ we can
% sample point from $\At$ inversely proportional to $\et$.
%%%%%%%%%%%%%%%%%%%%%%%%%%%%%%%%%%%%%%%%%%%%%%%%%%%%%%%
%%%%%%%%%%%%%%%%%%%%%%%%%%%%%%%%%%%%%%%%%%%%%%%%%%%%%%%
%%%%%%%%%%%%%%%%%%%%%%%%%%%%%%%%%%%%%%%%%%%%%%%%%%%%%%%
%%%%%%%%%%%%%%%%%%%%%%%%%%%%%%%%%%%%%%%%%%%%%%%%%%%%%%%
\section{Conclusions}
%%%%%%%%%%%%%%%%%%%%%%%%%%%%%%%%%%%%%%%%%%%%%%%%%%%%%%%
%%%%%%%%%%%%%%%%%%%%%%%%%%%%%%%%%%%%%%%%%%%%%%%%%%%%%%%
%\vspace{-0.4cm}
%We have demonstrated theoretically and empirically
%that \iws behaves similarly to other randomized least squares
%approximations in the sub-Gaussian random design
%setting.
% We are able to achieve the optimal least squares error at a
% computational cost comparable with other state-of-the-art methods.
We have demonstrated theoretically and empirically under the  generalised corrupted observation model that \iws is able to significantly reduce both the bias and variance compared
with the OLS estimator and other randomized approximations which do
not take influence into account. Importantly our fast approximation,
\irh performs similarly to \our. We find \uluru quickly converges to
the OLS estimate, although it is not able to overcome the bias induced by the corrupted datapoints despite its two-step procedure.  The performance of \our relative to OLS in the airline delay
problem suggests that the corrupted observation model is a more
realistic modelling scenario than the standard sub-Gaussian design
model for some
tasks.
% This conjecture is supported by our improved performance on the real
% world airline delay dataset.  Software for the influence weighted
% subsampling method is available at the following address:
% \url{anonymised}

%\vspace{-0.4cm}
%\paragraph{Future work.}
%Further theoretical insights are necessary to understand whether it is
%possible to obtain an unbiased estimator in the corrupted model
%without knowledge of $\Sigma_w$. The \irh algorithm appears to be more
%robust than exact or approximate \our when the distribution of
%leverage in the data is highly non-uniform. Further analysis and
%experimentation is required to better understand the behaviour of this
%algorithm.

%%% JB: I would not bring this idea here since it only invites readers
%%% to go for this possibly low-hanging fruit.
%Finally, analogous influence measures have been developed for logistic
%regression \cite{Pregibon:1981ev}. We aim to apply similar ideas to
%classification, in particular to active learning.

% \vspace{-0.25cm}

Software is available at {\footnotesize \url{http://people.inf.ethz.ch/kgabriel/software.html}}.
%{\footnotesize \url{http://bit.ly/1kifYBU}}
\paragraph{Acknowledgements.} We thank David Balduzzi for invaluable discussions, suggestions and comments.

%\newpage

\bibliography{randomSamplingNips}
\bibliographystyle{plain}

\newpage
%\onecolumn
\begin{center}
{ \Large{\textbf{Supplementary Information for {\titleName}} }}
\end{center}

\newcounter{si-sec}
\renewcommand{\thesection}{SI.\arabic{si-sec}}

Here we collect supplementary technical details, discussion and empirical results which support the results presented in the main text.

\addtocounter{si-sec}{1}
\section{Software} \label{sec:suppsoftware}
We have made available a software package available for Python which implements
\begin{itemize}
\item \our,
\item \ourapprox and
\item \irh,
\end{itemize}
along with the methods we compare against
\begin{itemize}
\item \srht and
\item \uluru .
\end{itemize}
%As far as we are aware this is the only available implementation for the latter two methods (aside from \texttt{Blendenpik}\footnote{\url{https://code.google.com/p/blendenpik/}} in Matlab which implements \srht) and so should be of independent interest.

The software is available from:
{\footnotesize \url{http://bit.ly/1kifYBU}}

\addtocounter{si-sec}{1}
\section{Approximate Influence Weighted Algorithm} \label{sec:suppalg}

Here we present a detailed description of the approximate \iws (\ourapprox) algorithm. Steps 2, 3 and 4 are required for the approximate leverage computation. Step 3 could be replaced with the QR decomposition.
\begin{algorithm}[ht]
\caption{Approximate influence weighted subsampling (\ourapprox).\label{alg:dsampapprox}}
	\algorithmicrequire\; Data: $\Zt$, $\y$
  \begin{algorithmic}[1]
     % \STATE {\bf \emph{Compute the SRHT: $\Ct = \boldPi_1\cdot \Zt, ~~ \dt =  \boldPi_1\cdot \y$}}
    %\STATE {\bf \emph{Subsample: $\Ct = \St\cdot \At, ~~ \dt = \St\cdot\bt$}}
	\STATE {\bf \emph{Solve $\estbeta_{SRHT} = \arg \min_{\boldbeta} \nrm{\boldPi_1\cdot\y - \boldPi_1\cdot\Zt\boldbeta}^2$}}
	\STATE {\bf \emph{SVD: $(\Ut,\boldSigma,\Vt) = \boldPi_1\cdot\Zt$}} \COMMENT{Compute basis for randomized leverage approximation.}
    \STATE {\bf \emph{$\Rt^{-1} = \Vt\boldSigmai$}}
    \STATE {\bf \emph{$\tilde{\Ut} = \Zt\Rt^{-1} \cdot \boldPi_2$}}
    %\STATE {\bf \emph{Compute influence.}} 
    \FOR{$i=1\ldots\samp$}
               \STATE $\tilde{l}_i = \nrm{\tilde{\Ut}_i}$
           \STATE $\tilde{e}_i = y_i - \z_i\estbeta_{SRHT}$ 
           \STATE $\tilde{d}_i = \tilde{e}_i^2 \tilde{l}_i/(1-\tilde{l}_i)^2$
    \ENDFOR
    \STATE {\bf \emph{Sample rows ($\tilde{\Zt}$, $\tilde{\y}$) of ($\Zt$, $\y$) proportional to $\frac{1}{\tilde{d}_i}$}}
    \STATE {\bf \emph{Solve}} $\estbeta_{aIWS} = \arg\min_{\boldbeta} \nrm{\tilde{\y} - \tilde{\Zt}\boldbeta}^2$
  \end{algorithmic}
  \algorithmicensure\; $\estbeta_{aIWS}$
\end{algorithm}

\addtocounter{si-sec}{1}
\section{Leverage and Influence} \label{sec:supp_lev}
Here we provide detailed derivations of leverage and influence terms as well as the full statement and proofs of finite sample bounds under the sub-Gaussian design and corrupted design models which are abbreviated in the main text as Lemmas \ref{prop:lev1}, \ref{prop:infl1},  \ref{prop:lev2},  and \ref{prop:infl2}.

%%%%%%%%%%%%%%%%%%%%%%%%%%%%%%%%%%%%%%
%%%%%%%%%%%%%%%%%%%%%%%%%%%%%%%%%%%%%%
%%%% DERIVATIONS OF INFLUENCE MEASURES
%%%%%%%%%%%%%%%%%%%%%%%%%%%%%%%%%%%%%%
%%%%%%%%%%%%%%%%%%%%%%%%%%%%%%%%%%%%%%
Here we provide a full derivation of the leave-one-out estimator of $\estbeta$ which appears in less detail in \cite{Belsley:1980}.
\begin{prop}[Derivation of $\estbeta_{-i}$ \label{prop:betai}]
Defining $e_i = \hat{y}_i - y_i$ and $\boldSigma = \Xt\tr\Xt$
\begin{align*}
\estbeta_{-i} &  = \br{\boldSigma - \x_i\tr\x_i}^{-1}\br{\Xt\tr\y - \x_i\tr y_i}
\\
& = \br{\boldSigmai + \frac{\boldSigmai\x_i\x_i\tr\boldSigmai}{1-l_i}}\br{\Xt\tr\y - \x_i\tr y_i}
\\
& = \estbeta - \boldSigmai\x_i\tr\br{ y_i + \frac{\x_i\boldSigmai\Xt\y - \x_i\boldSigmai\x_i\tr y_i}{1-l_i}}
\\
& = \estbeta - \boldSigmai\x_i\tr\br{ y_i + \frac{\hat{y}_i - l_i y_i}{1-l_i}}
\\
& =  \estbeta - \boldSigmai\x_i\tr\br{ y_i + \frac{e_i}{1-l_i} - \frac{y_i(1-l_i)}{1-l_i} }
\\
& = \estbeta - \frac{\boldSigmai\x_i\tr e_i}{1-l_i}
\end{align*}
Where the first equality comes from a straightforward application of the Sherman Morrison formula.
\end{prop}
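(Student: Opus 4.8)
The plan is to treat the first equality as a definition and to drive all the remaining equalities from a single application of the Sherman--Morrison formula followed by purely algebraic bookkeeping. For the first line, I would observe that deleting the $i$-th observation replaces the Gram matrix and the cross term by $\Xt_{-i}\tr\Xt_{-i} = \sum_{j\neq i}\x_j\tr\x_j = \boldSigma - \x_i\tr\x_i$ and $\Xt_{-i}\tr\y_{-i} = \Xt\tr\y - \x_i\tr y_i$ respectively, so the normal equations for the reduced regression immediately give $\estbeta_{-i} = \br{\boldSigma - \x_i\tr\x_i}^{-1}\br{\Xt\tr\y - \x_i\tr y_i}$.

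For the second line, I would apply Sherman--Morrison to the rank-one downdate $\x_i\tr\x_i$, taking both update vectors to be the column vector $\x_i\tr$, which yields $\br{\boldSigma - \x_i\tr\x_i}^{-1} = \boldSigmai + \frac{\boldSigmai\x_i\tr\x_i\boldSigmai}{1 - \x_i\boldSigmai\x_i\tr}$. The key observation is that the scalar in the denominator is exactly the leverage score, since $\x_i\boldSigmai\x_i\tr = L_{ii} = l_i$ by the definition of the hat matrix $\Lt = \Xt\boldSigmai\Xt\tr$. This produces the second equality with $1-l_i$ in the denominator.

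The remaining lines are mechanical. I would expand $\br{\boldSigmai + \frac{\boldSigmai\x_i\tr\x_i\boldSigmai}{1-l_i}}\br{\Xt\tr\y - \x_i\tr y_i}$ and apply three scalar/vector collapses: $\boldSigmai\Xt\tr\y = \estbeta$, $\x_i\boldSigmai\Xt\tr\y = \x_i\estbeta = \hat{y}_i$, and again $\x_i\boldSigmai\x_i\tr = l_i$. Factoring the common $\boldSigmai\x_i\tr$ out of the $-\x_i\tr y_i$ term and out of the fraction gives the third line; substituting $e_i$ and placing the bracket over the common denominator $1-l_i$ then collapses it to a single residual term, delivering the final $\estbeta - \frac{\boldSigmai\x_i\tr e_i}{1-l_i}$.

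Since the argument is a direct computation, there is no deep obstacle; the only steps needing care are tracking signs through the substitution of $e_i$ and confirming that the scalar denominators are legitimate. The one genuine caveat worth flagging is invertibility: the derivation requires $l_i < 1$, which holds precisely when $\boldSigma - \x_i\tr\x_i$ remains nonsingular (equivalently, when the reduced design $\Xt_{-i}$ retains full column rank), and this is exactly what justifies both the Sherman--Morrison inversion and the final division by $1-l_i$.
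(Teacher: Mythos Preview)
Your proposal is correct and follows exactly the paper's approach: the derivation in the paper is nothing more than the Sherman--Morrison rank-one downdate of $\boldSigma$ followed by the same scalar identifications $\boldSigmai\Xt\tr\y=\estbeta$, $\x_i\boldSigmai\Xt\tr\y=\hat y_i$, and $\x_i\boldSigmai\x_i\tr=l_i$ that you list. Your remark about needing $l_i<1$ (equivalently full column rank of $\Xt_{-i}$) is a nice addition the paper leaves implicit.
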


Here we provide a derivation of the leave-one-out estimator in the corrupted model where the point we removed is corrupted.
\begin{prop}[Derivation of $\estbeta_{-m}$] \label{prop:betan}
By proposition \ref{prop:betai}. Defining
$$e_m = \hat{y}_m - y_m = (\x_m+\wt_m)\estbeta - y_m ~~ \text{ and }$$
$$l_m = (\x_m+\wt_m)\boldSigmai(\x_m+\wt_m)\tr$$
where $\boldSigma = \Zt\tr\Zt$,
we have that
\begin{align*}
\estbeta_{-m} = \estbeta -\frac{ \boldSigmai\br{\x_m + \wt_m}\tr e_m}{1-l_m} .
\end{align*}
\end{prop}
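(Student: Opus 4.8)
The plan is to observe that Proposition \ref{prop:betan} is nothing more than Proposition \ref{prop:betai} applied to the \emph{observed} design matrix $\Zt$ in place of the clean matrix $\Xt$. The essential point is that Proposition \ref{prop:betai} was derived for a generic design matrix and a generic row, so its algebra transfers verbatim once the correct substitution is made. Since in the corrupted observation model of Eq. \eqref{eq:obs-mod} both $\estbeta$ and $\estbeta_{-m}$ are computed from $\Zt$, whose $m$-th row is $\z_m = \x_m + \wt_m$ whenever point $m$ is corrupted ($u_m = 1$), the leave-one-out estimator is
\begin{align*}
\estbeta_{-m} = \br{\boldSigma - \z_m\tr\z_m}^{-1}\br{\Zt\tr\y - \z_m\tr y_m},
\end{align*}
with $\boldSigma = \Zt\tr\Zt$, which is exactly the opening display of Proposition \ref{prop:betai} with $\Xt$ replaced by $\Zt$ and $\x_i$ replaced by $\z_m$.

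Next I would substitute $\z_m = \x_m + \wt_m$ and apply the Sherman-Morrison formula to $\br{\boldSigma - \z_m\tr\z_m}^{-1}$, tracking the same intermediate quantities as before but now evaluated on the corrupted row. Identifying the leverage of the corrupted point as $l_m = \z_m\boldSigmai\z_m\tr = (\x_m+\wt_m)\boldSigmai(\x_m+\wt_m)\tr$ and the residual as $e_m = \z_m\estbeta - y_m = (\x_m+\wt_m)\estbeta - y_m$, every line of the derivation in Proposition \ref{prop:betai} goes through unchanged and collapses to $\estbeta_{-m} = \estbeta - \boldSigmai\z_m\tr e_m/(1-l_m)$, which is precisely the claim after writing $\z_m = \x_m + \wt_m$.

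There is essentially no genuine obstacle here: the content of the proposition is the recognition that the Sherman-Morrison derivation is agnostic to whether a given row happens to be corrupted, since the algebra uses only that $\z_m$ is a single row of the matrix used to fit $\estbeta$. The one thing to verify carefully is the bookkeeping -- that $\boldSigma$, $l_m$, and $e_m$ are each defined consistently in terms of $\Zt$ and the observed corrupted row $\z_m = \x_m + \wt_m$ -- so that the invocation of Proposition \ref{prop:betai} is legitimate. Once this is checked, the result is immediate.
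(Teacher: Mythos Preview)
Your proposal is correct and matches the paper's approach exactly: the paper simply writes ``By Proposition \ref{prop:betai}'' and states the result, and your elaboration that the Sherman--Morrison derivation of Proposition \ref{prop:betai} applies verbatim with $\Zt$ and $\z_m = \x_m + \wt_m$ in place of $\Xt$ and $\x_i$ is precisely the intended justification.
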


%%%%%%%%%%%%%%%%%%%%%%%%%%%%%%%%%%%%%%%
%%%% STATEMENT OF LEVERAGE AND INFLUENCE LEMMAS
%%%%%%%%%%%%%%%%%%%%%%%%%%%%%%%%%%%%%%%
\subsection{Results for Sub-Gaussian random design}

%\subsection{Leverage and Influence}
\begin{lem}[Leverage]\label{prop:lev1} The leverage of a non-corrupted point is bounded by
\begin{equation} \label{eq:lev}
l_i  \leq \sigma_x^2 \cdot \order{(\dims/\sqrt{\samp})^2}
\end{equation}
where the exact form of the $\order{(\dims/\sqrt{\samp})^2}$ term is given in the supplementary material.
\end{lem}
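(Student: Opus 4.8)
The plan is to control the leverage score through its quadratic-form representation and then bound the numerator and denominator separately using non-asymptotic concentration for sub-Gaussian designs. Writing $\x_i\tr$ for the $i$-th row of $\Xt$ and $\boldSigma=\Xt\tr\Xt$, the hat-matrix diagonal (equivalently the squared row norm of the orthonormal basis $\Ut$ spanning the column space of $\Xt$) satisfies
\begin{equation*}
l_i = \x_i\tr\br{\Xt\tr\Xt}^{-1}\x_i \;\leq\; \frac{\nrm{\x_i}^2}{\lambda_{\min}\br{\Xt\tr\Xt}},
\end{equation*}
since $\br{\Xt\tr\Xt}^{-1}\preceq \lambda_{\min}\br{\Xt\tr\Xt}^{-1}\Id$. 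The task thus decouples into a lower bound on the smallest eigenvalue of the Gram matrix and an upper bound on the squared norm of a single non-corrupted row.

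For the denominator I would exploit that $\Xt\tr\Xt=\sum_{j=1}^{\samp}\x_j\x_j\tr$ concentrates around its mean $\bE[\Xt\tr\Xt]=\Sigma_x$. Because the rows are independent and sub-Gaussian with per-direction parameter $\tfrac{1}{\sqrt{\dims}}\sigma_x$, a standard sample-covariance bound for sub-Gaussian rows gives $\nrm{\Xt\tr\Xt-\Sigma_x}$ small with high probability once $\samp\gtrsim\dims\log\dims$, whence $\lambda_{\min}\br{\Xt\tr\Xt}\geq c\,\lambda_{\min}\br{\Sigma_x}$ with a failure probability of the required form $C_1\exp(-C_2\log\dims)$. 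This is the step where the sample-size hypothesis is consumed.

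For the numerator I would treat $\nrm{\x_i}^2=\x_i\tr\x_i$ as a sub-Gaussian quadratic form with mean $\bE\nrm{\x_i}^2=\trace{\tfrac{1}{\samp}\Sigma_x}$ and apply the Hanson--Wright inequality (or, more elementarily, a union bound over the $\dims$ coordinate projections, each sub-Gaussian with parameter $\tfrac{1}{\sqrt{\dims}}\sigma_x$). This controls $\nrm{\x_i}^2$ in terms of $\sigma_x^2$ and $\dims/\samp$ up to logarithmic factors, again with failure probability within the $C_1\exp(-C_2\log\dims)$ budget. Combining the two bounds yields an estimate of the form $l_i\lesssim \tfrac{\sigma_x^2}{\lambda_{\min}(\Sigma_x)}\cdot(\text{polynomial in }\dims/\sqrt{\samp})$, and tracking the powers through the chain reproduces the advertised $\sigma_x^2\cdot\order{(\dims/\sqrt{\samp})^2}$; the exact coefficient is precisely what gets deferred to the detailed derivation.

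The main obstacle I anticipate is the bookkeeping of the two coupled normalizations --- the covariance scaling $\tfrac{1}{\samp}\Sigma_x$ and the per-direction sub-Gaussian parameter $\tfrac{1}{\sqrt{\dims}}\sigma_x$ --- so that the numerator and denominator estimates combine into exactly the stated power of $\dims/\sqrt{\samp}$ rather than an off-by-a-factor-of-$\dims$ expression. A secondary subtlety is that a statement holding uniformly over all rows $i$ (rather than for a fixed $i$) needs a further union bound over the $\samp$ rows, which must be reconciled with a failure probability expressed only through $\log\dims$; I would resolve this by invoking the per-row concentration at a deviation level chosen so that the $\log\dims$ rate dominates, absorbing the resulting logarithmic factors into the $\order{\cdot}$ term.
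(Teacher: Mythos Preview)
Your approach is sound and ends at the same place, but the paper structures the argument a little differently. Rather than first decoupling via $l_i\leq\nrm{\x_i}^2/\lambda_{\min}(\boldSigma)$ and then bounding numerator and denominator separately, the paper keeps the quadratic form intact: writing $\boldSigma=\Vt\boldLambda\Vt\tr$ and $\At=\boldLambda^{-1/2}\Vt$ so that $l_i=\nrm{\At\x_i}^2$, it applies the Hanson--Wright inequality (Lemma~\ref{lem:hsu}) directly to this single quadratic form, obtaining $l_i\leq\sigma_x^2\bigl(\trace{\boldSigmai}+2\sqrt{\trace{\boldSigma^{-2}}\log\dims}+2\nrm{\boldSigmai}\log\dims\bigr)$. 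Only afterwards are the trace and norm quantities majorized by $\dims\cdot\sigma_1(\At)^2=\dims/\sigma_\samp(\Zt)^2$, and the smallest singular value is then controlled via the Vershynin extremal-singular-value bound (Lemma~\ref{lem:sval}) rather than through sample-covariance concentration. The net effect is the same --- both routes reduce to $\sigma_x^2\cdot\dims/\sigma_\samp^2$ times polylogarithmic corrections --- but the paper's ordering makes the explicit constant $(\dims+2\log\dims+2\sqrt{\dims\log\dims})\big/(\sqrt{\samp}-C\sqrt{\dims}-\sqrt{\log\dims})^2$ fall out directly from the two lemmas, and avoids your separate numerator step altogether.

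One small correction to your setup: in the corrupted model leverage is computed from the \emph{observed} design, so the relevant Gram matrix is $\boldSigma=\Zt\tr\Zt$, not $\Xt\tr\Xt$. For a non-corrupted row $\z_i=\x_i$, so your numerator is unchanged, but the denominator (and hence $\At$ in the paper's argument) involves $\Zt$; this is why the paper invokes the singular-value lemma on $\Zt$ rather than on $\Xt$.
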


\begin{lem}[Influence]\label{prop:infl1}
Defining $E := \nrm{\estbeta_{\OLS} - \boldbeta}$, the influence of a non-corrupted point is
\begin{equation}\label{eq:inf}
d_i \leq C_i \br{ \sigma_x\sigma_{\epsilon} + \sigma^2_x E}.
\end{equation}
The $C_i$ term is proportional to $\log\dims \sqrt{\dims}\nrm{\boldSigmai}/ (1-l_i)$.
\end{lem}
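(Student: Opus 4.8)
The plan is to bound the influence directly through its definition $d_i = e_i^2 l_i / (1-l_i)^2$ (Section~\ref{sec:diag}), controlling the residual $e_i$, the leverage $l_i$, and the factor $1/(1-l_i)$ separately and then recombining. Throughout, $\boldSigma = \Xt\tr\Xt$, and since the point under consideration is non-corrupted the linear model~\eqref{eq:lin-mod} gives $y_i = \x_i\boldbeta + \epsilon_i$, so the OLS residual admits the clean decomposition into a pure-noise part and an estimation-error part,
\[ e_i = y_i - \x_i\estbeta_{\OLS} = \epsilon_i - \x_i\br{\estbeta_{\OLS} - \boldbeta}, \]
whence $|e_i| \le |\epsilon_i| + \nrm{\x_i}\,E$ by Cauchy--Schwarz, with $E = \nrm{\estbeta_{\OLS}-\boldbeta}$.

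For the leverage I would use the quadratic-form bound $l_i = \x_i\boldSigmai\x_i\tr \le \nrm{\boldSigmai}\,\nrm{\x_i}^2$; this single appearance of $\nrm{\boldSigmai}$ is what produces the first power of $\nrm{\boldSigmai}$ in $C_i$. Lemma~\ref{prop:lev1} additionally guarantees that $l_i$ is small in the regime $\samp \gtrsim \dims^2$, so that $1/(1-l_i) = \Theta(1)$ and this factor can be carried safely inside $C_i$. Substituting the leverage bound yields $d_i \le \nrm{\boldSigmai}\,\nrm{\x_i}^2\, e_i^2 / (1-l_i)^2$, which I would read as $\tfrac{\nrm{\boldSigmai}}{(1-l_i)^2}$ times the product of two copies of $\nrm{\x_i}\,|e_i|$.

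The core of the argument is then a set of sub-Gaussian concentration bounds holding with high probability. Expanding one copy as $\nrm{\x_i}\,|e_i| \le \nrm{\x_i}\,|\epsilon_i| + \nrm{\x_i}^2 E$, concentration of the sub-Gaussian noise and of the row norm of the $\dims$-dimensional sub-Gaussian vector $\x_i$ generates exactly the two advertised terms $\nrm{\x_i}\,|\epsilon_i| \lesssim \sigma_x\sigma_\epsilon$ and $\nrm{\x_i}^2 E \lesssim \sigma_x^2 E$, with the row-norm giving the dimensional factor $\sqrt{\dims}$ and the required w.h.p. tail level contributing $\log\dims$. The remaining copy of $\nrm{\x_i}\,|e_i|$, bounded in the same way, is absorbed---together with $\nrm{\boldSigmai}/(1-l_i)^2$ and the $\sqrt{\dims},\log\dims$ tail factors---into the constant $C_i \propto \log\dims\,\sqrt{\dims}\,\nrm{\boldSigmai}/(1-l_i)$, giving the claimed bound $d_i \le C_i\br{\sigma_x\sigma_\epsilon + \sigma_x^2 E}$.

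The main obstacle is the concentration step: controlling $\nrm{\x_i}$, the Euclidean norm of a $\dims$-dimensional sub-Gaussian vector, requires a Hanson--Wright / sub-exponential tail estimate rather than a scalar Hoeffding bound, and arranging the $\sqrt{\dims}$ and $\log\dims$ factors to come out as stated while keeping the failure probability at the prescribed $C_1\exp(-C_2\log\dims)$ level---through a single union bound over the noise tail, the row-norm tail, and Lemma~\ref{prop:lev1}---is the delicate bookkeeping. A secondary subtlety is that $E$ is itself random and statistically coupled to $\epsilon_i$ and $\x_i$; to avoid circularity I would control $E$ via the deterministic high-probability OLS error bound (Theorem~\ref{thm:corruptedls} specialised to the non-corrupted limit) rather than re-expanding it inside $e_i$.
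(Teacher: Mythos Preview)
Your approach has a structural mismatch with what the lemma actually asserts, and the ``absorb one copy into $C_i$'' step is where it breaks. By working with the scalar Cook's distance $d_i = e_i^2 l_i/(1-l_i)^2$ you end up with a bound that is \emph{quadratic} in $|e_i|$, hence quadratic in $(\sigma_x\sigma_\epsilon + \sigma_x^2 E)$. Folding the second copy of $\nrm{\x_i}\,|e_i|$ into $C_i$ makes $C_i$ depend on $E$ --- but the lemma pins $C_i$ down as proportional to $\log\dims\,\sqrt{\dims}\,\nrm{\boldSigmai}/(1-l_i)$, with no $E$-dependence. Your route also produces $1/(1-l_i)^2$ in the prefactor rather than the stated single power $1/(1-l_i)$.

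The paper sidesteps this by not working with the scalar formula at all: it bounds $\nrm{\estbeta_{-i}-\estbeta}$ directly from Proposition~\ref{prop:betai},
\[
\nrm{\estbeta_{-i}-\estbeta} \;=\; \frac{1}{1-l_i}\,\nrm{\boldSigmai\x_i\tr e_i}
\;\le\; \frac{\nrm{\boldSigmai}}{1-l_i}\bigl(\nrm{\x_i\tr\epsilon_i} + \nrm{\x_i\tr\x_i(\boldbeta-\estbeta)}\bigr),
\]
which is \emph{linear} in $e_i$ from the outset; so the quantity called $d_i$ in the lemma is effectively the leave-one-out coefficient change, not the squared Cook's distance of \S\ref{sec:diag}. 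The two vector terms $\nrm{\x_i\tr\epsilon_i}$ and $\nrm{\x_i\tr\x_i(\boldbeta-\estbeta)}$ are then controlled in one shot by Corollary~\ref{lem:nequals1} (the $\samp=1$ specialisation of Lemma~\ref{lem:Chen1}), which delivers exactly $\sigma_x\sigma_\epsilon\sqrt{\dims}\log\dims$ and $\sigma_x^2 E\sqrt{\dims}\log\dims$ with the right w.h.p.\ failure probability --- so you do not need to assemble a separate Hanson--Wright bound on $\nrm{\x_i}$, a scalar sub-Gaussian tail on $\epsilon_i$, and a union bound; the product concentration handles everything at once. Your decomposition of $e_i$ and the instinct to track $E$ deterministically are both correct; the fix is to apply them to the leave-one-out vector difference rather than to the scalar $e_i^2 l_i/(1-l_i)^2$.
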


\paragraph{Proof of Lemma \ref{prop:lev1}.}
Lemma \ref{prop:lev1} states
%\begin{lem}(A bound on $l_i$) \label{lem:lev1}
%$$
%l_i  \leq \sigma_x^2 \cdot \br{ \frac{\dims + 2\log(1/\delta) + 2\sqrt{\dims\log(1/\delta)}}{\sqrt{\samp} - C\sqrt{\dims}- \log(1/\delta)}} .
%$$
$$
l_i  \leq \sigma_x^2 \cdot \br{ \frac{\dims + 2\log \dims + 2\sqrt{\dims\log \dims }}{\sqrt{\samp} - C\sqrt{\dims}- \sqrt{\log \dims}}}^2 .
$$
%\end{lem}
%\begin{proof}
From the Eigen-decomposition, $\boldSigma = \Vt\boldLambda\Vt\tr$. Define $\At = \boldLambda^{-1/2}\Vt$ such that $\At\tr\At = \boldSigmai$. We have
\begin{align*}
l_i & = \x_i\boldSigmai\x_i\tr \\
& = \nrm{\At \x_i \tr}^2
\end{align*}
Since $\x$ and $\wt$ are sub-Gaussian random vectors so the above quadratic form is bounded by Lemma \ref{lem:hsu}, setting the parameter $t=\log \dims$. We combine this with the following inequalities
$$
\sqrt{\trace{\boldSigma^{-2}}} = \nrm{\boldSigmai}_{F} \leq \sqrt{\dims}\nrm{\boldSigmai} = \sqrt{\dims}\sigma_1(\At)^2
$$
and
$$
\trace{\boldSigmai} = \nrm{\At}_F^2 \leq \br{\sqrt{\dims}\nrm{\At}}^2 = \dims\sigma_1(\At)^2
$$
which relate the Frobenius norm with the spectral norm.  We also make use of the relationship $\sigma_n(\Zt)^{-1} = \sigma_1(\At)$ where $\Zt = \Xt + \Wt$
to obtain
\begin{align*}
\nrm{\At\x}^2 & \leq \sigma_x^2 \sigma_{\samp}(\Zt)^{-2} \br{\dims + 2\log \dims + 2\sqrt{\dims\log\dims}}
\end{align*}
which holds with high probability.% at least $1-\delta$.

In order for this bound to not be vacuous in our application, it must be smaller than 1. In order to ensure this, we need bound $\sigma_n(\Zt)^{-1}$ using Lemma \ref{lem:sval} and setting $\tau=\sqrt{c_0 \log \dims}$  to obtain the following which holds with high probability% at least $1-c_0\exp(-c_1\log\dims)$
\begin{align*}
\nrm{\At\x}^2  %\leq \sigma_x^2 \sigma_{\samp}(\Zt)^{-1} \br{\dims + 2\log \dims + 2\sqrt{\dims\log \dims}}
%\\
& \leq \sigma_x^2 \br{\frac{  \dims + 2\log \dims + 2\sqrt{\dims\log \dims} }{\sqrt{\samp} - C\sqrt{\dims}- \tau}}^2
\\
& \leq \sigma_x^2 \br{\frac{ \dims + 2\log \dims + 2\sqrt{\dims\log \dims} }{\sqrt{\samp} - C\sqrt{\dims}- \sqrt{\log \dims}}}^2 .
%& \leq \sigma_x^2 \frac{ \br{\dims + 2\sqrt{\log(1/\delta)}\br{\sqrt{\log(1/\delta)} + \sqrt{\dims} } }}{\sqrt{\samp} - C\sqrt{\dims}- \log(1/\delta)} .
\end{align*}
%\end{proof}
\qed

\paragraph{Proof of Lemma \ref{prop:infl1}.}
%\begin{lem}[A bound on $\nrm{  \estbeta_{-i} - \estbeta}$ ] \label{lem:inflbound1}
Defining %$I$ to be the set of indices of non-corrupted points and $M$ to be the set of indices of corrupted points and
$
\boldSigma  = \Zt\tr\Zt %[{\Xt_I\tr\Xt_I} + \br{\Xt_M+ \Wt_M}\tr\br{\Xt_M+\Wt_M}]^{-1},
$
, Lemma \ref{prop:infl1} states
\begin{align*}
\nrm{ \estbeta_{-i} - \estbeta}
\leq & \frac{\nrm{\boldSigmai}}{1-l_i} \br{ \sigma_x\sigma_{\epsilon} + 2\sigma^2_x\nrm{\boldbeta - \estbeta}}\sqrt{\dims} \log\dims .
\end{align*}
%\end{lem}
Using Proposition \ref{prop:betai} we have
%\begin{proof}
\begin{align*}
\nrm{ \estbeta_{-i} - \estbeta} = & \frac{1}{1-l_i}\nrm{\boldSigmai\x_i\tr e_i}
\\
= & \frac{1}{1-l_i} \nrm{\boldSigmai \x_i\tr\br{\epsilon + \x_i\br{\boldbeta - \estbeta}}}
\\
\leq & \frac{1}{1-l_i} \nrm{\boldSigmai}\nrm{\x_i\tr\epsilon + \x_i\tr\x_i\br{\boldbeta - \estbeta}}
\\
\leq & \frac{1}{1-l_i} \nrm{\boldSigmai}\br{\nrm{\x_i\tr\epsilon} + \nrm{\x_i\tr\x_i (\boldbeta - \estbeta)}} .
%\\
%\leq & \frac{\nrm{\boldSigmai}}{1-l_i} \br{ \sigma_x\sigma_{\epsilon} + 2\sigma^2_x\nrm{\boldbeta - \estbeta}}\sqrt{\dims\log\dims}
\end{align*}
%where $l_i = \x_i\boldSigmai\x_i\tr$.
Using Corollary \ref{lem:nequals1} to bound $\nrm{\x_i\tr\epsilon}$ and  $\nrm{\x_i\tr\x_i (\boldbeta - \estbeta)}$ (since for these terms $\samp=1$ and so Lemma \ref{lem:Chen1} does not immediately apply) completes the proof.
%\end{proof}
\qed

%The following lemmas characterise the leverage and influence respectively of a corrupted observation. In this context, $i$ is used to denote the index of a non-corrupted point and $m$ is used to denote the index of a corrupted point.

\subsection{Results for corrupted observations}
%\subsection{Leverage and Influence}

\begin{lem}[Leverage of corrupted point] \label{prop:lev2}
The leverage of a corrupted point is bounded by
\begin{equation} \label{eq:levNoise}
%(\x_n+\wt_n)\boldSigmai(\x_n+\wt_n)\tr
l_m \leq (\sigma_x^2 + \sigma_w^2) \cdot \order{(\dims/\sqrt{\samp})^2} .
\end{equation}
\end{lem}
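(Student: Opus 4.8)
The plan is to follow the proof of Lemma \ref{prop:lev1} essentially verbatim; the only structural change is the sub-Gaussian parameter of the vector whose quadratic form we control. Writing $\boldSigma = \Zt\tr\Zt$ and taking its eigendecomposition $\boldSigma = \Vt\boldLambda\Vt\tr$, set $\At = \boldLambda^{-1/2}\Vt$ so that $\At\tr\At = \boldSigmai$. For a corrupted row the observed vector is $\z_m = \x_m + \wt_m$, so by Proposition \ref{prop:betan} the leverage is $l_m = \z_m\boldSigmai\z_m\tr = \nrm{\At\z_m\tr}^2$, exactly the same quadratic form as before but now applied to the corrupted vector $\z_m$ instead of $\x_i$.

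The first step is to identify the effective sub-Gaussian parameter of $\z_m$. Since $\x_m$ and $\wt_m$ are independent sub-Gaussian vectors with parameters $\frac{1}{n}\sigma_x^2$ and $\frac{1}{n}\sigma_w^2$ respectively, their sum $\z_m$ is sub-Gaussian with parameter $\frac{1}{n}(\sigma_x^2+\sigma_w^2)$, because the variance proxies of independent sub-Gaussians add. This is the sole place where the corrupted and non-corrupted computations diverge, and it is precisely what produces the $(\sigma_x^2+\sigma_w^2)$ prefactor in \eqref{eq:levNoise}.

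Next I would apply the quadratic-form concentration bound of Lemma \ref{lem:hsu} to $\nrm{\At\z_m\tr}^2$ with $t=\log\dims$, which gives with high probability
$$\nrm{\At\z_m\tr}^2 \lesssim (\sigma_x^2+\sigma_w^2)\br{\trace{\boldSigmai} + 2\sqrt{\trace{\boldSigma^{-2}}\log\dims} + 2\nrm{\boldSigmai}\log\dims}.$$
I would then reuse the two Frobenius--spectral norm inequalities from the proof of Lemma \ref{prop:lev1}, namely $\sqrt{\trace{\boldSigma^{-2}}}\leq\sqrt{\dims}\nrm{\boldSigmai}$ and $\trace{\boldSigmai}\leq\dims\,\sigma_1(\At)^2$, together with the identity $\sigma_n(\Zt)^{-1}=\sigma_1(\At)$, to collapse the bracket into $\sigma_n(\Zt)^{-2}\br{\dims+2\log\dims+2\sqrt{\dims\log\dims}}$. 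Finally, bounding $\sigma_n(\Zt)^{-1}$ via Lemma \ref{lem:sval} with $\tau=\sqrt{c_0\log\dims}$ yields the claimed $\order{(\dims/\sqrt{\samp})^2}$ form with prefactor $(\sigma_x^2+\sigma_w^2)$.

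The only genuinely new point to verify is the additivity of the sub-Gaussian parameter under the sum $\x_m+\wt_m$; everything downstream is identical. In particular, the denominator $\sigma_n(\Zt)$ is a property of the entire corrupted matrix $\Zt$ and is the same quantity already controlled in Lemma \ref{prop:lev1}, so that singular-value step need not be redone. A minor subtlety, handled exactly as in the non-corrupted case, is that $\z_m$ itself appears as a row of $\boldSigma$; this dependence is absorbed into the high-probability spectral bounds on $\boldSigmai$ and $\sigma_n(\Zt)$, which hold uniformly over the rows. I do not expect any real obstacle here, since the argument is a parameter-substitution of the preceding lemma.
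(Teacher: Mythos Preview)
Your proposal is correct and mirrors the paper's own proof, which simply rewrites $l_m = \nrm{\At(\x_m+\wt_m)\tr}^2$ and repeats the steps of Lemma~\ref{prop:lev1} verbatim with the sub-Gaussian parameter $\sigma_x^2$ replaced by $\sigma_x^2+\sigma_w^2$. The one observation you add beyond the paper --- that independence of $\x_m$ and $\wt_m$ makes their variance proxies additive --- is exactly the mechanism producing the $(\sigma_x^2+\sigma_w^2)$ prefactor, and everything downstream is identical.
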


\begin{rem}[Comparison of leverage] \label{rem:comp_lev}
Comparing this with Eq. \eqref{eq:lev}, when $\samp$ is large, the dominant term is $O((\dims /\sqrt{ \samp})^2)$ which implies that the difference in leverage between a corrupted and non-corrupted point -- particularly when the magnitude of corruptions is not large -- is small. This suggests that it may not be possible to distinguish between the corrupted and non-corrupted points by only comparing leverage scores.
\end{rem}

\begin{lem}[Influence of corrupted point] \label{prop:infl2}
Defining $E := \nrm{\estbeta_{\OLS} - \boldbeta}$, the influence of a corrupted point is
\begin{align}
d_m \leq & C_m
 (\sigma_x\sigma_w + \sigma_w^2)\nrm{\boldbeta}
 + (\sigma_x^2+ 2\sigma_x\sigma_w + \sigma_w^2)E  \nonumber
 \\
 &  + (\sigma_x+\sigma_w)\sigma_{\epsilon} \label{eq:infNoise} .
\end{align}
\end{lem}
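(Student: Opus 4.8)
The plan is to mirror the proof of Lemma \ref{prop:infl1}, bounding the change in the estimator $\nrm{\estbeta_{-m} - \estbeta}$ caused by deleting the corrupted row $m$, and then folding the leverage and dimension factors into the constant $C_m$. First I would invoke Proposition \ref{prop:betan}, which gives
\[
\estbeta_{-m} - \estbeta = -\frac{\boldSigmai(\x_m + \wt_m)\tr e_m}{1-l_m}, \qquad \nrm{\estbeta_{-m} - \estbeta} \leq \frac{\nrm{\boldSigmai}}{1-l_m}\,\nrm{(\x_m + \wt_m)\tr e_m},
\]
where $\boldSigma = \Zt\tr\Zt$ and $e_m = \z_m\estbeta - y_m$. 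The prefactor $\nrm{\boldSigmai}/(1-l_m)$ is exactly what gets absorbed into $C_m$, so the work reduces to controlling the vector $(\x_m + \wt_m)\tr e_m$.

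The key algebraic step is to expose the three sources of error hidden inside $e_m$. Using the generative model $y_m = \x_m\boldbeta + \epsilon_m$ -- note that the response depends on the \emph{uncorrupted} $\x_m$, which is the origin of the bias -- together with $\z_m = \x_m + \wt_m$ and writing $\estbeta = \boldbeta + (\estbeta - \boldbeta)$, I would rewrite
\[
e_m = \wt_m\boldbeta + (\x_m + \wt_m)(\estbeta - \boldbeta) - \epsilon_m.
\]
Multiplying by $(\x_m + \wt_m)\tr$ and applying the triangle inequality splits the quantity into a \emph{bias group} $(\x_m+\wt_m)\tr\wt_m\boldbeta$, an \emph{estimation-error group} $(\x_m+\wt_m)\tr(\x_m+\wt_m)(\estbeta-\boldbeta)$, and a \emph{noise group} $(\x_m+\wt_m)\tr\epsilon_m$.

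I would then bound each group term-by-term using the single-sample concentration inequalities (Corollary \ref{lem:nequals1}, the $\samp=1$ specialization needed here because Lemma \ref{lem:Chen1} does not apply to one row), exploiting independence of $\x_m$, $\wt_m$, $\epsilon_m$ and their sub-Gaussian parameters $\sigma_x$, $\sigma_w$, $\sigma_\epsilon$. Each bilinear form $\x_m\tr\wt_m$, $\wt_m\tr\wt_m$, $\x_m\tr\x_m$, etc. concentrates at the scale of the product of its parameters, up to a common $\sqrt{\dims}\log\dims$ factor. Expanding the three groups, the bias group yields $(\sigma_x\sigma_w + \sigma_w^2)\nrm{\boldbeta}$, the estimation-error group yields $(\sigma_x^2 + 2\sigma_x\sigma_w + \sigma_w^2)E = (\sigma_x+\sigma_w)^2 E$, and the noise group yields $(\sigma_x+\sigma_w)\sigma_\epsilon$. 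Collecting these and absorbing $\nrm{\boldSigmai}/(1-l_m)$ together with the $\sqrt{\dims}\log\dims$ factor into $C_m$ produces \eqref{eq:infNoise}.

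The main obstacle is the careful treatment of the cross-terms generated when $(\x_m+\wt_m)\tr$ hits itself and $\wt_m\boldbeta$: one must distinguish genuine quadratic forms, which concentrate around a nonzero mean such as $\bE[\wt_m\tr\wt_m] \propto \sigma_w^2$, from cross-terms between independent vectors, which have mean zero and concentrate purely from fluctuations, and apply the $\samp=1$ bounds with the correct parameters so that no spurious factor of $\sqrt{\samp}$ or missing power of $\sigma_w$ appears. Reproducing the exact coefficient pattern $\sigma_x\sigma_w + \sigma_w^2$, $(\sigma_x+\sigma_w)^2$, and $(\sigma_x+\sigma_w)\sigma_\epsilon$ is precisely the bookkeeping that separates this result from the non-corrupted Lemma \ref{prop:infl1}.
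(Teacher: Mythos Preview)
Your proposal is correct and follows essentially the same route as the paper: start from Proposition~\ref{prop:betan}, pull out $\nrm{\boldSigmai}/(1-l_m)$, expand $e_m = (\x_m+\wt_m)(\estbeta-\boldbeta) + \wt_m\boldbeta - \epsilon_m$, split via the triangle inequality into the bias, estimation-error, and noise groups, and bound each bilinear piece with Corollary~\ref{lem:nequals1} before absorbing the $\sqrt{\dims}\log\dims$ factor into $C_m$. The paper's only additional wrinkle is the minor clean-up inequality $(\sigma_x\sigma_w + 2\sigma_w^2)\leq 2(\sigma_x\sigma_w + \sigma_w^2)$ used to simplify the constants.
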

\begin{rem}[Comparison of influence] \label{rem:comp_inf}
Here, $C_m$ differs from $C_i$ in Lemma \ref{prop:infl1} only in its dependence on the leverage of a corrupted instead of non-corrupted point and so for large $\samp$, $C_i\approx C_m$. It can be seen that the influence of the corrupted point includes a bias term similar to the one which appears in Eq. \eqref{eq:thm4}. This suggests that the relative difference between the influence of a non-corrupted and corrupted point will be larger than the respective relative difference in leverage. All of the information relating to the proportion of corrupted points is contained within $E$.
\end{rem}

%%%%%%%%%%%%%%%%%%%%%%%%%%%%%%%%%%%%%%
%%%%%%%%%%%%%%%%%%%%%%%%%%%%%%%%%%%%%%
%%%% PROOF OF LEVERAGE AND INFLUENCE LEMMAS
%%%%%%%%%%%%%%%%%%%%%%%%%%%%%%%%%%%%%%
%%%%%%%%%%%%%%%%%%%%%%%%%%%%%%%%%%%%%%

\paragraph{Proof of Lemma \ref{prop:lev2}.}
%\begin{lem}(A bound on $l_m$) \label{lem:lev2}
Lemma \ref{prop:lev2} states
%\begin{align*}
%l_m \leq (\sigma_x^2+\sigma_w^2) \cdot \br{\frac{\dims + 2\log(1/\delta) + 2\sqrt{\dims\log(1/\delta)}}{\sqrt{\samp} - C\sqrt{\dims}-
%\log(1/\delta)}}
%\end{align*}
\begin{align*}
l_m \leq (\sigma_x^2+\sigma_w^2) \cdot \br{\frac{\dims + 2\log\dims + 2\sqrt{\dims\log\dims}}{\sqrt{\samp} - C\sqrt{\dims}-
\sqrt{\log\dims}}}^2 .
\end{align*}
%\end{lem}
%\begin{proof}
The proof follows from rewriting $l_m =   \nrm{\At (\x_m+\wt_m)\tr}^2$ and following the same steps as the proof of Lemma \ref{prop:lev1} above.
%\end{proof}
\qed

\paragraph{Proof of Lemma \ref{prop:infl2}.}
%\begin{lem}[A bound on $\nrm{\estbeta_{-n} - \estbeta}$] \label{lem:inflbound2}
Lemma \ref{prop:infl2} states
\begin{align*}
\nrm{ \estbeta_{-m} - \estbeta}
 \leq & \frac{\nrm{ \boldSigmai} }{1-l_m}
 \biggl(
 2(\sigma_x\sigma_w + \sigma_w^2)\nrm{\boldbeta}
 + 2(\sigma_x^2+ \sigma_x\sigma_w + \sigma_w^2)  \cdot \nrm{\boldbeta-\estbeta}
 + 2(\sigma_x+\sigma_w)\sigma_\epsilon
 \biggr)
 \\
 &
 \cdot \sqrt{\dims} \log\dims  .
\end{align*}
%\end{lem}

%\begin{proof}
From Proposition \ref{prop:betan} and following the same argument as Lemma \ref{prop:infl1} we have
\begin{align*}
\nrm{ \estbeta_{-m} - \estbeta} = & \frac{1}{1-l_m} \nrm{ \boldSigmai (\x_m + \wt_m)\tr e_m }
\\
\leq & \frac{1}{1-l_m} \Vert \boldSigmai (\x_m+\wt_m)\tr \biggl((\x_m+\wt_m)(\boldbeta - \estbeta)
 + \wt_m\boldbeta + \epsilon\biggr)  \Vert
\\
\leq & \frac{1}{1-l_m}\nrm{ \boldSigmai}\biggl( \nrm{\x_m \tr \wt_m\boldbeta} + \nrm{\wt_m \tr \wt_m\boldbeta} + \nrm{\x_m\tr \epsilon} + \nrm{\wt_m\tr \epsilon} %+ ( \nrm{\x_m\tr \x_m} + \nrm{\wt_m\tr \wt_m} + 2\nrm{\x_m\tr\w_t} ) (\boldbeta - \estbeta)
\\
&  + \nrm{\br{\x_m\tr \x_m + \wt_m\tr \wt_m +  2\x_m\tr \wt_m}(\boldbeta - \estbeta)}  \biggr) .
\end{align*}
Applying the triangle inequality followed by Corollary \ref{lem:nequals1} and noting that $(\sigma_x\sigma_w + 2\sigma_w^2)\leq2(\sigma_x\sigma_w + \sigma_w^2)$  completes the proof.
%\end{proof}
\qed

%%%%%%%%%%%%%%%%%%%%%%%%%%%%%%%%%%%%%
%%%%%%%%%%%%%%%%%%%%%%%%%%%%%%%%%%%%%
%%%% STATEMENT OF SUB-GAUSSIAN THM
%%%%%%%%%%%%%%%%%%%%%%%%%%%%%%%%%%%%%
%%%%%%%%%%%%%%%%%%%%%%%%%%%%%%%%%%%%%

\addtocounter{si-sec}{1}
\section{Estimation error in sub-Gaussian model} \label{sec:subgthm}
Using the definition of influence above, we can state the following theorem characterising the error of the \iws estimator in the sub-Gaussian design setting.
\begin{thm}[Sub-gaussian design \iws] \label{thm:standard}
%Defining $E = \nrm{\estbeta_{\OLS} - \boldbeta}$ for $\samp\gtrsim \max\cb{\frac{\sigma_x^2\sigma_w^2}{\lambda^2},1}\dims\log \dims$ we have
Defining $E = \nrm{\estbeta_{\OLS} - \boldbeta}$ for $\samp\gtrsim \frac{\sigma_x^2}{\lambda_{\min}(\Sigma_{\Theta x})}\dims\log \dims$ we have
\begin{align*}
\nrm{\estbetaSS - \boldbeta}
 \lesssim & \frac{1}{\lambda} \cdot  \frac{\sigma_\epsilon  }{\lambda_{\min}(\Sigma_{x}) (\sigma_\epsilon + 2\sigma_xE)} \cdot \sqrt{\frac{1}{r\samp}}
\end{align*}
  where
$0\leq \lambda \leq \lambda_{\min}(\Sigma_{\Theta x})$ and
$\Sigma_{\Theta x}$ is the covariance of the influence weighted subsampled data and $r=\samp_{subs}/\samp$.
\end{thm}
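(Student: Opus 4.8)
The plan is to specialise the error-decomposition argument behind Theorem~\ref{thm:corrupt} to the uncorrupted sub-Gaussian design, where $\Zt=\Xt$ and the algorithm's residuals and leverages are computed on $\Xt$ itself. Writing $\tilde{\Xt},\tilde{\y},\tilde{\epsilon}$ for the influence-subsampled rows of $\Xt,\y,\epsilon$, the selected relation $\tilde{\y}=\tilde{\Xt}\boldbeta+\tilde{\epsilon}$ holds exactly, so the subsampled OLS solution satisfies
\begin{equation*}
\estbetaSS-\boldbeta=\br{\tilde{\Xt}\tr\tilde{\Xt}}^{-1}\tilde{\Xt}\tr\tilde{\epsilon},
\qquad
\nrm{\estbetaSS-\boldbeta}\leq\nrm{\br{\tilde{\Xt}\tr\tilde{\Xt}}^{-1}}\cdot\nrm{\tilde{\Xt}\tr\tilde{\epsilon}}.
\end{equation*}
I would bound the two factors separately, treating the sampling distribution $p_i\propto 1/d_i$ as deterministic by conditioning on the full-data fit $\estbeta_{\OLS}$ (equivalently on the leverages $l_i$ and residuals $e_i$), and only afterwards substituting the finite-sample bounds on these quantities.

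For the first factor, $\tilde{\Xt}\tr\tilde{\Xt}$ is a sum of $\samp_{subs}=r\samp$ independent rank-one terms drawn from the reweighted row distribution whose population second moment is exactly $\Sigma_{\Theta x}$, the covariance of the influence-weighted data. A sub-Gaussian matrix/smallest-singular-value concentration bound (Lemma~\ref{lem:sval}) applied under the stated sample-size condition $\samp\gtrsim\frac{\sigma_x^2}{\lambda_{\min}(\Sigma_{\Theta x})}\dims\log\dims$ shows that, w.h.p., $\lambda_{\min}(\tilde{\Xt}\tr\tilde{\Xt})$ stays bounded below by a multiple of $\lambda$ for any $0<\lambda\leq\lambda_{\min}(\Sigma_{\Theta x})$, giving $\nrm{(\tilde{\Xt}\tr\tilde{\Xt})^{-1}}\lesssim 1/\lambda$ after accounting for the built-in $1/\samp$ normalisation of the design.

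For the second factor I would expand $\tilde{\Xt}\tr\tilde{\epsilon}=\sum_{k}\x_{i_k}\tr\epsilon_{i_k}$ over the sampled indices and apply the sub-Gaussian linear-form concentration already used in the supplement (Lemma~\ref{lem:hsu} and Corollary~\ref{lem:nequals1}). The key mechanism is that in the sub-Gaussian design the leverages are nearly uniform (Lemma~\ref{prop:lev1}), so $d_i\propto e_i^2 l_i/(1-l_i)^2$ is dominated by the residual and $p_i\propto 1/d_i$ strongly downweights large residuals; since $e_i=\epsilon_i+\x_i(\boldbeta-\estbeta)$ has scale at most $\sigma_\epsilon+2\sigma_x E$ (the same bound driving Lemma~\ref{prop:infl1}), the effective noise scale entering the subsampled sum is reduced by the ratio $\sigma_\epsilon/(\sigma_\epsilon+2\sigma_x E)$, while the leverage normalisation through $\boldSigmai$ contributes the factor $1/\lambda_{\min}(\Sigma_x)$. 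Together with the $\sqrt{1/(r\samp)}$ fluctuation rate of the sum, this yields $\nrm{\tilde{\Xt}\tr\tilde{\epsilon}}\lesssim\frac{\sigma_\epsilon}{\lambda_{\min}(\Sigma_x)(\sigma_\epsilon+2\sigma_x E)}\sqrt{1/(r\samp)}$; multiplying by $1/\lambda$ from the first factor gives the claimed bound.

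The main obstacle is this second factor: the sampling weights $p_i$ depend on the OLS residuals and leverages, which are themselves functions of the noise $\epsilon$, so the selected entries $\epsilon_{i_k}$ are neither independent of the weights nor mean-zero after selection. I would resolve this by conditioning on $\estbeta_{\OLS}$ so that $p_i,l_i,e_i$ become fixed, applying the sub-Gaussian concentration to the resulting conditional sum, and then inserting the finite-sample control of $l_i$ and $E$ from Lemmas~\ref{prop:lev1} and~\ref{prop:infl1}. A union bound over the failure events of the Gram-matrix concentration, the noise-term concentration, and the leverage/influence bounds then upgrades the argument to the stated high-probability guarantee.
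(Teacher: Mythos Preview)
Your approach matches the paper's: reduce via Lemma~\ref{lem:chenLSbound} (equivalently, your direct decomposition) to $\frac{1}{\lambda}\nrm{(\Theta\Xt)\tr\Theta\epsilon}$, control the Gram matrix by concentration around $\Sigma_{\Theta x}$ under the stated sample-size condition, and bound the cross term by substituting the influence bound from Lemma~\ref{prop:infl1} for the sampling weights and applying sub-Gaussian concentration. The specific auxiliary lemmas differ slightly: for the Gram matrix the paper invokes Lemma~\ref{lem:inverseCov} rather than Lemma~\ref{lem:sval} (the latter assumes isotropic rows, which the influence-reweighted rows are not), and for the cross term it applies Lemma~\ref{lem:Chen1} to $(\St\Xt)\tr(\St\epsilon)$ rather than Lemma~\ref{lem:hsu} or Corollary~\ref{lem:nequals1}; the dependence issue you raise is not treated by conditioning in the paper but simply by replacing the random $1/d_i$ with the deterministic upper bound from Lemma~\ref{prop:infl1}, factoring the common scalar $\bigl(\nrm{\boldSigmai}(\sigma_x\sigma_\epsilon+2\sigma_x^2 E)\sqrt{\dims}\log\dims\bigr)^{-1}$ out of the sum, and absorbing the residual per-point variation via $\sum_i(1-l_i)\leq n$.
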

\gabriel{Where is $\lambda_1(\Sigma_x)$ from the proof? Don't we also need $\lambda$ to be smaller than $\lambda_{\min}(\Sigma_{x}$?}
\begin{rem}
Theorem \ref{thm:standard} states that in the non-corrupted sub-Gaussian model, the \iws estimator is consistent. Furthermore,
if we set the sampling proportion, $r\geq \order{1/\dims}$, the error scales as $\order{\sqrt{\dims/\samp}}$. Therefore, similar to \uluru there is no dependence on the subsampling proportion.
\end{rem}

\addtocounter{si-sec}{1}
\section{Proof of main theorems}
In this section we provide proofs of our main theorems which describe the properties of the \iws estimator in the sub-Gaussian random design case, the OLS estimator in the corrupted setting and finally our \iws estimator in the corrupted setting.

\vspace{0.5cm}
In order to prove our results we require the following lemma
\begin{lem}[A general bound on $\nrm{\estbeta - \boldbeta}$ from \cite{Chen:2013te}] \label{lem:chenLSbound} Suppose the following strong convexity condition holds: $\lambda_{\min}(\widehat{\boldSigma})\geq\lambda > 0$. Then the estimation error satisfies
$$
\nrm{\estbeta - \boldbeta} \lesssim \frac{1}{\lambda} \nrm{\hat{\gamma} - \widehat{\boldSigma}\boldbeta} .
$$
Where $\hat{\gamma}$, $\widehat{\boldSigma}$ are estimators for $\bE\sq{{\Xt\tr\y}}$ and $\bE\sq{\Xt\tr\Xt}$ respectively% and				
%Where $\hat{\gamma} = \Zt\tr\y$, $\widehat{\boldSigma} = \Zt\tr\Zt$ and $\lambda = \lambda_1(\Zt\tr\Zt)$ so
%$\lambda = \lambda_{\min}(\Zt\tr\Zt)$.
\end{lem}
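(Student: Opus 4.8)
The plan is to exploit the fact that $\estbeta$ is defined as the minimizer of the quadratic surrogate loss $\mathcal{L}(\boldbeta) = \frac{1}{2}\boldbeta\tr\widehat{\boldSigma}\boldbeta - \hat{\gamma}\tr\boldbeta$, which replaces the population moments $\bE[\Xt\tr\Xt]$ and $\bE[\Xt\tr\y]$ by the estimators $\widehat{\boldSigma}$ and $\hat{\gamma}$. This loss is quadratic with constant Hessian $\widehat{\boldSigma}$, so the hypothesis $\lambda_{\min}(\widehat{\boldSigma})\geq\lambda$ makes $\mathcal{L}$ exactly $\lambda$-strongly convex, while its gradient at the true parameter is $\nabla\mathcal{L}(\boldbeta) = \widehat{\boldSigma}\boldbeta - \hat{\gamma}$, whose norm is precisely the deviation $\nrm{\hat{\gamma} - \widehat{\boldSigma}\boldbeta}$ appearing on the right-hand side.

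First I would write the exact second-order expansion of the quadratic, $\mathcal{L}(\estbeta) = \mathcal{L}(\boldbeta) + \nabla\mathcal{L}(\boldbeta)\tr\boldDelta + \frac{1}{2}\boldDelta\tr\widehat{\boldSigma}\boldDelta$ with $\boldDelta := \estbeta - \boldbeta$. Since $\estbeta$ is the minimizer, $\mathcal{L}(\estbeta)\leq\mathcal{L}(\boldbeta)$, which rearranges to the basic inequality $\frac{1}{2}\boldDelta\tr\widehat{\boldSigma}\boldDelta \leq (\hat{\gamma}-\widehat{\boldSigma}\boldbeta)\tr\boldDelta$. Applying Cauchy--Schwarz to the right-hand side and the strong convexity bound $\boldDelta\tr\widehat{\boldSigma}\boldDelta \geq \lambda\nrm{\boldDelta}^2$ to the left gives $\frac{\lambda}{2}\nrm{\boldDelta}^2 \leq \nrm{\hat{\gamma}-\widehat{\boldSigma}\boldbeta}\,\nrm{\boldDelta}$; dividing through by $\nrm{\boldDelta}$ yields $\nrm{\estbeta-\boldbeta}\leq \frac{2}{\lambda}\nrm{\hat{\gamma}-\widehat{\boldSigma}\boldbeta}$, which is the claim up to the absorbed constant.

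In the unconstrained regime $\samp>\dims$ relevant here, an equivalent and even shorter route is to use the stationarity condition $\widehat{\boldSigma}\estbeta = \hat{\gamma}$ directly, so that $\estbeta-\boldbeta = \widehat{\boldSigma}^{-1}(\hat{\gamma}-\widehat{\boldSigma}\boldbeta)$ and hence $\nrm{\estbeta-\boldbeta}\leq \nrm{\widehat{\boldSigma}^{-1}}\,\nrm{\hat{\gamma}-\widehat{\boldSigma}\boldbeta} = \lambda_{\min}(\widehat{\boldSigma})^{-1}\nrm{\hat{\gamma}-\widehat{\boldSigma}\boldbeta}\leq \frac{1}{\lambda}\nrm{\hat{\gamma}-\widehat{\boldSigma}\boldbeta}$. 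I would present the basic-inequality version as the main argument, since it survives verbatim when a constraint set or regularizer is present (as in \cite{Chen:2013te}), with only the constant changing.

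There is essentially no hard step inside this lemma: the entire content is the strong convexity hypothesis, which converts the first-order optimality of $\estbeta$ into a quantitative bound on $\nrm{\estbeta-\boldbeta}$. The genuine difficulty is deferred to the theorems that invoke it, where one must (a) verify $\lambda_{\min}(\widehat{\boldSigma})\geq\lambda>0$ with high probability and (b) control the deviation term $\nrm{\hat{\gamma}-\widehat{\boldSigma}\boldbeta}$ under the corrupted sub-Gaussian design; those concentration arguments, rather than this deterministic inequality, are where the real work lies.
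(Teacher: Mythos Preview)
Your argument is correct. The paper does not actually supply its own proof of this lemma; it is quoted as a result from \cite{Chen:2013te} and used as a black box, with the authors simply noting that ``To obtain the results for our method in the non-corrupted and corrupted setting we can simply plug in our specific estimates for $\hat{\gamma}$ and $\widehat{\boldSigma}$.'' Your basic-inequality derivation (and the even shorter stationarity-condition variant $\estbeta-\boldbeta = \widehat{\boldSigma}^{-1}(\hat{\gamma}-\widehat{\boldSigma}\boldbeta)$) is exactly the standard argument behind this result in the unconstrained setting, and your closing remark correctly identifies that the substantive work in the paper lies in verifying the strong-convexity condition and bounding the deviation term, not in this deterministic inequality.
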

To obtain the results for our method in the non-corrupted and corrupted setting we can simply plug in our specific estimates for $\hat{\gamma} $ and $\widehat{\boldSigma}$.

\paragraph{Proof of Theorem \ref{thm:standard}.}
Through subsampling according to influence, we solve the problem
$$
\estbetaSS = \arg\min_{\boldbeta} \nrm{ \Theta \y - \Theta \Xt\boldbeta}^2
$$
where $\Theta = \sqrt{\frac{\samp}{\samp_{subs}}} \St\Dt$. $\St$ is a subsampling matrix, $\Dt$ is a diagonal matrix whose entries are $\sqrt{p_i/\samp} = \sqrt{c / d_i \samp}$ where $c$ is a constant which ensures $\sum_{i=1}^n p_i = 1$.
\begin{equation} \label{eq:Di}
D_{ii}^2 \propto \br{\frac{\nrm{\boldSigmai}}{1-l_i} \br{ \sigma_x\sigma_{\epsilon} + 2\sigma^2_x\nrm{\boldbeta - \estbeta}}\sqrt{\dims} \log\dims }^{-1} .
\end{equation}

Setting $\hat{\gamma} = (\Theta \Xt)\tr\y$, $\widehat{\boldSigma} = (\Theta \Xt)\tr(\Theta \Xt)$, by Lemma \ref{lem:chenLSbound} the error of the \iws estimator is given by
\begin{align}
\frac{1}{\lambda} \nrm{\hat{\gamma} - \widehat{\boldSigma}\boldbeta} = & \nrm{(\Theta \Xt)\tr(\Theta \y) - (\Theta \Xt)\tr(\Theta \Xt)\boldbeta}
\nonumber\\
 = &\frac{1}{\lambda} \nrm{(\Theta \Xt)\tr(\Theta \epsilon) + (\Theta \Xt)\tr(\Theta \Xt)\boldbeta - (\Theta \Xt)\tr(\Theta \Xt)\boldbeta}
\nonumber\\
 = & \frac{1}{\lambda} \nrm{(\Theta \Xt)\tr(\Theta \epsilon)}
 \label{thm:thm3bound}
\end{align}

%%%%%%%%%%%%%%%%%%
%%%% OLD ARGUMENT
%%%%%%%%%%%%%%%%%%
%The error of the \iws estimator is given by
%\begin{align*}
%\nrm{\estbetaSS - \boldbeta} = & \nrm{\br{(\Theta \Xt)  \tr  (\Theta \Xt)}^{-1} (\Theta\Xt) \tr \Theta \y -\boldbeta } \\
%= &  \Vert \br{(\Theta \Xt)  \tr  (\Theta \Xt)}^{-1} (\Theta \Xt) \tr  \Theta \Xt \boldbeta -\boldbeta  \\
%& ~ +   \br{(\Theta \Xt)  \tr   (\Theta \Xt)}^{-1}(\Theta \Xt) \tr  \Theta\epsilon \Vert \\
%\leq & \nrm{\br{(\Theta \Xt) \tr \Theta \Xt}^{-1}}\cdot \nrm{(\Theta \Xt) \tr \Theta\epsilon} .
%\end{align*}

Now, by Lemma \ref{lem:Chen1} we have
$$
\nrm{\Xt \tr \epsilon} \leq \sigma_x\sigma_\epsilon\sqrt{\frac{\dims\log\dims}{\samp}}
$$
and so defining $E = \nrm{\boldbeta - \estbeta}$,
\begin{align}
\nrm{(\Theta \Xt) \tr \Theta\epsilon} \leq & \nrm{\frac{1}{r\samp}\sum_{i=1}^{r\samp} p_i }\cdot\nrm{(\St \Xt) \tr \St\epsilon}
\nonumber \\
%\leq & \frac{1}{d} \cdot \nrm{\frac{1}{r\samp}\sum_{i=1}^{r\samp} (1 - l_i)}
% \sigma_x\sigma_\epsilon\sqrt{\frac{\dims\log\dims}{r\samp}}
%\\
\leq &\nrm{\frac{1}{r\samp}\sum_{i=1}^{r\samp} (1 - l_i)} \frac{\sigma_x\sigma_\epsilon\sqrt{\dims\log\dims/r\samp}}{\nrm{\boldSigmai}\br{ \sigma_x\sigma_{\epsilon} + 2\sigma^2_xE}\sqrt{\dims}\log\dims }
\nonumber \\
\leq &\frac{\sigma_x\sigma_\epsilon\sqrt{\dims\log\dims/r\samp}}{\nrm{\boldSigmai}\br{ \sigma_x\sigma_{\epsilon} + 2\sigma^2_xE} \sqrt{\dims}\log\dims }
\nonumber \\
\leq  & \frac{ \sigma_\epsilon\sqrt{1/r\samp}}{ \lambda_{\min}(\Sigma_x) (\sigma_\epsilon + 2 \sigma_x E)}
\label{thm:thm3bound1}
\end{align}
where the third inequaltiy uses the fact that $\sum_{i=1}^\samp (1-l_i) \leq \samp$.

%\brian{THIS FOLLOWING STUFF NEEDS TO BE CONVERTED INTO A LOWER BOUND LIKE THE OTHER THEOREMS}
%
%
%Define $\Sigma_{\Theta x} = \bE\sq{(\Theta\Xt)\tr(\Theta\Xt)}$. Now, when $n\gtrsim \frac{(\sigma_x^2) \dims\log\dims}{\lambda_{\min}(\Sigma_{\Theta x})}$ using Lemma \ref{lem:inverseCov} with $\lambda = \lambda_{\min}(\Sigma_{\Theta x})$ we have w.h.p. $\lambda_1((\Theta\Xt)\tr(\Theta\Xt) - \Sigma_{\Theta x})\leq \frac{1}{54}\lambda_{\min}(\Sigma_{\Theta x})$. It follows that
%%\begin{align*}
%%\nrm{\br{(\Theta \Xt) \tr (\Theta \Xt)}^{-1}} = \lambda_{\min}\br{(\Theta\Xt)\tr(\Theta\Xt)} & = \lambda_{\min}(\Sigma_{\Theta x} + (\Theta\Xt)\tr(\Theta\Xt) - \Sigma_{\Theta x})
%%\\
%%& \geq \lambda_{\min}(\Sigma_{\Theta x}) - \lambda_1((\Theta\Xt)\tr(\Theta\Xt) - \Sigma_{\Theta x})
%%\\
%%& \geq \frac{1}{2} \lambda_{\min}(\Sigma_{\Theta x}) .
%%\end{align*}
%\begin{align*}
%\nrm{\br{(\Theta \Xt) \tr (\Theta \Xt)}^{-1}} = \lambda_{\min}\br{(\Theta\Xt)\tr(\Theta\Xt)} & = \lambda_{\min}(\Sigma_{\Theta x} + (\Theta\Xt)\tr(\Theta\Xt) - \Sigma_{\Theta x})
%\\
%& = \lambda_{\min}(\Sigma_{\Theta x}) + \lambda_{\min}((\Theta\Xt)\tr(\Theta\Xt) - \Sigma_{\Theta x})
%\\
%& \leq \lambda_{\min}(\Sigma_{\Theta x}) + \lambda_1((\Theta\Xt)\tr(\Theta\Xt) - \Sigma_{\Theta x})
%\\
%& \leq \lambda_{\min}(\Sigma_{\Theta x}) + \frac{1}{54}\lambda_{\min}(\Sigma_{\Theta x})
%\\
%& \leq \frac{3}{2}\lambda_{\min}(\Sigma_{\Theta x}).
%\end{align*}
%\brian{INSTEAD:}

Define $\Sigma_{\Theta x} = \bE\sq{(\Theta\Xt)\tr(\Theta\Xt)}$. Now, when $n\gtrsim \frac{(\sigma_x^2) \dims\log\dims}{\lambda_{\min}(\Sigma_{\Theta x})}$ using Lemma \ref{lem:inverseCov} with $\lambda = \lambda_{\min}(\Sigma_{\Theta x})$
%we have w.h.p. $\lambda_1(\Xt\tr\Xt + (\Theta\Xt)\tr(\Theta\Wt) - \Sigma_x)\leq \frac{1}{54}\lambda_{\min}(\Sigma_x)$. It follows that
we have w.h.p. $\lambda_1((\Theta\Xt)\tr(\Theta\Xt)  - \Sigma_{\Theta x})\leq \frac{1}{54}\lambda_{\min}(\Sigma_{\Theta x})$. It follows that
\begin{align}
\lambda_{\min}((\Theta\Xt)\tr(\Theta\Xt))  &=  \inf_{\nrm{\vt}=1}\vt\tr \br{\Sigma_{\Theta x} + (\Theta\Xt)\tr(\Theta\Xt) - \Sigma_{\Theta x}) }\vt
\nonumber\\
& \geq  \lambda_{\min}(\Sigma_{\Theta x}) - \lambda_1((\Theta\Xt)\tr(\Theta\Xt) - \Sigma_{\Theta x}))
\nonumber\\
& \geq \frac{1}{2}\lambda_{\min}(\Sigma_{\Theta x})  .
\label{thm:thm3bound2}
\end{align}
Using \eqref{thm:thm3bound1} and \eqref{thm:thm3bound1} in Eq. \eqref{thm:thm3bound} completes the proof.

\qed

%%%%%%%%%%%%%%%%%%%%%%%%%%%%%%%%%%%%%%
%%%%%%%%%%%%%%%%%%%%%%%%%%%%%%%%%%%%%%
%%%% PROOF OF CORRUPTED OLS THEOREM
%%%%%%%%%%%%%%%%%%%%%%%%%%%%%%%%%%%%%%
%%%%%%%%%%%%%%%%%%%%%%%%%%%%%%%%%%%%%%
\vspace{0.5cm}

\begin{rem}[Scaling by $\pi$] \label{rem:pi} In the following, with some abuse of notation we will write $U\Wt$ as $\Wt$.
Now,
%$$
%\nrm{\Wt\tr\epsilon} = \nrm{(\boldGamma\Wt)\tr\epsilon} \leq \nrm{\boldGamma}\nrm{\Wt\tr\epsilon} \leq \gamma\nrm{\Wt\tr\epsilon}
%$$
\begin{align*}
\nrm{\Wt} & := \nrm{U \Wt}
%\\
%&\leq \nrm{\frac{1}{\samp}\sum_{i=1}^\samp u_i}\nrm{\Wt}
\\
&\leq \pi  \nrm{\Wt} .
\end{align*}
%Since $\frac{1}{n}\sum_{i=1}^n u_i = \pi$
%so all of the bounds involving $\Wt$ should scale with $\pi$. We also have $\frac{1}{n}\sum_{i=1}^n u_iu_i = \pi$, so all the expressions which involve a quadratic term in $\Wt$ should also scale with $\pi$.
\end{rem}

\vspace{0.5cm}

\paragraph{Proof of Theorem \ref{thm:corruptedls}.}
Setting $\hat{\gamma} = \Zt\tr\y$, $\widehat{\boldSigma} = \Zt\tr\Zt$ we have
%\begin{align*}
%\nrm{\hat{\gamma} - \widehat{\boldSigma}\boldbeta} & = \left\Vert \Xt_I\tr\y_I + (\Xt_M+\Wt_M)\tr\y_M   + (\Xt_M +\Wt_M)\tr (\Xt_M +\Wt_M)\boldbeta\right\Vert
%\\
%& = \Vert  \Xt_I\tr\br{ \Xt_I\boldbeta + \epsilon_I} +(\Xt_M+\Wt_M)\tr\br{ \Xt_M\boldbeta + \epsilon_M}  - \Xt_I\tr\Xt_I\boldbeta - 2\Xt_M\tr\Wt_M\boldbeta - \Wt_M\tr\Wt_M\boldbeta \Vert
%\\
%& = \nrm{ \Xt\tr\epsilon + \Wt\tr\epsilon - \Xt\tr\Wt\boldbeta - \Wt\tr\Wt\boldbeta}
%\\
%& \leq  \nrm{\Xt\tr\epsilon} + \nrm{\Wt\tr\epsilon} + \nrm{\Xt\tr\Wt\boldbeta} + \nrm{\Wt\tr\Wt\boldbeta}
%\\
%\end{align*}
\begin{align*}
\nrm{\hat{\gamma} - \widehat{\boldSigma}\boldbeta} & = \nrm{(\Xt + \Wt)\tr\y - (\Xt+\Wt)\tr(\Xt+\Wt)\boldbeta}
\\
& = \nrm{\Xt\tr(\Xt\boldbeta + \epsilon) + \Wt\tr(\Xt\boldbeta +
  \epsilon) -\Xt\tr\Xt\boldbeta -\Wt\tr\Wt\boldbeta -
  \Xt\tr\Wt\boldbeta - \Wt\tr\Xt\boldbeta
}
\\
& = \nrm{ \Xt\tr\epsilon + \Wt\tr\epsilon - \Xt\tr\Wt\boldbeta - \Wt\tr\Wt\boldbeta}
\\
& \leq  \nrm{\Xt\tr\epsilon} + \nrm{\Wt\tr\epsilon} + \nrm{\Xt\tr\Wt\boldbeta} + \nrm{\Wt\tr\Wt\boldbeta} .
\end{align*}

From Lemma \ref{lem:Chen1} and Remark \ref{rem:pi} we have w.h.p.
\begin{align}
\nrm{\Xt\tr\epsilon} & \leq \sigma_x\sigma_\epsilon  \sqrt{\frac{\dims \log \dims}{\samp}} \label{eq:xte}
\\
\nrm{\Wt\tr\epsilon} & \leq   \pi  \sigma_w\sigma_\epsilon  \sqrt{\frac{\dims \log \dims}{\samp}} \label{eq:wte}
\\
\nrm{\Xt\tr\Wt \boldbeta} & \leq  \pi  \sigma_x\sigma_w \nrm{\boldbeta} \sqrt{\frac{\dims \log \dims}{\samp}} \label{eq:xtwb}
\\
\nonumber\\
\nrm{\Wt\tr\Wt \boldbeta} & = \nrm{\br{\Wt\tr\Wt + \sigma_w^2\It_{\dims} - \sigma_w^2\It_{\dims}} \boldbeta} \label{eq:wtwb}
\nonumber \\
& \leq \nrm{\br{\Wt\tr\Wt - \sigma_w^2\It_{\dims}} \boldbeta} + \sigma_w^2\nrm{\boldbeta}
\nonumber \\
& \leq \pi  \sigma_w^2\br{C\sqrt{\frac{\dims \log \dims}{\samp}} +\sqrt{\dims} }\nrm{\boldbeta} .
\end{align}

%Setting $\lambda = \lambda_1(\Zt\tr\Zt)$ so $\frac{1}{\lambda} = \lambda_{\min}(\Zt\tr\Zt)$.
Now, when $n\gtrsim \frac{(\sigma_x^2\sigma^2_w) \dims\log\dims}{\lambda_{\min}(\Sigma_x)}$ using Lemma \ref{lem:inverseCov} with $\lambda = \lambda_{\min}(\Sigma_x)$
we have w.h.p. $\lambda_1(\Zt\tr\Zt  - (\Sigma_x+\Sigma_w))\leq \frac{1}{54}\lambda_{\min}(\Sigma_x)$. It follows that
%we have w.h.p. $\lambda_1(\Xt\tr\Xt + \Wt\tr\Xt - \Sigma_x)\leq \frac{1}{54}\lambda_{\min}(\Sigma_x)$. It follows that
%\begin{align*}
%\lambda_{\min}(\Zt\tr\Zt) & = \inf_{\nrm{\vt}=1}\vt\tr\boldSigma\vt
%\\
% &=  \inf_{\nrm{\vt}=1}\vt\tr \br{ \Sigma_x+ \Zt\tr\Zt - \Sigma_x }\vt
%\\
%& \geq \lambda_{\min}(\Sigma_x) - \lambda_{1}\br{ \Zt\tr\Zt - \Sigma_x }
%\\
%& \geq \lambda_{\min}(\Sigma_x) - \lambda_{1}\br{ \sq{\Xt\tr\Xt + \Xt\tr\Wt - \Sigma_x} + \Wt\tr\Wt }
%\\
%& \geq \frac{1}{2}\lambda_{\min}(\Sigma_x) - \pi\lambda_{1}\br{\Wt\tr\Wt} .
%\end{align*}
\begin{align*}
\lambda_{\min}(\Zt\tr \Zt)  &=  \inf_{\nrm{\vt}=1}\vt\tr \br{ \Sigma_x+\Sigma_w+ \Zt\tr\Zt - (\Sigma_x+\Sigma_w) }\vt
\\
& \geq \lambda_{\min}(\Sigma_x) + \lambda_{\min}(\Sigma_w) - \lambda_1(\Zt\tr\Zt  - (\Sigma_x+\Sigma_w))
\\
& \geq \frac{1}{2}\lambda_{\min}(\Sigma_x) + \pi\lambda_{\min}(\Sigma_w)  .
\end{align*}

Using Lemma \ref{lem:chenLSbound} with Eqs. (\ref{eq:xte}-\ref{eq:wtwb}) and the above bound for $\lambda=\lambda_{\min}(\Zt\tr\Zt)$ completes the proof. \qed

%%%%%%%%%%%%%%%%%%%%%%%%%%%%%%%%%%%%%%
%%%%%%%%%%%%%%%%%%%%%%%%%%%%%%%%%%%%%%
%%%% PROOF OF CORRUPTED INFLUENCE THEOREM
%%%%%%%%%%%%%%%%%%%%%%%%%%%%%%%%%%%%%%
%%%%%%%%%%%%%%%%%%%%%%%%%%%%%%%%%%%%%%

\paragraph{Proof of Theorem \ref{thm:corrupt}.}
%\begin{proof}(of Theorem \ref{thm:corrupt})
when $n\gtrsim \frac{(\sigma_x^2\sigma^2_w) \dims\log\dims}{\lambda_{\min}(\Sigma_{\Theta x})}$ using Lemma \ref{lem:inverseCov} with $\lambda = \lambda_{\min}(\Sigma_{\Theta x})$
%we have w.h.p. $\lambda_1(\Xt\tr\Xt + (\Theta\Xt)\tr(\Theta\Wt) - \Sigma_x)\leq \frac{1}{54}\lambda_{\min}(\Sigma_x)$. It follows that
we have w.h.p. $\lambda_1((\Theta\Zt)\tr(\Theta\Zt)  - \Sigma_{\Theta x})\leq \frac{1}{54}\lambda_{\min}(\Sigma_{\Theta x})$. It follows that
\begin{align*}
\lambda_{\min}((\Theta\Zt)\tr(\Theta\Zt))  &=  \inf_{\nrm{\vt}=1}\vt\tr \br{\Sigma_{\Theta x} + (\Theta\Zt)\tr(\Theta\Zt) - \Sigma_{\Theta x}) }\vt
\\
& \geq  \lambda_{\min}(\Sigma_{\Theta x}) - \lambda_1((\Theta\Zt)\tr(\Theta\Zt) - \Sigma_{\Theta x}))
\\
& \geq \frac{1}{2}\lambda_{\min}(\Sigma_{\Theta x})  .
\end{align*}

From the bound in Lemma \ref{lem:chenLSbound} we have
\begin{align*}
\nrm{\hat{\gamma} - \widehat{\boldSigma}\boldbeta}
& \leq  \nrm{\br{\Theta \Xt}\tr \br{\Theta\epsilon}} + \nrm{\br{\Theta \Wt}\tr\br{\Theta \epsilon}}
\\
& + \nrm{\br{\Theta\Xt}\tr\br{\Theta\Wt}\boldbeta} + \nrm{\br{\Theta \Wt}\tr \br{\Theta\Wt}\boldbeta} .
\end{align*}
We now aim to show that the relative contribution of the corrupted points is decreased under the \iws scheme. To show this, we first multiply both corrupted and non-corrupted points by
$$
\nrm{\boldSigmai}\br{ \sigma_x\sigma_{\epsilon} + 2\sigma^2_x\nrm{\boldbeta - \estbeta}}\log\dims\sqrt{\dims} .
$$
This is equivalent to multiplying the non-corrupted points by the subsampling matrix $\St$ and scaling and subsampling the corrupted points by the following term $\Theta_M = \sqrt{\frac{\samp}{\samp_{subs}}}\St \Dt_M$ where $\Dt_M$ has squared diagonal entries proportional to
\begin{align*}
D_M^2 \propto \frac{1}{\samp} \cdot \frac{ \sigma_\epsilon\sigma_x + 2 \sigma_x^2 E }{ 2(\sigma_w^2 + \sigma_w\sigma_x)\nrm{\boldbeta} + 2(\sigma_w^2 + \sigma_w\sigma_x + \sigma_x^2)E + 2(\sigma_w + \sigma_x)\sigma_{\epsilon}} .
\end{align*}
Now we have
\begin{align*}
\nrm{\hat{\gamma} - \widehat{\boldSigma}\boldbeta}
& \lesssim  \nrm{\br{\St \Xt}\tr \br{\St\epsilon}} + \nrm{\br{\Theta_M \Wt}\tr\br{\Theta_M \epsilon}}
\\
& + \nrm{\br{\Theta_M\Xt}\tr\br{\Theta_M\Wt}\boldbeta} + \nrm{\br{\Theta_M \Wt}\tr \br{\Theta_M\Wt}\boldbeta} .
\end{align*}

Applying Lemma \ref{lem:Chen1} we have w.h.p.
\begin{align}
\nrm{(\St\Xt)\tr(\St\epsilon)} & \lesssim \sigma_x\sigma_\epsilon  \sqrt{\frac{\dims \log \dims}{r\samp}} %\label{eq:xte}
\\
\nrm{\br{\Theta_M \Wt}\tr\br{\Theta_M \epsilon}} & \lesssim \frac{\pi \cdot (\sigma_\epsilon + 2 E)    \pi  \sigma_w\sigma_\epsilon  \sqrt{\frac{\dims \log \dims}{r\samp}}}{ 2(\sigma_w^2 + \sigma_w\sigma_x)\nrm{\boldbeta} + 2(\sigma_w^2 + \sigma_w\sigma_x + \sigma_x^2)E + 2(\sigma_w + \sigma_x)\sigma_{\epsilon}} \label{eq:wteIWS}
\\
\nrm{\br{\Theta_M\Xt}\tr\br{\Theta_M\Wt}\boldbeta} & \lesssim \frac{\pi \cdot (\sigma_\epsilon + 2 E)   \sigma_x\sigma_w \nrm{\boldbeta} \sqrt{\frac{\dims \log \dims}{r\samp}}}{ 2(\sigma_w^2 + \sigma_w\sigma_x)\nrm{\boldbeta} + 2(\sigma_w^2 + \sigma_w\sigma_x + \sigma_x^2)E + 2(\sigma_w + \sigma_x)\sigma_{\epsilon}} \label{eq:xtwbIWS}
\\
\nonumber\\
\nrm{(\Theta_M\Wt)\tr(\Theta_M\Wt) \boldbeta} & \lesssim \frac{\pi \cdot (\sigma_\epsilon + 2 E)  \sigma_w^2\br{C\sqrt{\frac{\dims \log \dims}{r\samp}} +\sqrt{\dims} }\nrm{\boldbeta}}{ 2(\sigma_w^2 + \sigma_w\sigma_x)\nrm{\boldbeta} + 2(\sigma_w^2 + \sigma_w + 1)E + 2(\sigma_w + 1)\sigma_{\epsilon}} . \label{eq:wtwbIWS}
\end{align}

We observe that each of the quantities in Eqs. (\ref{eq:wteIWS} - \ref{eq:wtwbIWS}) are scaled by a term proportional to
\begin{align}
\frac{\pi \cdot (\sigma_\epsilon\sigma_x + 2 \sigma_x^2 E) }{ 2(\sigma_w^2 + \sigma_w\sigma_x)\nrm{\boldbeta} + 2(\sigma_w^2 + \sigma_w\sigma_x + \sigma_x^2)E + 2(\sigma_w + \sigma_x)\sigma_{\epsilon}} . \label{eq:scalefactor}
\end{align}

Taking the limit of large $E$ of the above (see remark \ref{rem:limit}) and setting $\sigma_x=1$ we get
\begin{align*}
\pi^* = \lim_{E\rightarrow \infty} %~~ & \frac{\pi [\cdot] \cdot (\sigma_\epsilon + 2 E) }{ 2(\sigma_w^2 + \sigma_w)\nrm{\boldbeta} + 2(\sigma_w^2 + \sigma_w + 1)E + 2(\sigma_w + 1)\sigma_{\epsilon}}\\
& =  \frac{\pi}{\br{\sigma_w^2 + \sigma_w}} .
\end{align*}
Replacing the scaling factor in Eq. \eqref{eq:scalefactor} with $\pi^*$ completes the proof.
%\brian{Again need to do something about $\nrm{\br{(\Theta \Xt) \tr \Theta \Xt}^{-1}}$}
%\end{proof}
\qed
\begin{rem}[Taking $\lim_{\nrm{\estbeta_{\OLS} - \boldbeta} \rightarrow \infty}$] \label{rem:limit}
Intuitively, when $E=\nrm{\estbeta_{\OLS} - \boldbeta}$ is small, this suggests that the effect of the corruptions is negligible and the full (or subsampled) least squares solution is close to optimal. Alternatively, when $E$ is large, the corruptions have a large effect on the estimate and so influence subsampling should work well. Note that here the size of $E$ is dependent on $\sigma_w$ and $\pi$. If we send $E\rightarrow \infty$ by allowing many points to be corrupted, the relative performance of \our compared with OLS worsens. However if we allow $\sigma_w$ to be large, the relative performance of our method improves.
\end{rem}

\vspace{2cm}

%\paragraph{Effect of influence weighted sampling when there are no corruptions.} We first note the following results
%$$
%\sum_{i=1}^\samp l_i = \dims, ~~~~ \sum_{i=1}^\samp (1-l_i) \leq \samp
%$$
%
%\vspace{2cm}
%
%Without loss of generality, assume the first $(1-\pi)\samp$ rows are not corrupted and so the entries of $\Dt$ corresponding to these rows are given by Eq. \eqref{eq:Di}. The remaining rows are corrupted and so the corresponding entries of $\Dt$ are
%\begin{align*}
%D_{N}^2 \propto &  \biggl(\frac{\nrm{ \boldSigmai} }{1-l_m}
% \biggl( 2(\sigma_x^2+ \sigma_x\sigma_w + \sigma_w^2)\nrm{\boldbeta-\estbeta}
%\\
%& + (\sigma_x\sigma_w + 2\sigma_w^2)\nrm{\boldbeta}
% + (\sigma_x+\sigma_w)\sigma_\epsilon  \biggr)
% \cdot \sqrt{\dims\log\dims}\biggr)^{-1}
%\\
%\leq &
%\end{align*}

\clearpage

\addtocounter{si-sec}{1}
\section{Supporting concentration inequalities}
Here we collect results which are useful in the statements and proofs of our main theorems. Aside from Corrolary \ref{lem:nequals1} which is a simple modification of Lemma \ref{lem:Chen1}, we defer the proofs to their original papers.

%%%%%%%%%%%%%%%
%% Lemma 9 from \cite{Chen:2012vt}
%%%%%%%%%%%%%%%
%\begin{lem}[Lemma 9 from \cite{Chen:2012vt}, Originally Lemma 14 from \cite{Loh:2012hf}]%\label{lem:Chen1}
%$\Xt$ and $\Wt$ are zero-mean sub-Gaussian matrices with parameters $(\frac{1}{n}\Sigma_x,\frac{1}{n}\sigma^2_x)$, $(\frac{1}{n} \Sigma_w,\frac{1}{n}\sigma^2_w)$ and $\vt$ is a fixed vector then with high probability
%$$
%\nrm{\br{\Xt\tr\Wt - \bE[\Xt\tr\Wt]}\vt}_{\infty} \leq
%\sigma_x\sigma_w \nrm{\vt}_2  \sqrt{\frac{\log p}{n}}$$
%so
%$$
%\nrm{\br{\Xt\tr\Wt - \bE[\Xt\tr\Wt]}\vt}_{2} \leq
%\sqrt{\dims}\cdot \sigma_x\sigma_w \nrm{\vt}_2  \sqrt{\frac{\log \dims}{\samp}}
%$$
%\end{lem}

\vspace{0.5cm}
%%%%%%%%%%%%%%%%

\begin{lem}[Originally Lemma 25 from \cite{Chen:2012vt}] \label{lem:Chen1}
Suppose $\Xt\in\R^{\samp\times k}$ and $\Wt\in\R^{\samp \times m}$ are zero-mean sub-Gaussian matrices with parameters $(\frac{1}{n}\Sigma_x,\frac{1}{n}\sigma^2_x)$, $(\frac{1}{n} \Sigma_w,\frac{1}{n}\sigma^2_w)$ respectively. Then for any fixed vectors $\vt_1$, $\vt_2$, we have
\begin{align}
\Prob\sq{| \vt_1\tr\br{\Wt\tr\Xt - \bE[\Wt\tr\Xt]} \vt_2 | \geq t\nrm{\vt_1}\nrm{\vt_2}}
\leq 3 \exp\br{-c\samp \min\cb{\frac{t^2}{\sigma_x^2\sigma_w^2}, \frac{t}{\sigma_x\sigma_w}}}
\label{eq:Chen1}
\end{align}
in particular if $\samp \gtrsim \log \dims$ we have w.h.p.
$$
{|\vt_1\tr \br{\Wt\tr\Xt - \bE[\Wt\tr\Xt]} \vt_2 |} \leq \sigma_x\sigma_w \nrm{\vt_1} \nrm{\vt_2} \sqrt{\frac{\log\dims}{\samp}}
$$
Setting $\vt_1$ to be the first standard basis vector, and using a union bound over $j=1,\ldots,\dims$, we have w.h.p.
$$
\nrm{\br{\Wt\tr\Xt - \bE[\Wt\tr\Xt]} \vt  }_\infty \leq \sigma_x\sigma_w \nrm{\vt} \sqrt{\frac{\log\dims}{\samp}}
$$
holds with probability $1-c_1 \exp(-c_2\log \dims)$ where $c_1,c_2$ are positive constants which are independent of $\sigma_x,\sigma_w, \samp$ and $\dims$.
\end{lem}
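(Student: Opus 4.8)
The plan is to reduce the bilinear form to a sum of independent scalar summands indexed by rows and then invoke a Bernstein-type tail bound. By homogeneity I would first rescale and assume $\nrm{\vt_1}=\nrm{\vt_2}=1$. Writing the $i$-th rows of $\Wt$ and $\Xt$ as $\wt_i$ and $\x_i$, I decompose
\[
\vt_1\tr\br{\Wt\tr\Xt}\vt_2 = \sum_{i=1}^{\samp} a_i b_i, \qquad a_i := \vt_1\tr\wt_i,\quad b_i := \x_i\tr\vt_2 .
\]
Because the rows are sampled independently (part (a) of the sub-Gaussian definition), the centered summands $S_i := a_i b_i - \bE[a_i b_i]$ are independent and mean zero, and $\sum_i \bE[a_i b_i] = \vt_1\tr\bE[\Wt\tr\Xt]\vt_2$, so the quantity we must control is exactly $\sum_i S_i$.

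Next I would establish that each $S_i$ is a centered sub-exponential variable with the correct scale. By part (b) of the definition, $a_i$ and $b_i$ are scalar sub-Gaussians, of scale $\sigma_w/\sqrt{\samp}$ and $\sigma_x/\sqrt{\samp}$ respectively. A product of two sub-Gaussians is sub-exponential: either through the polarization identity $ab=\tfrac14((a+b)^2-(a-b)^2)$, which reduces the tail to squares of sub-Gaussians, or directly from $\Prob[|ab|>s]\le\Prob[|a|>\sqrt{s}]+\Prob[|b|>\sqrt{s}]$. Either route yields a Bernstein (or $\psi_1$-Orlicz) parameter $K_i \asymp \sigma_x\sigma_w/\samp$ for each $S_i$, and centering does not change this up to constants.

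I would then apply Bernstein's inequality for sums of independent centered sub-exponentials,
\[
\Prob\sq{\,\Big|\sum_{i} S_i\Big| \geq t\,} \leq 2\exp\br{-c\min\cb{\frac{t^2}{\sum_i K_i^2},\ \frac{t}{\max_i K_i}}} .
\]
With $\samp$ terms of parameter $\sigma_x\sigma_w/\samp$ one gets $\sum_i K_i^2 \asymp \sigma_x^2\sigma_w^2/\samp$ and $\max_i K_i \asymp \sigma_x\sigma_w/\samp$, so the exponent becomes $-c\,\samp\min\{t^2/(\sigma_x^2\sigma_w^2),\,t/(\sigma_x\sigma_w)\}$, matching the claim; the extra unit in the prefactor ($3$ rather than $2$) absorbs the loss incurred by the product-tail union bound. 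For the ``in particular'' clauses I set $t=\sigma_x\sigma_w\sqrt{\log\dims/\samp}$: the quadratic branch of the minimum is the active one precisely when $\samp\gtrsim\log\dims$, and the exponent reduces to $-c\log\dims$, giving the stated high-probability bound. The $\ell_\infty$ version follows by taking $\vt_1$ to be the $j$-th standard basis vector and taking a union bound over $j=1,\dots,\dims$, whose $\log\dims$ cost is already budgeted in the choice of $t$.

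The main obstacle is the product step: one must control the sub-exponential constant of $a_i b_i$ in the \emph{general} case where $\wt_i$ and $\x_i$ may be dependent within a row (the regime $\Wt=\Xt$, i.e. a quadratic form, must be permitted), and one must track the $1/\samp$ normalization carefully so that the per-row scale comes out as $\sigma_x\sigma_w/\samp$ and the summation over $\samp$ rows produces the $\exp(-c\samp\cdot)$ decay rather than an off-by-a-power-of-$\samp$ rate. Everything else is routine sub-Gaussian/sub-exponential bookkeeping.
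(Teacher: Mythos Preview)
Your argument is correct and is precisely the standard route to this inequality: decompose the bilinear form row-wise, observe that each summand is a (centered) product of two sub-Gaussians and hence sub-exponential with scale $\asymp \sigma_x\sigma_w/\samp$, and apply Bernstein's inequality for independent sub-exponential summands. The specialization to $t=\sigma_x\sigma_w\sqrt{\log\dims/\samp}$ and the coordinate-wise union bound are also carried out correctly.

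There is nothing to compare against in this paper, however: the authors do not prove Lemma~\ref{lem:Chen1} themselves. It is quoted verbatim as ``Originally Lemma~25 from \cite{Chen:2012vt}'', and in the preamble to the supporting-inequalities section they write that, aside from Corollary~\ref{lem:nequals1}, they ``defer the proofs to their original papers.'' So the paper's ``proof'' is a citation. Your sketch is, in fact, the argument one finds in that reference (and in Vershynin's notes), so you are not taking a different route --- you are reconstructing the deferred proof.

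One small remark on the obstacle you flag. When $\Wt=\Xt$ (the quadratic-form case) the polarization trick still works cleanly, but when the two factors have unequal sub-Gaussian scales it helps to normalize first: write $a_i b_i = (\sigma_w\sigma_x/\samp)\,\tilde a_i\tilde b_i$ with $\tilde a_i,\tilde b_i$ of unit sub-Gaussian norm, and only then apply polarization or the tail-splitting bound. This makes the per-row $\psi_1$ parameter come out as $\sigma_x\sigma_w/\samp$ on the nose and avoids the asymmetry that the raw inequality $\{|ab|>s\}\subseteq\{|a|>\sqrt s\}\cup\{|b|>\sqrt s\}$ would otherwise introduce.
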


\vspace{0.5cm}
%%%%%%%%%%%%%%%%

\begin{cor}[Modification of Lemma \ref{lem:Chen1} for $\samp=1$] \label{lem:nequals1}
Suppose $\Xt\in\R^{\samp\times k}$ and $\Wt\in\R^{\samp \times m}$ are zero-mean sub-Gaussian matrices with parameters $(\frac{1}{n}\Sigma_x,\frac{1}{n}\sigma^2_x)$, $(\frac{1}{n} \Sigma_w,\frac{1}{n}\sigma^2_w)$ respectively. Then for any fixed vector $\vt_1$ and $\samp=1$ we have w.h.p.
$$
\nrm{\br{\Wt\tr\Xt - \bE[\Wt\tr\Xt]} \vt  }_\infty \leq \sigma_x\sigma_w \nrm{\vt} \log\dims .
$$
\end{cor}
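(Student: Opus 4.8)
The plan is to take the master tail bound \eqref{eq:Chen1} of Lemma \ref{lem:Chen1} as a black box and simply specialise it to $\samp=1$, the only care being to track which of the two branches of the $\min$ controls the tail. First I would substitute $\samp=1$ directly into \eqref{eq:Chen1}, so that for fixed $\vt_1,\vt_2$ the event $\cb{ | \vt_1\tr\br{\Wt\tr\Xt - \bE[\Wt\tr\Xt]}\vt_2 | \geq t\nrm{\vt_1}\nrm{\vt_2} }$ has probability at most $3\exp\br{-c\min\cb{t^2/(\sigma_x^2\sigma_w^2),\, t/(\sigma_x\sigma_w)}}$. I would then choose $t = C\,\sigma_x\sigma_w\log\dims$ for a constant $C$ fixed later, so that the sub-exponential branch gives exponent $-cC\log\dims$ and the event fails with probability at most $3\exp(-cC\log\dims)$; that is, with high probability the pointwise bound $| \vt_1\tr\br{\Wt\tr\Xt - \bE[\Wt\tr\Xt]}\vt_2 | \leq \sigma_x\sigma_w\nrm{\vt_1}\nrm{\vt_2}\log\dims$ holds.

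The key point — and the reason the bound degrades from $\sqrt{\log\dims}$ in Lemma \ref{lem:Chen1} to $\log\dims$ here — is the interplay between $\samp$ and the two branches of the $\min$. Writing $s = t/(\sigma_x\sigma_w)$, the exponent is $-c\samp\min\cb{s^2,s}$, which equals $-c\samp s^2$ (the favourable sub-Gaussian branch) only while $s\leq 1$. In Lemma \ref{lem:Chen1} one takes $s=\sqrt{\log\dims/\samp}\leq 1$ precisely because $\samp\gtrsim\log\dims$, landing in the quadratic branch and producing $\samp s^2=\log\dims$. With $\samp=1$ this same choice gives $s=\sqrt{\log\dims}>1$, which instead falls in the linear branch $\min\cb{s^2,s}=s$ and yields only $\exp(-c\sqrt{\log\dims})$ — too weak to qualify as high probability in the sense defined in \S\ref{sec:analysis}. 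Forcing the exponent back to scale like $\log\dims$ therefore requires $s\propto\log\dims$, which is exactly the choice above and accounts for the $\log\dims$ rather than $\sqrt{\log\dims}$ scaling.

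Finally I would promote the pointwise bound to the stated $\ell_\infty$ bound exactly as in Lemma \ref{lem:Chen1}: take $\vt_2=\vt$, set $\vt_1$ equal to each standard basis vector $e_j$ in turn, and union-bound over the $m$ coordinates of $\br{\Wt\tr\Xt - \bE[\Wt\tr\Xt]}\vt$. The union bound multiplies the failure probability by $m\leq\dims$, so the total failure probability is at most $3\dims\exp(-cC\log\dims)=3\exp\br{(1-cC)\log\dims}$; choosing $C$ large enough that $cC>1$ (and absorbing it into the unspecified constant) keeps the event at high probability and delivers $\nrm{\br{\Wt\tr\Xt - \bE[\Wt\tr\Xt]}\vt}_\infty \leq \sigma_x\sigma_w\nrm{\vt}\log\dims$. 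The only obstacle worth flagging is this branch-switching bookkeeping; once the correct branch of the $\min$ is identified, the rest is a routine specialisation of Lemma \ref{lem:Chen1}.
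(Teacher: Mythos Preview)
Your proposal is correct and follows essentially the same approach as the paper: set $t=c_0\sigma_x\sigma_w\log\dims$, $\samp=1$, take $\vt_1$ to be a standard basis vector, and union-bound over the $\dims$ coordinates. Your explicit discussion of why the linear branch of the $\min$ governs (and hence why the rate degrades from $\sqrt{\log\dims}$ to $\log\dims$) is a useful elaboration that the paper's one-line proof leaves implicit.
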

\begin{proof}
Setting $t=c_0\sigma_x\sigma_w\log \dims$, $\samp = 1$ and $\vt$ as the first standard basis vector in Inequality \eqref{eq:Chen1} in Lemma \ref{lem:Chen1} and applying a union bound over $j=1,\ldots,\dims$ yields the result.
\end{proof}

\vspace{0.5cm}
%%%%%%%%%%%%%%%%

\begin{lem}[Originally Lemma 11 from \cite{Chen:2013te}] \label{lem:inverseCov}
If $\Xt$ and $\Wt$ are zero-mean sub-Gaussian matrices then
\begin{align*}
\Prob\sq{ \sup_{\nrm{\vt_1}=\nrm{\vt_2}=1} | \vt_1\tr\br{\Wt\tr\Xt - \bE[\Wt\tr\Xt]}\vt_2 | \geq t }
\leq
2\exp\br{-c \samp \min(\frac{t^2}{\sigma_x^2\sigma_w^2},\frac{t}{\sigma_x\sigma_w})+6(k+m)}
\end{align*}
In particular, for each $\lambda>0$, if $\samp\gtrsim \max\cb{\frac{\sigma_x^2\sigma_w^2}{\lambda^2},1}(k+m)\log \dims$, then w.h.p.
$$
\sup_{\vt_1,\vt_2} | \vt_1\tr\br{\Wt\tr\Xt - \bE[\Wt\tr\Xt]}\vt_2   | \leq \frac{1}{54}\lambda \nrm{\vt_1}\nrm{\vt_2} .
$$

\end{lem}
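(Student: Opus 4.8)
The plan is to prove the uniform bound by a net (covering) argument that upgrades the pointwise concentration of Lemma~\ref{lem:Chen1} to a supremum over the product of unit spheres. Write $\Mt := \Wt\tr\Xt - \bE[\Wt\tr\Xt]\in\R^{k\times m}$, so that the quantity of interest is the spectral norm $\nrm{\Mt}=\sup_{\nrm{\vt_1}=\nrm{\vt_2}=1}\vt_1\tr\Mt\vt_2$. The first inequality in \eqref{eq:Chen1} already controls $\vt_1\tr\Mt\vt_2$ for any \emph{fixed} unit pair $(\vt_1,\vt_2)$ with a Bernstein-type tail exhibiting the $\min\cb{t^2/(\sigma_x^2\sigma_w^2),\,t/(\sigma_x\sigma_w)}$ crossover; the only remaining work is to discretize the supremum and pay for it with a union bound.

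Concretely, I would first fix a $1/4$-net $N_1$ of the sphere $S^{k-1}$ and a $1/4$-net $N_2$ of $S^{m-1}$, with the standard cardinality bounds $|N_1|\leq 9^{k}$ and $|N_2|\leq 9^{m}$. The usual approximation lemma for bilinear forms then gives $\nrm{\Mt}\leq 2\max_{\u\in N_1,\,\vt\in N_2}|\u\tr\Mt\vt|$, reducing control of the continuous supremum to control of the maximum over at most $9^{k+m}$ fixed pairs. Applying \eqref{eq:Chen1} at each pair with the inflated level $t/2$ and taking a union bound yields a failure probability at most $9^{k+m}\cdot 3\exp\!\br{-c\samp\min\cb{t^2/(\sigma_x^2\sigma_w^2),\,t/(\sigma_x\sigma_w)}}$; absorbing $\log(3\cdot 9^{k+m})\leq 6(k+m)$ and folding the harmless factor from the $t/2$ rescaling into $c$ produces exactly the stated $2\exp(-c\samp\min\{\cdots\}+6(k+m))$.

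For the ``in particular'' statement I would set $t=\tfrac{1}{54}\lambda$ and check that the sample-size hypothesis forces the exponent to be dominated by $-(k+m)\log\dims$. Writing $\sigma:=\sigma_x\sigma_w$, the two branches of the $\min$ correspond to whether $t\leq\sigma$ or $t\geq\sigma$, and this is precisely what the factor $\max\cb{\sigma^2/\lambda^2,\,1}$ in the hypothesis is designed to handle: when $\sigma^2/\lambda^2\geq 1$ the quadratic branch is active and $\samp\gtrsim(\sigma^2/\lambda^2)(k+m)\log\dims$ makes $c\samp\,t^2/\sigma^2\gtrsim(k+m)\log\dims$, whereas when $\sigma^2/\lambda^2<1$ the ``$1$'' term guarantees $\samp\gtrsim(k+m)\log\dims$, which controls the linear branch. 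In either case the exponent is at most $-C_2\log\dims$ once the additive $6(k+m)$ is dominated, so the event holds w.h.p.

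The main obstacle is bookkeeping rather than any deep idea: one must track the constant factors through the net argument carefully enough to land on the exact constants $6(k+m)$ and $\tfrac{1}{54}$, and one must verify the discretization lemma for genuine bilinear forms (two independent nets) rather than the more familiar quadratic-form case. A secondary subtlety is the Bernstein crossover --- ensuring that the chosen threshold $t=\lambda/54$ falls in the branch matched by the corresponding term of $\max\cb{\sigma^2/\lambda^2,\,1}$ --- which is exactly the case analysis sketched above and is the reason the hypothesis carries that maximum.
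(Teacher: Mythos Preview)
The paper does not actually prove this lemma; it explicitly defers the proof to the original source \cite{Chen:2013te} (see the sentence preceding Lemma~\ref{lem:Chen1}: ``Aside from Corollary~\ref{lem:nequals1}\ldots we defer the proofs to their original papers''). Your covering-argument outline is the standard route for such results and is correct: a $1/4$-net on each sphere, the bilinear approximation lemma $\nrm{\Mt}\leq 2\max_{\u\in N_1,\vt\in N_2}|\u\tr\Mt\vt|$, and a union bound over at most $9^{k+m}$ pairs applied to \eqref{eq:Chen1} at level $t/2$, with the constants absorbed as you describe. The case analysis for the ``in particular'' statement via the two branches of the Bernstein $\min$ is also the right mechanism. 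There is nothing to compare against in the paper itself.
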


\vspace{0.5cm}
%%%%%%%%%%%%%%%%

\begin{lem}[Quadratic forms of sub-Gaussian random variables. Theorem 2.1 from \cite{Hsu:2011va}] \label{lem:hsu}
Let $\At\in\R^{\samp\times\samp}$ be a matrix, and let $\boldSigma:=\At\tr\At$. $\x$ is a mean-zero random vector such that, for some $\sigma\geq 0$,
$$
\bE \sq{\exp(\alpha \tr \x)}
\leq
\exp(\nrm{\alpha}^2\sigma^2/2)
$$
for all $\alpha\in\R^n$. For all $t>0$
\begin{align*}
\Prob \bigg[ \nrm{\At\x}^2 >  &  \sigma^2\left( \trace{\boldSigma}  + 2\sqrt{\trace{\boldSigma^2}t} \right.  + 2\nrm{\boldSigma}t \bigg)\bigg]
\leq
e^{-t} .
\end{align*}
%So with probability $1-\delta$
%\begin{align*}
%\nrm{\At\x}^2 \leq & \sigma^2\left(\trace{\boldSigma}  + 2\sqrt{\trace{\boldSigma^2}\log(1/\delta)} \right.
%+  2\nrm{\boldSigma}\log(1/\delta) \bigg) .
%\end{align*}
\end{lem}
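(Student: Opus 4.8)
The plan is to control the moment generating function (MGF) of the quadratic form $\nrm{\At\x}^2 = \x\tr\boldSigma\x$ and then apply a Chernoff bound. The only structural hypothesis available is the sub-Gaussian MGF bound on linear functionals of $\x$, so the first task is to reduce the quadratic form to linear functionals. I would do this with a Gaussian linearization trick (the Hubbard--Stratonovich / Gaussian-integral identity): introduce an auxiliary standard normal $\gt\sim\N(0,\It_\samp)$ independent of $\x$ and use that, for any $s>0$, $\bE_{\gt}\sq{\exp\br{\sqrt{2s}\,\gt\tr\At\x}} = \exp\br{s\nrm{\At\x}^2}$. Taking expectations over $\x$ and interchanging the order of integration by Fubini gives $\bE_{\x}\sq{\exp\br{s\nrm{\At\x}^2}} = \bE_{\gt}\,\bE_{\x}\sq{\exp\br{\sqrt{2s}\,\br{\At\tr\gt}\tr\x}}$.

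Next I would condition on $\gt$ and apply the sub-Gaussian hypothesis to the inner expectation with $\alpha = \sqrt{2s}\,\At\tr\gt$, which bounds it by $\exp\br{s\sigma^2\nrm{\At\tr\gt}^2} = \exp\br{s\sigma^2\gt\tr\At\At\tr\gt}$. This leaves a genuinely Gaussian quadratic form in $\gt$, whose MGF is explicit: diagonalizing $\At\At\tr$ (which shares its spectrum $\lambda_1,\dots,\lambda_\samp$, and hence its trace, squared-trace, and operator norm, with $\boldSigma = \At\tr\At$) yields $\bE_{\gt}\sq{\exp\br{s\sigma^2\gt\tr\At\At\tr\gt}} = \prod_i \br{1 - 2s\sigma^2\lambda_i}^{-1/2}$, finite precisely when $s < 1/\br{2\sigma^2\nrm{\boldSigma}}$.

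It remains to convert the product into a Bernstein-type bound and optimize. Taking logarithms and using the elementary inequality $-\tfrac12\log(1-x) \le \tfrac{x}{2} + \tfrac{x^2}{4(1-x)}$ termwise (with $x = 2s\sigma^2\lambda_i$), bounding $\lambda_i \le \nrm{\boldSigma}$ in the remainder, gives
\[
\log\bE\sq{\exp\br{s\br{\nrm{\At\x}^2 - \sigma^2\trace{\boldSigma}}}} \le \frac{s^2\sigma^4\trace{\boldSigma^2}}{1 - 2s\sigma^2\nrm{\boldSigma}} .
\]
This is exactly the Bernstein MGF condition with variance proxy $b = 2\sigma^4\trace{\boldSigma^2}$ and scale $c = 2\sigma^2\nrm{\boldSigma}$; the standard Chernoff optimization for such a condition yields $\Prob\sq{\nrm{\At\x}^2 - \sigma^2\trace{\boldSigma} > \sqrt{2bt} + ct} \le e^{-t}$, which matches the claim after substituting $b$ and $c$ (note $\sqrt{2bt} = 2\sigma^2\sqrt{\trace{\boldSigma^2}\,t}$ and $ct = 2\sigma^2\nrm{\boldSigma}t$).

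I expect the main obstacle to be the linearization step and its bookkeeping rather than the concentration calculation: one must justify the Fubini interchange and, more importantly, keep careful track of the admissible range $s < 1/(2\sigma^2\nrm{\boldSigma})$ throughout, since the Gaussian quadratic-form MGF diverges outside it and the optimizing choice of $s$ in the final Chernoff step must be verified to lie inside. The passage from $\At\tr\At$ to $\At\At\tr$ is harmless because only the spectral invariants $\trace{\cdot}$, $\trace{\cdot^2}$, and $\nrm{\cdot}$ enter, but this should be stated explicitly.
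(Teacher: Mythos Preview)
The paper does not prove this lemma at all: it is quoted verbatim as Theorem~2.1 of \cite{Hsu:2011va}, and the authors explicitly state that, apart from Corollary~\ref{lem:nequals1}, they ``defer the proofs to their original papers.'' Your proposal is correct and is in fact essentially the argument given in the original reference---Gaussian linearization of the quadratic form, the closed-form Gaussian MGF, the elementary bound $-\tfrac12\log(1-x)\le \tfrac{x}{2}+\tfrac{x^2}{4(1-x)}$, and a Bernstein-type Chernoff step---so there is nothing to contrast.
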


\vspace{0.5cm}
%%%%%%%%%%%%%%%%

\begin{lem}[Extremal singular values of a matrix with i.i.d. sub-Gaussian rows. Theorem 5.39 of \cite{Vershynin:2010vka}] \label{lem:sval}
Let $\At$ be an $\samp\times\dims$ matrix whose rows $\At_i$ are independent sub-Gaussian isotropic random vectors in $\R^\dims$. Then for every $\tau\geq0$, with probability at least $1-2\exp(-c\tau^2)$ we have
$$
\sqrt{\samp} - C\sqrt{\dims} -\tau \leq \sigma_{\samp}(\At) \leq \sigma_{1}(\At) \leq \sqrt{\samp} + C\sqrt{\dims} + \tau
$$
where $C$ and $c$ are constants which depend only on the sub-Gaussian norm of the rows of $\At$.
\end{lem}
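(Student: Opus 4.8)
The plan is to follow the standard non-asymptotic random matrix argument of \cite{Vershynin:2010vka}, reducing the two-sided singular value bound to a single deviation inequality for the normalized Gram matrix $\frac{1}{\samp}\At\tr\At$ around its mean. First I would observe that since the rows $\At_i$ are isotropic, $\bE\sq{\At\tr\At} = \samp\It_\dims$. The claimed bounds are then equivalent to showing
$$
\nrm{\frac{1}{\samp}\At\tr\At - \It_\dims} \leq \max(\delta,\delta^2), \qquad \delta := C\sqrt{\frac{\dims}{\samp}} + \frac{\tau}{\sqrt{\samp}},
$$
because a symmetric perturbation of the identity with spectral norm at most $\max(\delta,\delta^2)$ forces every singular value of $\At/\sqrt\samp$ into $[1-\delta,1+\delta]$, i.e. $(1-\delta) \leq \sigma_\samp(\At)/\sqrt\samp$ and $\sigma_1(\At)/\sqrt\samp \leq (1+\delta)$, which rearranges to $\sqrt\samp - C\sqrt\dims - \tau \leq \sigma_\samp(\At)$ and $\sigma_1(\At) \leq \sqrt\samp + C\sqrt\dims + \tau$.

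The core step is to control the quadratic form uniformly over the unit sphere $S^{\dims-1}$. For a \emph{fixed} unit vector $\vt$, $\nrm{\At\vt}^2 = \sum_{i=1}^\samp \av{\At_i,\vt}^2$ is a sum of $\samp$ independent variables each of mean $1$ by isotropy, and since each $\av{\At_i,\vt}$ is sub-Gaussian its square is sub-exponential. Bernstein's inequality for sub-exponential sums then yields
$$
\Prob\sq{\mod{\frac{1}{\samp}\nrm{\At\vt}^2 - 1} \geq s} \leq 2\exp\br{-c\samp\min(s^2,s)},
$$
with $c$ depending only on the sub-Gaussian norm of the rows.

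To upgrade this pointwise estimate to a uniform one I would fix a $\tfrac14$-net $\mathcal{N}$ of $S^{\dims-1}$ with $\mod{\mathcal N}\leq 9^\dims$, take a union bound of the displayed Bernstein estimate over $\mathcal{N}$, and then invoke the approximation fact that for a symmetric matrix $\Mt$ one has $\nrm{\Mt} \leq 2\max_{\vt\in\mathcal N}\mod{\vt\tr\Mt\vt}$, applied to $\Mt = \frac{1}{\samp}\At\tr\At - \It_\dims$. Choosing $s = \tfrac12\max(\delta,\delta^2)$ with $\delta$ as above, the union bound contributes a factor $9^\dims = \exp(\dims\log 9)$, while the $\samp s$ term in the Bernstein exponent is engineered so that the $C\sqrt{\dims/\samp}$ part of $\delta$ dominates $\dims\log 9$ and the residual $\tau/\sqrt\samp$ part leaves a net failure probability of $2\exp(-c'\tau^2)$.

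The main obstacle I expect is the bookkeeping of the two regimes of the sub-exponential tail: the Bernstein exponent switches between $s^2$ and $s$ behaviour, and the constant $C$ (hence $\delta$) must be taken large enough that the net cardinality $9^\dims$ is absorbed in \emph{both} regimes while still producing the clean $\exp(-c\tau^2)$ probability and the additive-in-$\tau$ form of the statement. Aligning these constants -- rather than any conceptual difficulty -- is the delicate part, which is precisely why the statement is quoted from \cite{Vershynin:2010vka} rather than rederived here.
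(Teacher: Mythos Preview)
Your sketch is the standard $\epsilon$-net plus Bernstein argument from \cite{Vershynin:2010vka} and is correct. Note, however, that the paper does not prove this lemma at all: it is listed among the supporting concentration inequalities whose proofs are explicitly deferred to the cited source, so there is no ``paper's own proof'' to compare against --- your proposal simply reproduces the argument the paper invokes by citation.
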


\subsection{Discussion} \label{sec:disc}
In this section we provide some additional discussion about the bias and variance of our \iws estimator compared with known results from \cite{Chen:2013te}. We first reproduce the following Lemma
\begin{lem}[Originally Corollary 4 from \cite{Chen:2013te}]
If $\Sigma_w$ is known and $\samp \gtrsim \frac{(1+\sigma_w^2)^2}{\lambda_{\min}(\Sigma_x)\dims\log\dims}$. Then w.h.p., plugging the estimator built using $\widehat{\boldSigma} = \Zt\tr\Zt - \Sigma_w$ and $\hat{\gamma} = \Zt\tr\y$ into Lemma \ref{lem:chenLSbound}, satisfies
\begin{equation}
\nrm{\estbeta - \boldbeta} \lesssim \frac{(\sigma_w^2 + \sigma_w) \nrm{\boldbeta} + \sigma_{\epsilon} \sqrt{1+\sigma_w^2}}{\lambda_{\min}(\Sigma_x)} \sqrt{\frac{\dims \log \dims}{\samp}} ~ . \label{eq:chenIneq1}
\end{equation}
When only an upper bound $\bar{\Sigma}_w \succeq \Sigma_w$ is known then
\begin{align}
\nrm{\estbeta - \boldbeta} \lesssim & \frac{\sq{(\sigma_w^2 + \sigma_w) \nrm{\boldbeta} + \sigma_{\epsilon} \sqrt{1+\sigma_w^2}}   }{\lambda_{\min}(\Sigma_x)- \lambda_{\max}(\bar{\Sigma}_w - \Sigma_w)}\sqrt{\frac{\dims \log \dims}{\samp}}
 + \frac{ \lambda_{\max}(\bar{\Sigma}_w - \Sigma_w)\nrm{\boldbeta}}{\lambda_{\min}(\Sigma_x)- \lambda_{\max}(\bar{\Sigma}_w - \Sigma_w)}
 ~ . \label{eq:chenIneq2}
\end{align}
 \label{lem:chen2}
\end{lem}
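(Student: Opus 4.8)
The plan is to invoke the deterministic reduction of Lemma~\ref{lem:chenLSbound}, which controls $\nrm{\estbeta - \boldbeta}$ by $\frac{1}{\lambda}\nrm{\hat{\gamma} - \widehat{\boldSigma}\boldbeta}$ as soon as a strong-convexity constant $\lambda \leq \lambda_{\min}(\widehat{\boldSigma})$ is in hand. This splits the argument into two essentially independent tasks: bounding the numerator (a stochastic deviation term) and lower-bounding the denominator (a minimum eigenvalue). Since the claimed rate is exactly $\frac{\text{numerator}}{\lambda}$, getting the right scaling on each piece reproduces \eqref{eq:chenIneq1} directly.

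First I would expand the numerator. Substituting $\y = \Xt\boldbeta + \epsilon$, $\Zt = \Xt + \Wt$, and $\widehat{\boldSigma} = \Zt\tr\Zt - \Sigma_w$, the $\Xt\tr\Xt$ and $\Wt\tr\Xt$ contributions cancel and a direct computation gives
\begin{align*}
\hat{\gamma} - \widehat{\boldSigma}\boldbeta = \Xt\tr\epsilon + \Wt\tr\epsilon - \Xt\tr\Wt\boldbeta - \br{\Wt\tr\Wt - \Sigma_w}\boldbeta ,
\end{align*}
where the $\Sigma_w$ correction is precisely what removes $\bE[\Wt\tr\Wt] = \Sigma_w$, leaving four mean-zero terms. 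Each is handled by the sub-Gaussian cross-product bound of Lemma~\ref{lem:Chen1}. Treating the two noise terms jointly as $\Zt\tr\epsilon$ and using that the rows of $\Zt$ are sub-Gaussian with parameter $\sqrt{\sigma_x^2+\sigma_w^2} = \sqrt{1+\sigma_w^2}$ (setting $\sigma_x=1$) yields $\sigma_\epsilon\sqrt{1+\sigma_w^2}\sqrt{\dims\log\dims/\samp}$, while $\nrm{\Xt\tr\Wt\boldbeta}\lesssim\sigma_w\nrm{\boldbeta}\sqrt{\dims\log\dims/\samp}$ and $\nrm{(\Wt\tr\Wt-\Sigma_w)\boldbeta}\lesssim\sigma_w^2\nrm{\boldbeta}\sqrt{\dims\log\dims/\samp}$ combine into $(\sigma_w+\sigma_w^2)\nrm{\boldbeta}\sqrt{\dims\log\dims/\samp}$. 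This is exactly the numerator of \eqref{eq:chenIneq1}.

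Next I would lower-bound $\lambda_{\min}(\widehat{\boldSigma})$. Because $\Xt$ and $\Wt$ are independent and zero-mean, $\bE[\Zt\tr\Zt] = \Sigma_x+\Sigma_w$, so $\bE[\widehat{\boldSigma}] = \Sigma_x$ and $\widehat{\boldSigma}$ is an unbiased estimate of $\Sigma_x$. Applying the covariance concentration of Lemma~\ref{lem:inverseCov} to $\Zt\tr\Zt - \bE[\Zt\tr\Zt]$ with $\lambda = \lambda_{\min}(\Sigma_x)$ — whose hypothesis is exactly the stated sample-size requirement, with the factor $(1+\sigma_w^2)^2$ arising as the squared product of the sub-Gaussian parameters of the two copies of $\Zt$ — gives $\lambda_1(\widehat{\boldSigma} - \Sigma_x)\leq\frac{1}{54}\lambda_{\min}(\Sigma_x)$ and hence $\lambda_{\min}(\widehat{\boldSigma})\geq\frac{1}{2}\lambda_{\min}(\Sigma_x)$ w.h.p. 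Feeding $\lambda=\frac{1}{2}\lambda_{\min}(\Sigma_x)$ together with the numerator bound into Lemma~\ref{lem:chenLSbound} proves \eqref{eq:chenIneq1}.

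For the upper-bound case I would repeat the expansion with $\widehat{\boldSigma} = \Zt\tr\Zt - \bar{\Sigma}_w$. The numerator then picks up an additional \emph{deterministic} term $(\bar{\Sigma}_w - \Sigma_w)\boldbeta$, of norm at most $\lambda_{\max}(\bar{\Sigma}_w - \Sigma_w)\nrm{\boldbeta}$, which does not decay with $\samp$; simultaneously $\bE[\widehat{\boldSigma}] = \Sigma_x - (\bar{\Sigma}_w - \Sigma_w)$, so the same concentration argument degrades the denominator to $\lambda_{\min}(\Sigma_x) - \lambda_{\max}(\bar{\Sigma}_w - \Sigma_w)$, which is precisely \eqref{eq:chenIneq2}. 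I expect the main obstacle to be the quartic term $(\Wt\tr\Wt-\Sigma_w)\boldbeta$: quantifying how far the sample second moment of the corruptions strays from its population value is what forces the $(1+\sigma_w^2)^2$ sample-size condition and produces the $\sigma_w^2\nrm{\boldbeta}$ contribution, and one must check that the sample size is large enough for this deviation and the eigenvalue concentration to hold simultaneously on one high-probability event.
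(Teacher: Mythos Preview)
The paper does not actually prove this lemma; it is reproduced from \cite{Chen:2013te} in the discussion section purely for comparison with Theorem~\ref{thm:corruptedls}, with no accompanying argument. Your proposal is correct and follows exactly the template the paper uses for the closely related Theorem~\ref{thm:corruptedls}: expand $\hat{\gamma}-\widehat{\boldSigma}\boldbeta$, bound each sub-Gaussian cross term via Lemma~\ref{lem:Chen1}, and control the minimum eigenvalue via Lemma~\ref{lem:inverseCov}. The only difference from that proof is the $-\Sigma_w$ correction in $\widehat{\boldSigma}$, which you correctly identify as centering the $\Wt\tr\Wt\boldbeta$ term and thereby eliminating the non-vanishing bias $\pi\sigma_w^2\sqrt{\dims}\nrm{\boldbeta}$ that appears in Theorem~\ref{thm:corruptedls}; your treatment of the upper-bound case, picking up the deterministic $(\bar{\Sigma}_w-\Sigma_w)\boldbeta$ in the numerator and the corresponding loss in the denominator, is likewise correct.
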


\vspace{0.2cm}

We can compare these two statements with our result from Theorem \ref{thm:corruptedls}. Eq. \eqref{eq:chenIneq1} is similar to the bound we have from Theorem \ref{thm:corruptedls} up to the bias term assuming $\pi = 1$ (i.e. all of the points are corrupted). Since we do not use knowledge of $\Sigma_w$ we can compare our result with Eq. \eqref{eq:chenIneq2} which has a bias term which is related to the uncertainty in the estimate of $\Sigma_w$ which in our case is $\sigma_w^2$. It is clear from Lemma \ref{lem:chen2} that the only way to remove this bias completely is to use additional information about the covariance of the corruptions.

\clearpage
\addtocounter{si-sec}{1}
\section{Additional results} \label{sec:addres}
In this section we provide additional empirical results.

\paragraph{Non-corrupted data.}
We first compare performance in three different leverage regimes taken from \cite{Ma:2013tx}: uniform leverage scores (multivariate Gaussian), slightly non-uniform (multivariate-t with 3 degrees of freedom, T-3), highly non-uniform (multivariate-t with 1 degree of freedom, T-1). Full details of the data simulating process can be found in \cite{Ma:2013tx}.

Figures \ref{fig:Mahoney2_est} and \ref{fig:Mahoney2_rmse} show the estimation error and the RMSE respectively for the simulated datasets described in \cite{Ma:2013tx}. The results for the T-3 data are similar to the Gaussian data. The slightly heavier tails of the multivariate $t$ distribution with 3 degrees of freedom cause the leverage scores to be less uniform which degrades the performance of uniform subsampling relative to \srht and \our. Figure \ref{fig:Mahoney2_rmse} shows that the RMSE performance is similar to that of the statistical estimation error.

%\paragraph{Uluru.} We included the \uluru algorithm of \cite{Dhillon:2013wz} in our comparisons. We found that \uluru performed consistently worse than even uniform subsampling for the small number of subsamples we used in our experiments. One possible reason for this is that \uluru obtains a poor estimate of $\br{\Xt\tr\Xt}^{-1}$ which it then reuses in the second step of the algorithm. Indeed the authors recommend using the full sample covariance matrix however this makes the algorithm infeasible in large scale applications.

\begin{figure}[htp]
%\vspace{-20pt}
\begin{centering}
\subfloat[{Gaussian}]{
\begin{minipage}{0.5\textwidth}
    \includegraphics[width=0.98\textwidth, keepaspectratio=true]{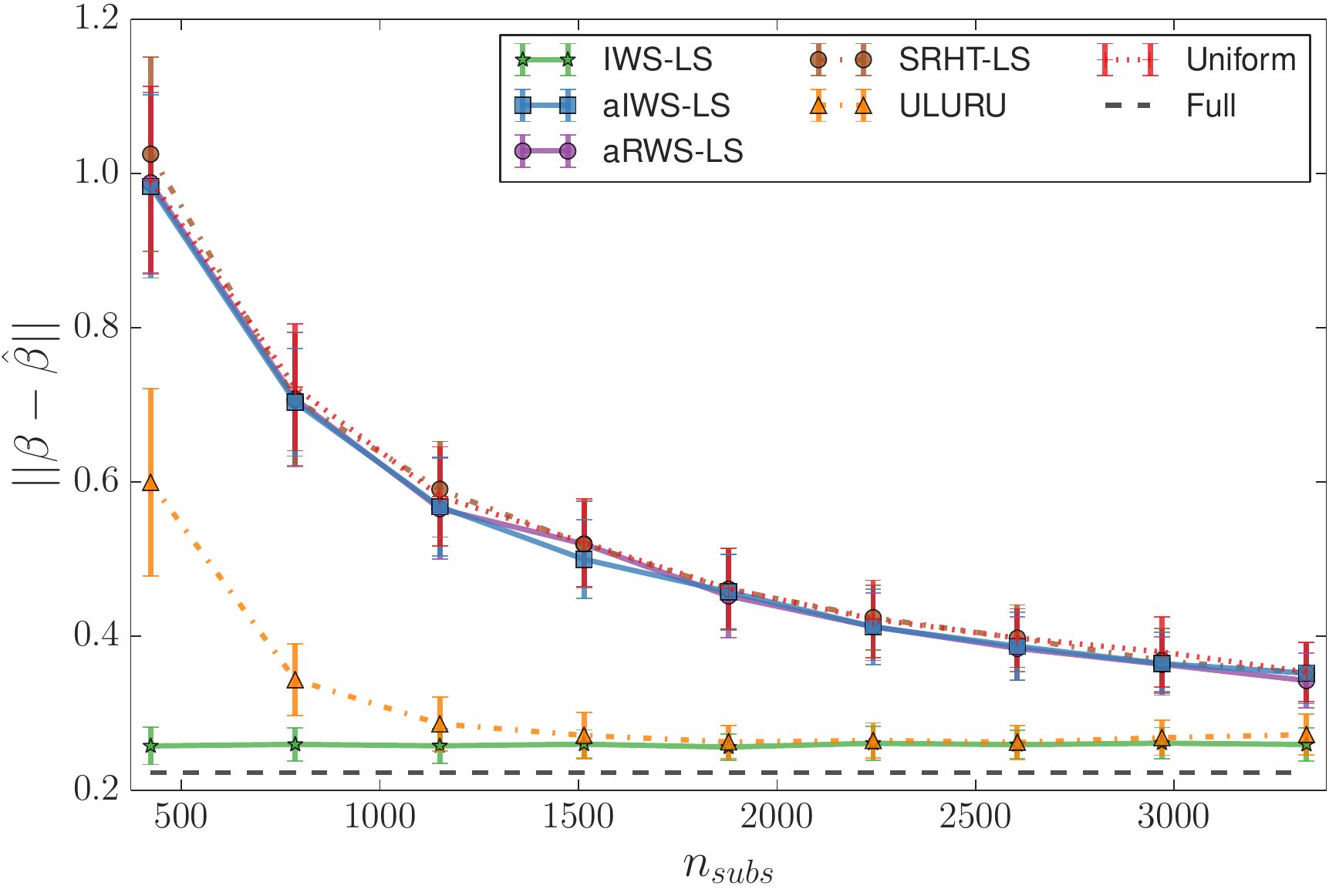}

    % \includegraphics[width=0.98\textwidth, keepaspectratio=true]{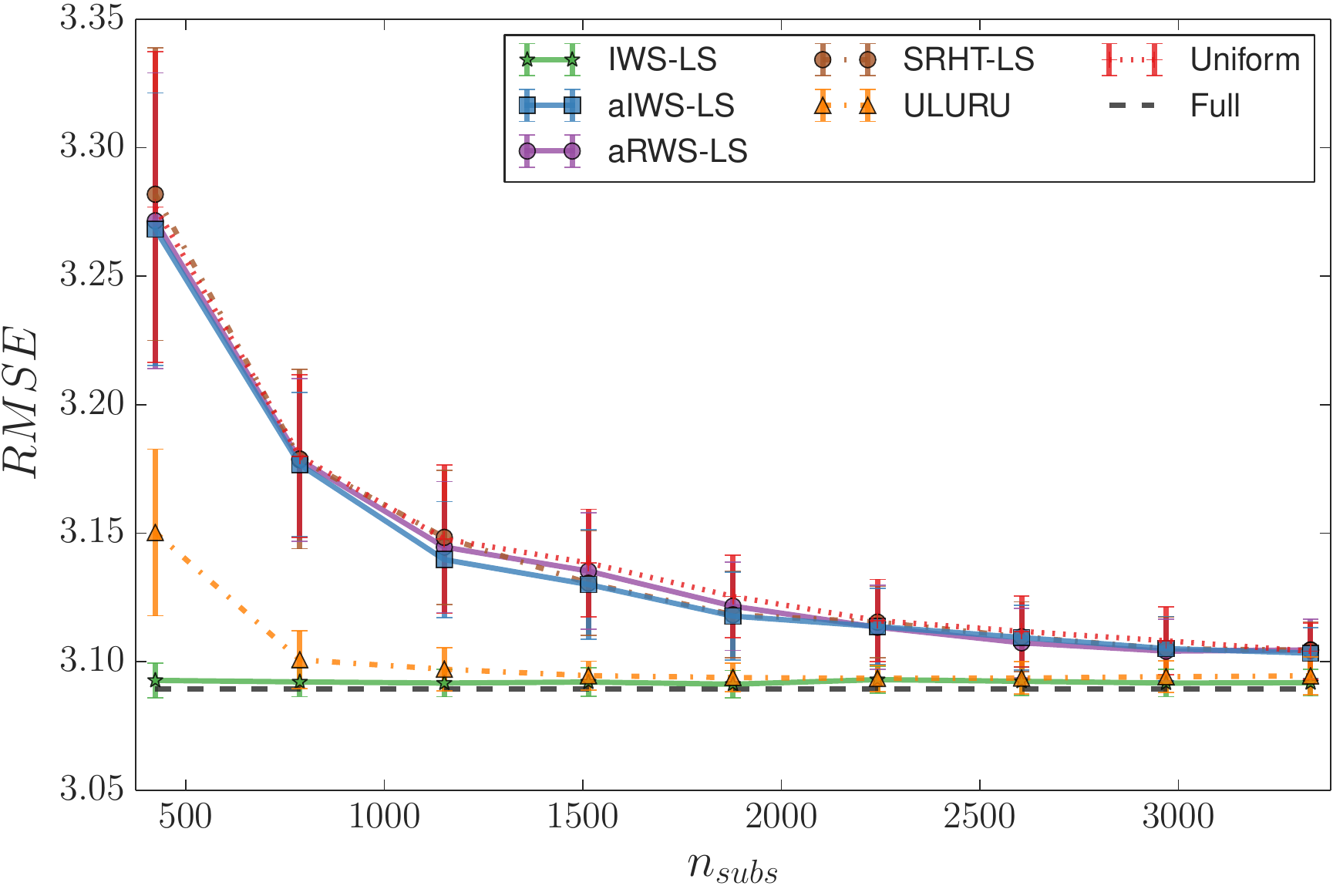}
    %\label{fig:mahoney_gauss_norm2}
\end{minipage}
}
\subfloat[{T-3}]{
\begin{minipage}{0.5\textwidth}
    \includegraphics[width=0.98\textwidth, keepaspectratio=true]{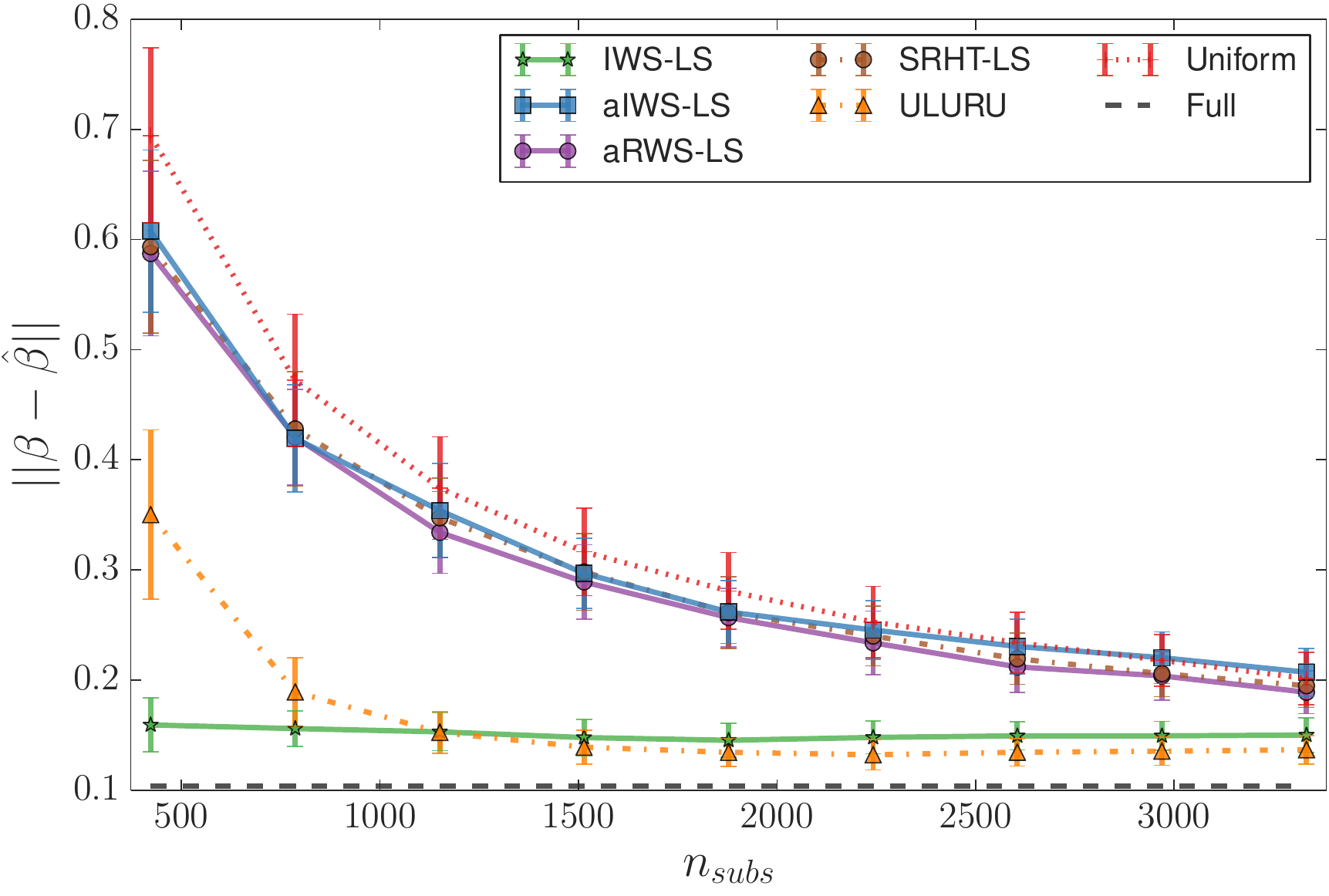}

    % \includegraphics[width=0.98\textwidth, keepaspectratio=true]{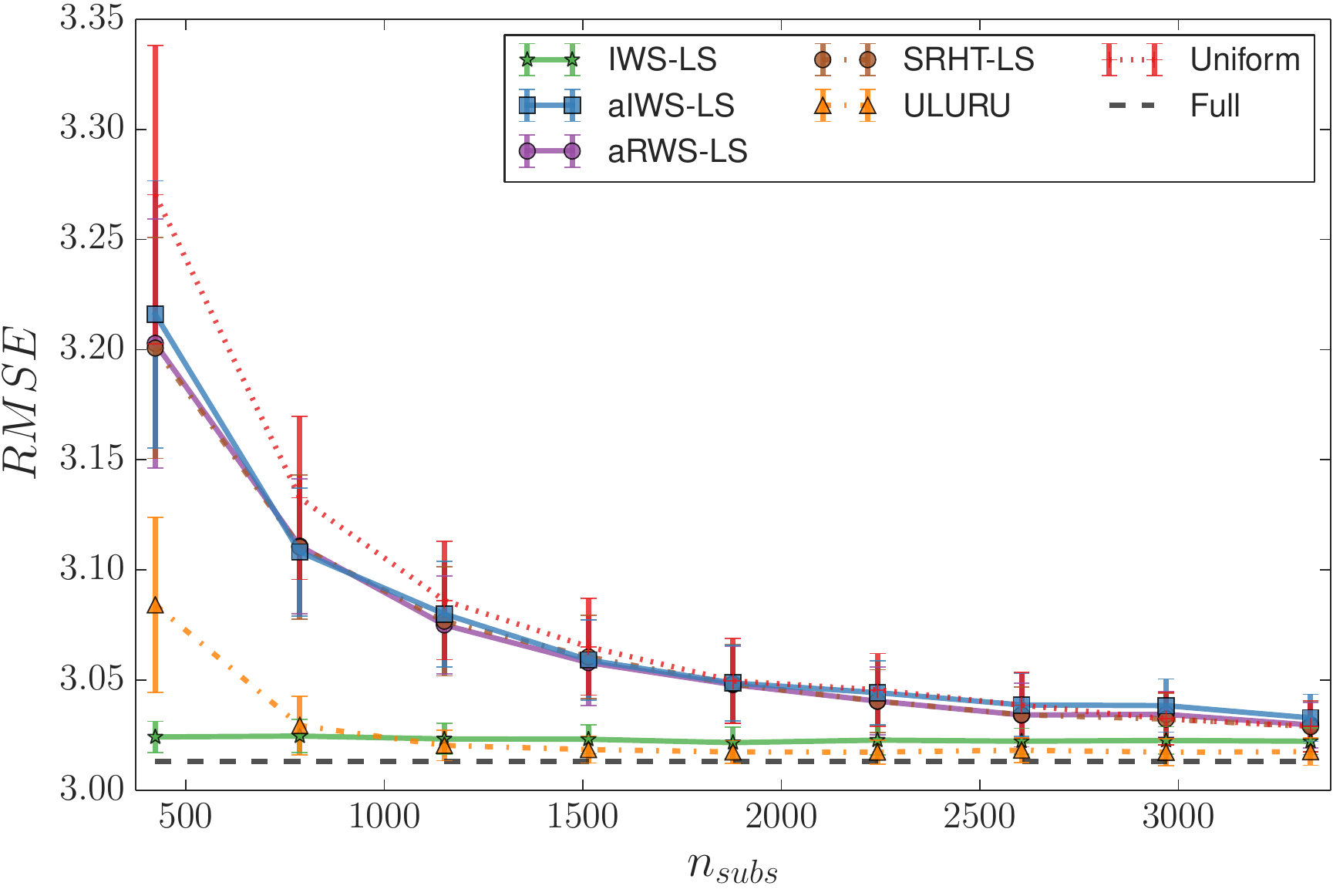}
    %\label{fig:mahoney_t3_norm2}
\end{minipage}
}

\subfloat[{T-1}]{
\begin{minipage}{0.5\textwidth}
    \includegraphics[width=0.98\textwidth, keepaspectratio=true]{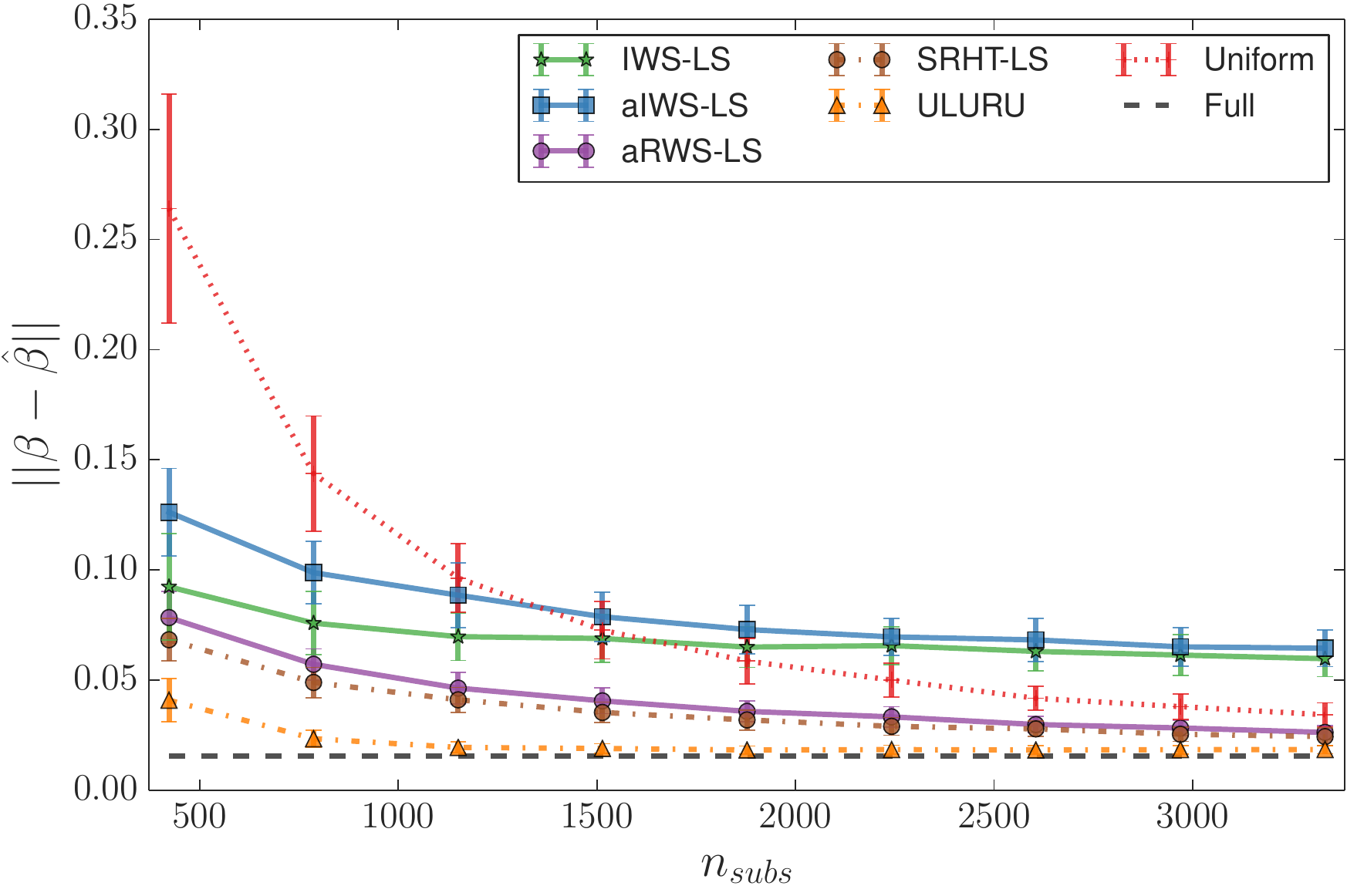}

    % \includegraphics[width=0.98\textwidth, keepaspectratio=true]{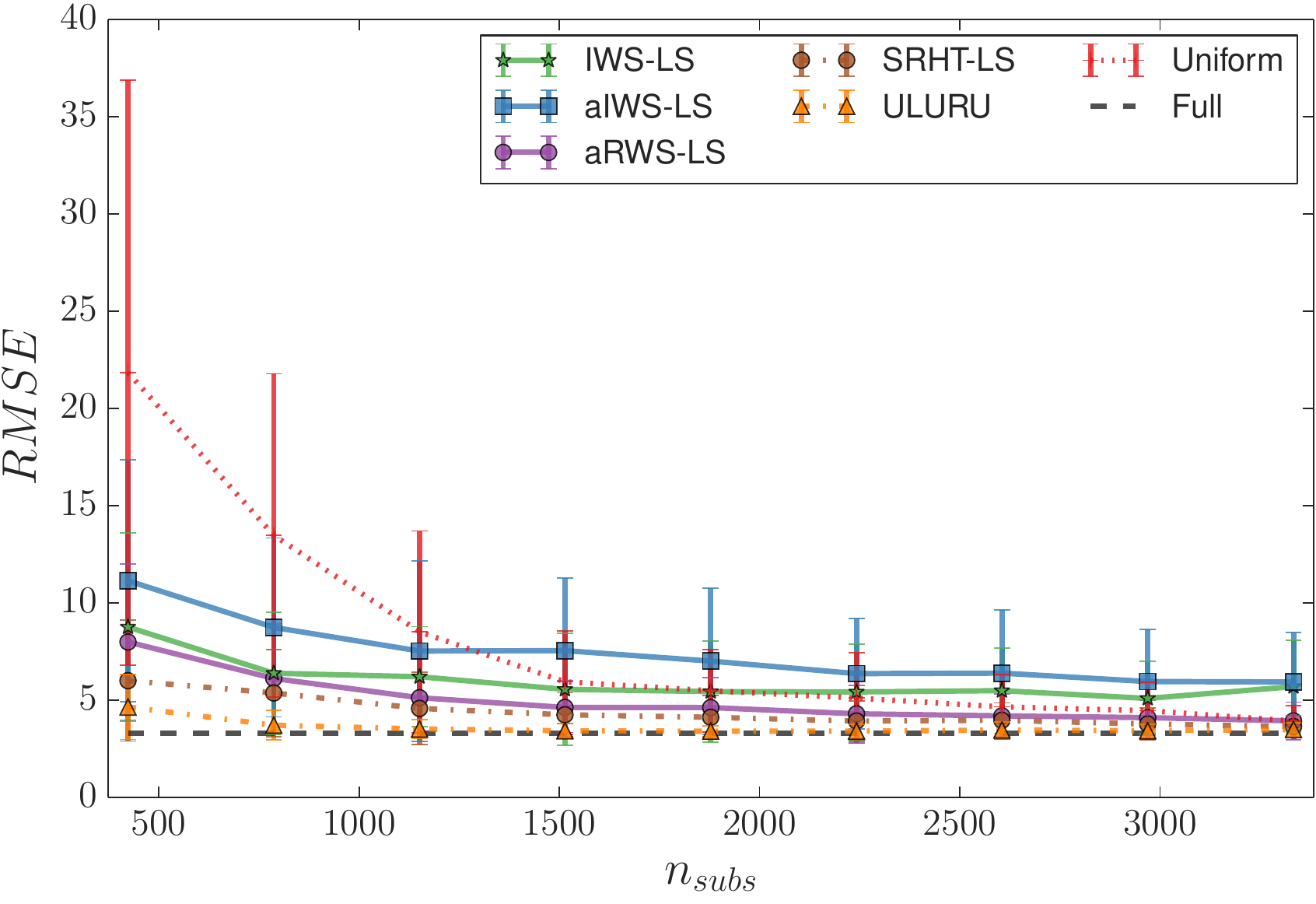}
    %\label{fig:mahoney_t1_norm2}
\end{minipage}
}
\caption{Comparison of mean estimation error and standard deviation on a selection of non-corrupted datasets. \label{fig:Mahoney2_est}}
\end{centering}
\end{figure}

\begin{figure}[htp]


%\vspace{-20pt}
\begin{centering}
\subfloat[{Gaussian}]{
\begin{minipage}{0.5\textwidth}

     \includegraphics[width=0.98\textwidth, keepaspectratio=true]{mahoney_gauss_test_error}
    %\label{fig:mahoney_gauss_norm2}
\end{minipage}
}
\subfloat[{T-3}]{
\begin{minipage}{0.5\textwidth}

     \includegraphics[width=0.98\textwidth, keepaspectratio=true]{mahoney_student_3_test_error}
    %\label{fig:mahoney_t3_norm2}
\end{minipage}
}

\subfloat[{T-1}]{
\begin{minipage}{0.5\textwidth}

     \includegraphics[width=0.98\textwidth, keepaspectratio=true]{mahoney_student_1_test_error}
   % \label{fig:mahoney_t1_norm2}
\end{minipage}
}
\caption{Comparison of root mean squared  prediction error (RMSE) and standard deviation on a selection of non-corrupted datasets. \label{fig:Mahoney2_rmse}}
\end{centering}
\end{figure}

\paragraph{Corrupted data.}
Figures \ref{fig:Corrupted2_est} and \ref{fig:Corrupted2_rmse} show the estimation error and RMSE respectively for the corrupted simulated datasets. In all settings influence based methods outperform all other approximation methods. For $5\%$ corruptions for a small number of samples \uluru outperforms the other subsampling methods. However, as the  number of samples increases, influence based methods start to outperform OLS. For $>3000$ subsamples, the bias correction step of \uluru causes it to diverge from OLS and ultimately perform worse than uniform.

%\footnotetext{The Jenson-Shannon divergence is a smoothed and symmetrised version of the Kullback-Leibler divergence.}
 For $10\%$ corruptions, \ourapprox and \irh converge quickly to \our. As the number of corruptions increase further, the relative performance of \our with respect to OLS decreases slightly as suggested by Remark \ref{rem:limit}.

For $30\%$ corruptions, the approximate influence algorithms achieve almost exactly the same performance as \our. Even for a small number of samples all of the influence methods far outperform \OLS. As the proportion of corruptions increases further, the rate at which the approximate influence algorithms approach \our slows and the relative difference between \our and OLS decreases slightly. In all cases, influence based methods achieve lower-variance estimates.  Here, \uluru converges quickly to the OLS solution but is not able to overcome the bias introduced by the corrupted datapoints.

\begin{figure*}[htp]
%\vspace{-20pt}
\begin{centering}
\subfloat[{5\% Corruptions}]{
\begin{minipage}{0.5\textwidth}
    \includegraphics[width=0.95\textwidth, keepaspectratio=true]{paper_figure_corrupted-n10000-p50-corruption-n_corrupt500_norm_diff.pdf}

%     \includegraphics[width=0.95\textwidth, keepaspectratio=true]{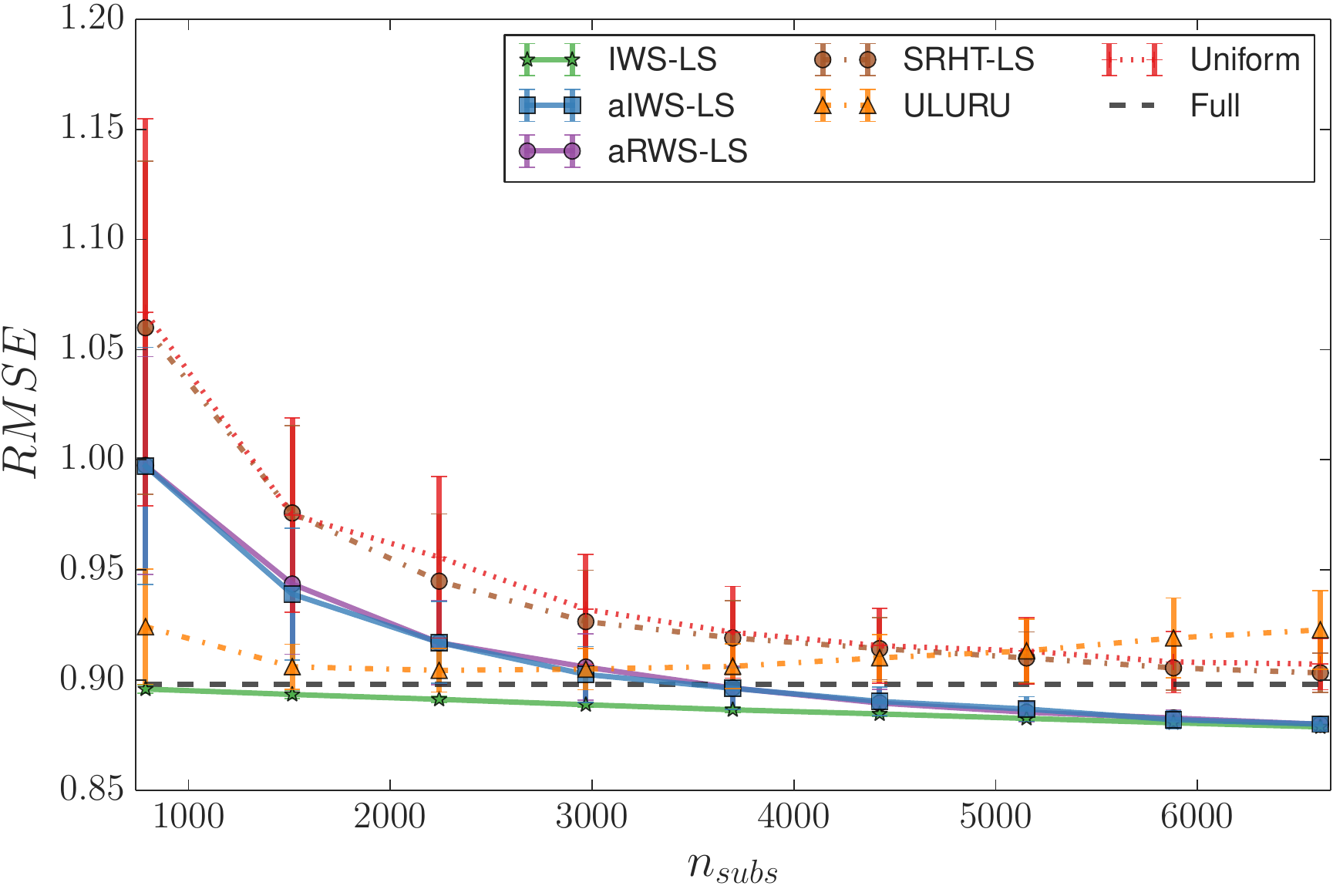}
   % \label{fig:c1_supp}
\end{minipage}
}
\subfloat[{10\% Corruptions}]{
\begin{minipage}{0.5\textwidth}
    \includegraphics[width=0.95\textwidth, keepaspectratio=true]{paper_figure_corrupted-n10000-p50-corruption-n_corrupt1000_norm_diff}

%     \includegraphics[width=0.95\textwidth, keepaspectratio=true]{paper_figure_corrupted-n10000-p50-corruption-n_corrupt1000_test_error}
    %\label{fig:c2_supp}
\end{minipage}
}

\subfloat[{30\% Corruptions}]{
\begin{minipage}{0.5\textwidth}
    \includegraphics[width=0.95\textwidth, keepaspectratio=true]{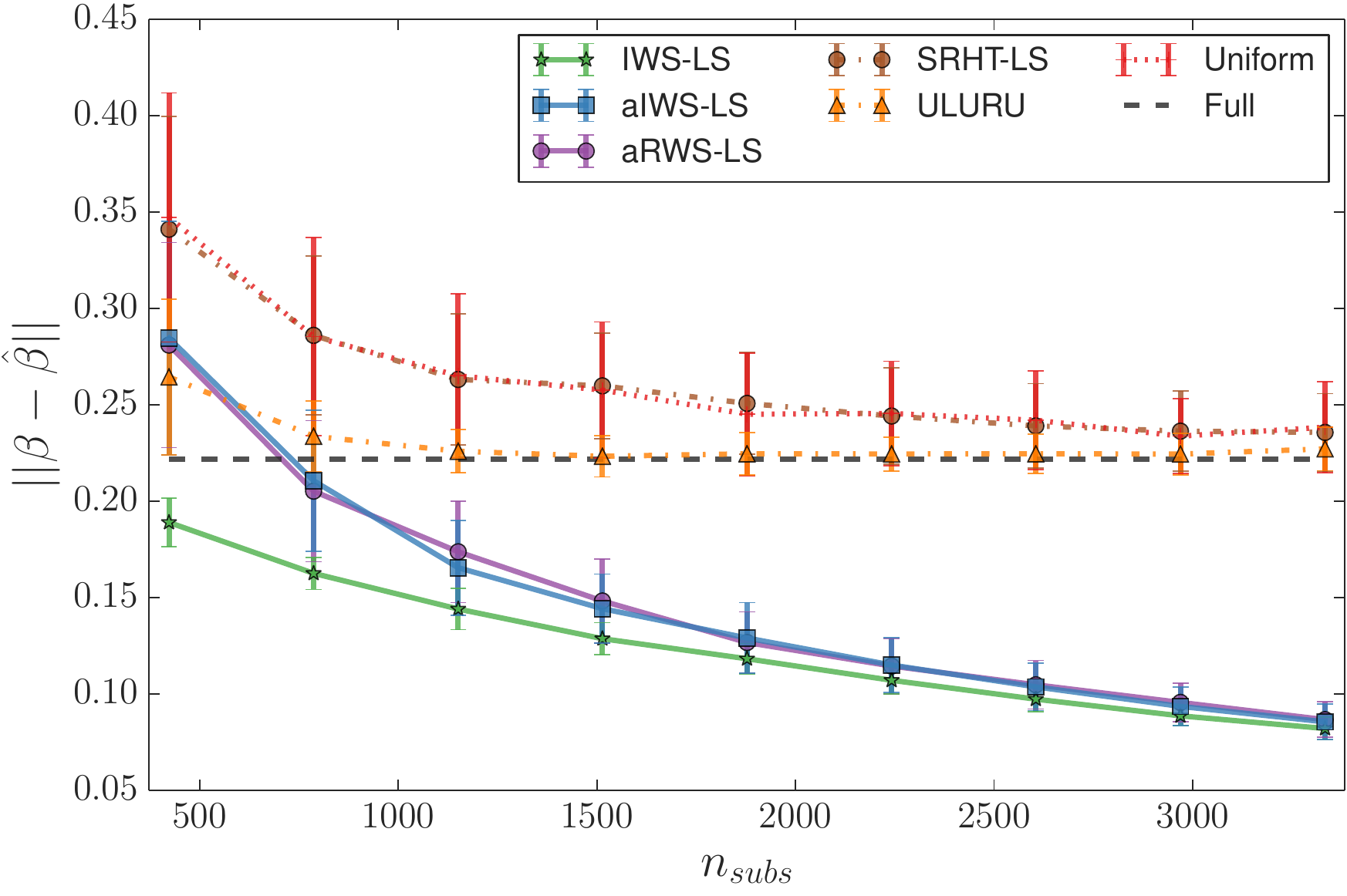}

%\includegraphics[width=0.95\textwidth, keepaspectratio=true]{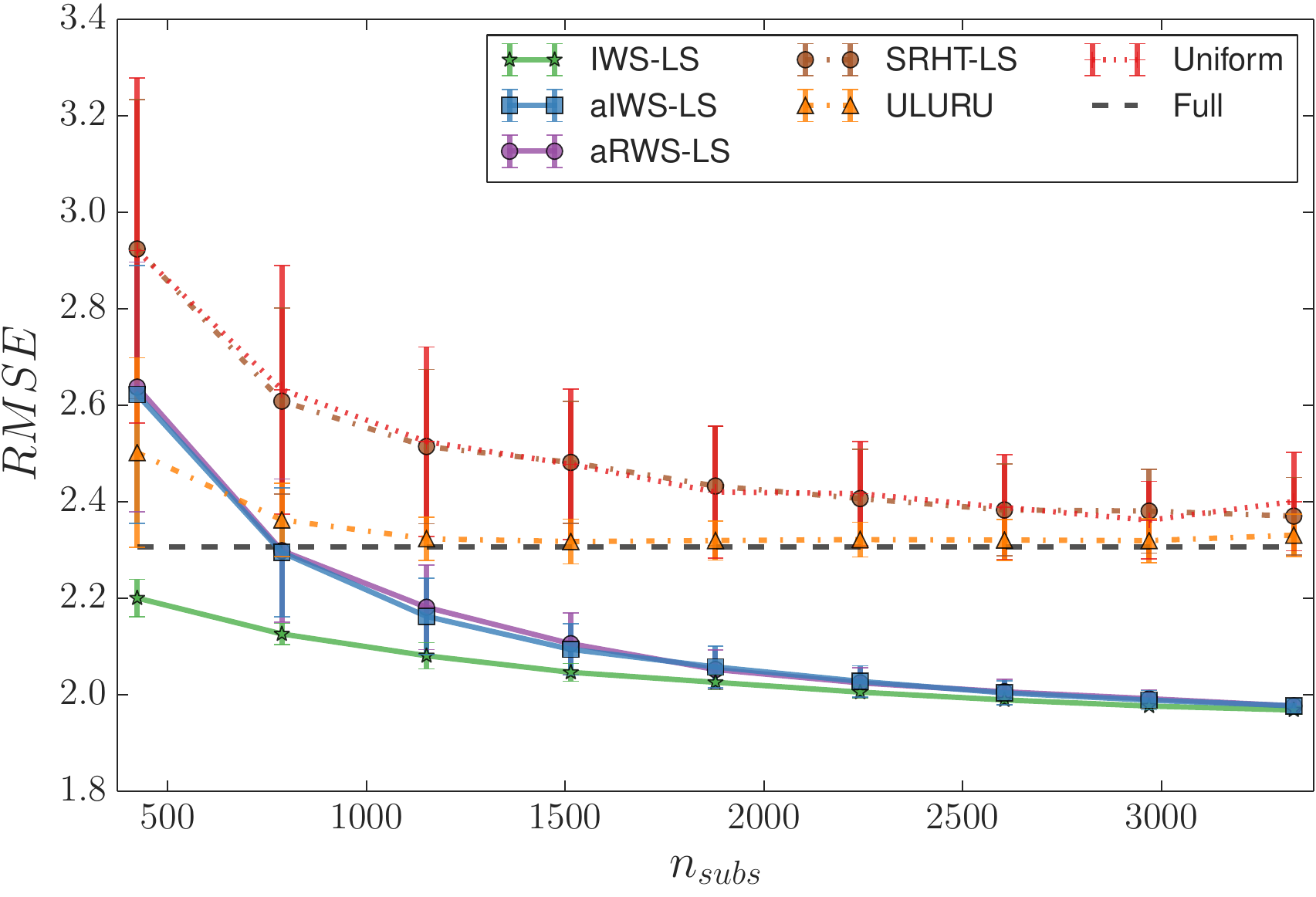}
    %\label{fig:c_supp3}
\end{minipage}
}
\caption{Comparison of mean estimation error and standard deviation on a selection of corrupted datasets. \label{fig:Corrupted2_est}}
\end{centering}
\end{figure*}

\begin{figure*}[htp]


%\vspace{-20pt}
\begin{centering}
\subfloat[{5\% Corruptions}]{
\begin{minipage}{0.5\textwidth}

     \includegraphics[width=0.95\textwidth, keepaspectratio=true]{paper_figure_corrupted-n10000-p50-corruption-n_corrupt500_test_error}
    %\label{fig:c1_supp}
\end{minipage}
}
\subfloat[{10\% Corruptions}]{
\begin{minipage}{0.5\textwidth}

     \includegraphics[width=0.95\textwidth, keepaspectratio=true]{paper_figure_corrupted-n10000-p50-corruption-n_corrupt1000_test_error}
   % \label{fig:c2_supp}
\end{minipage}
}

\subfloat[{30\% Corruptions}]{
\begin{minipage}{0.5\textwidth}

\includegraphics[width=0.95\textwidth, keepaspectratio=true]{paper_figure_corrupted-n10000-p50-corruption-n_corrupt3000_test_error}
    %\label{fig:c_supp3}
\end{minipage}
}
\caption{Comparison of test RMSE and standard deviation on a selection of corrupted datasets. \label{fig:Corrupted2_rmse}}
\end{centering}
\end{figure*}

\paragraph{Larger Scale Experiments Corrupted data.}
We now present results on larger scale simulated data. We used the same experimental setup as in \S\ref{sec:results} but we increase the size of the data to $\samp=100,000$ and $\dims=500$. 

Figures \ref{fig:Corrupted_large} and \ref{fig:Corrupted_large_est} show the estimation error and RMSE respectively. In this setting, computing \our is too slow (due to the exact leverage computation) so we omit the results but we notice that \ourapprox and \irh quickly improve over the full least squares solution and the other randomized approximations. The general trend in this setting is the same as with the smaller experiments, however for $5\%$ corruptions the improvement of \ourapprox and \irh over OLS happens with a much smaller subsampling ratio than with smaller datasets. 

\begin{figure*}[htp]
%\vspace{20pt}
\begin{centering}
\subfloat[{5\% Corruptions}]{
\begin{minipage}{0.5\textwidth}
    \includegraphics[width=0.95\textwidth, keepaspectratio=true]{nips_figure_corrupted-n100000-p500-corruption-n_corrupt5000_norm_diff}
\end{minipage}
}
\subfloat[{10\% Corruptions}]{
\begin{minipage}{0.5\textwidth}
    \includegraphics[width=0.95\textwidth, keepaspectratio=true]{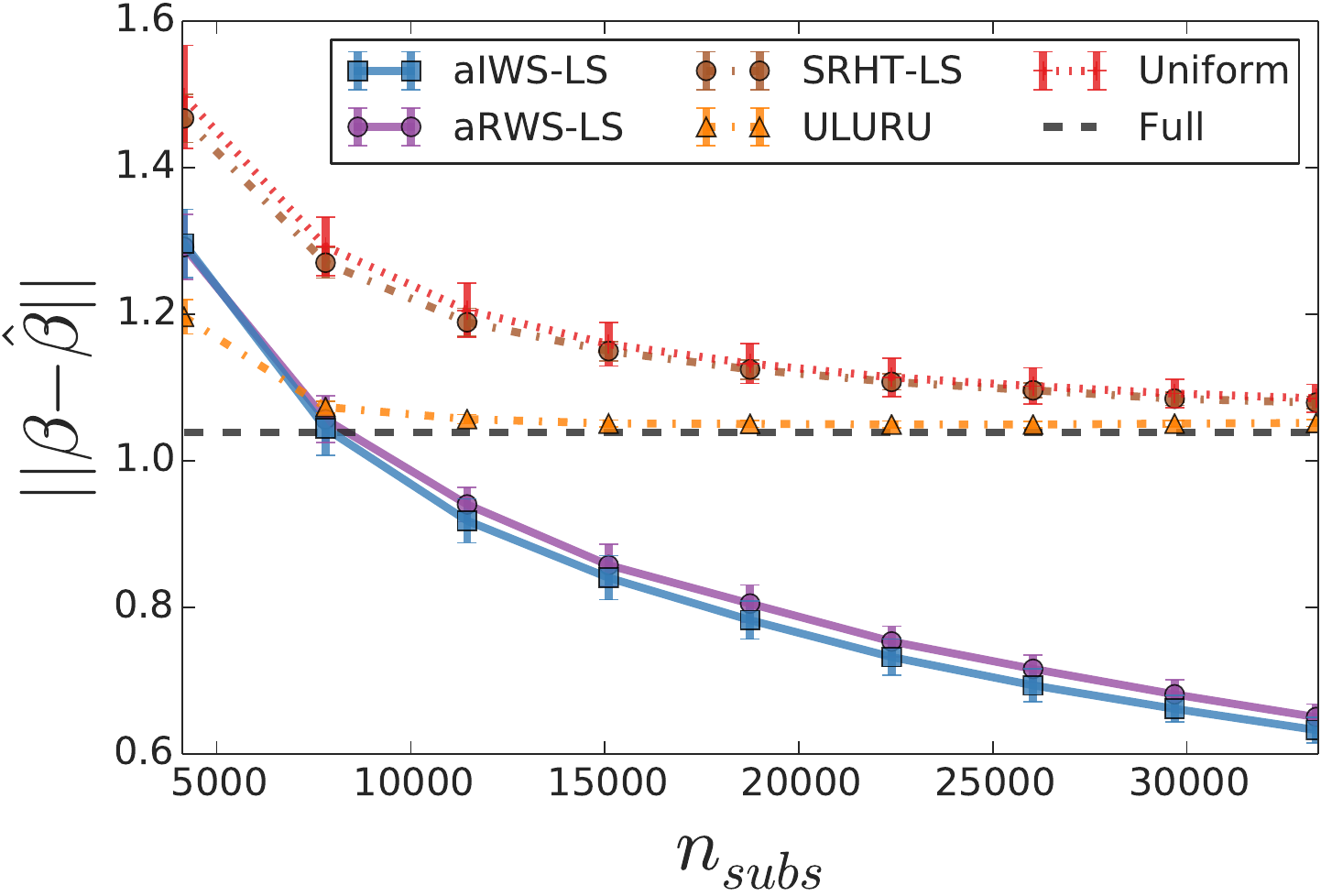}
\end{minipage}
}

\subfloat[{30\% Corruptions}]{
\begin{minipage}{0.5\textwidth}
    \includegraphics[width=0.95\textwidth, keepaspectratio=true]{nips_figure_corrupted-n100000-p500-corruption-n_corrupt30000_norm_diff}
\end{minipage}
}
\caption{Comparison of mean estimation error and standard deviation on a selection of corrupted datasets. \label{fig:Corrupted_large}}
\end{centering}
\end{figure*}

\begin{figure*}[htp]
%\vspace{-20pt}
\begin{centering}
\subfloat[{5\% Corruptions}]{
\begin{minipage}{0.5\textwidth}
    \includegraphics[width=0.95\textwidth, keepaspectratio=true]{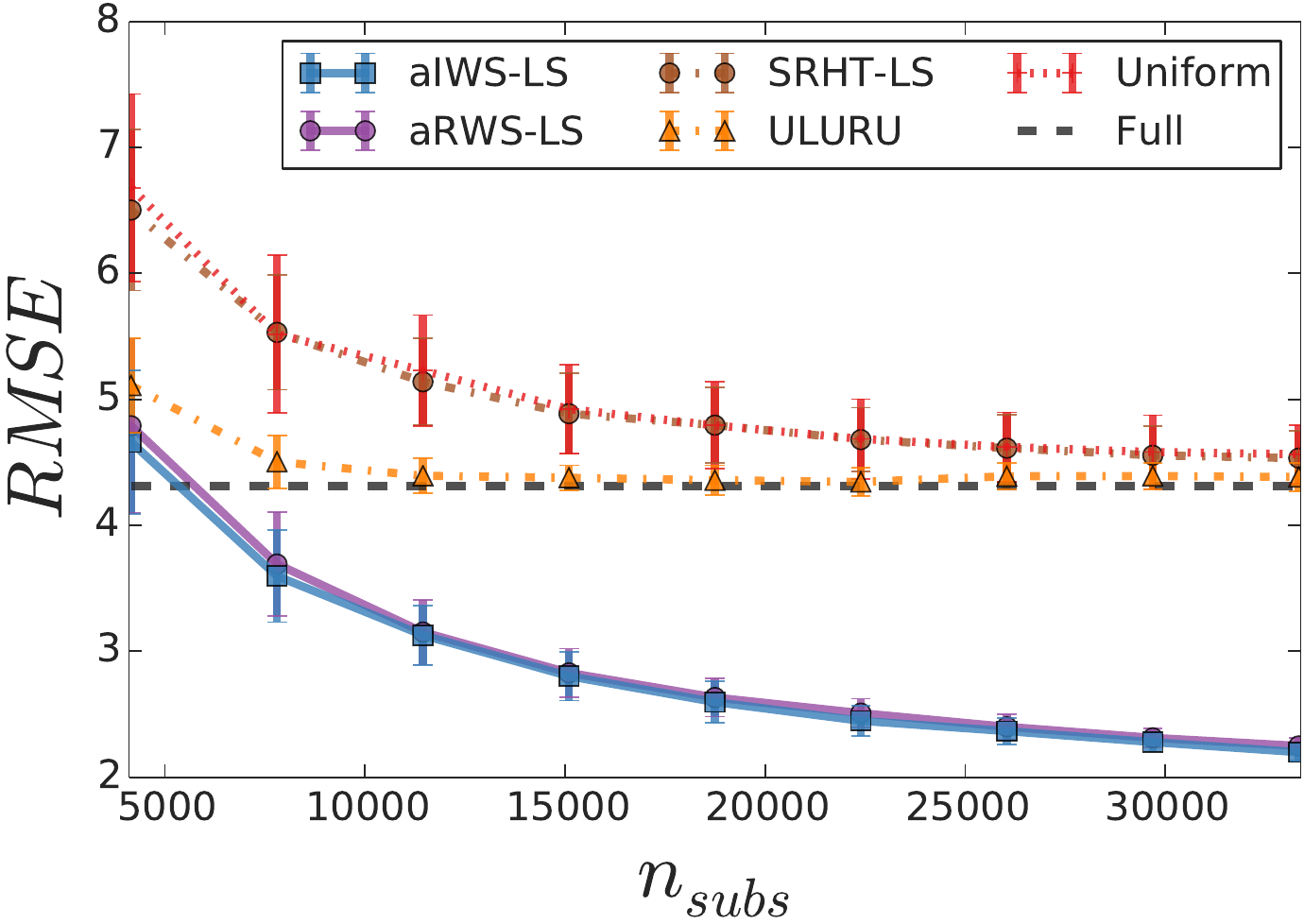}
\end{minipage}
}
\subfloat[{10\% Corruptions}]{
\begin{minipage}{0.5\textwidth}
    \includegraphics[width=0.95\textwidth, keepaspectratio=true]{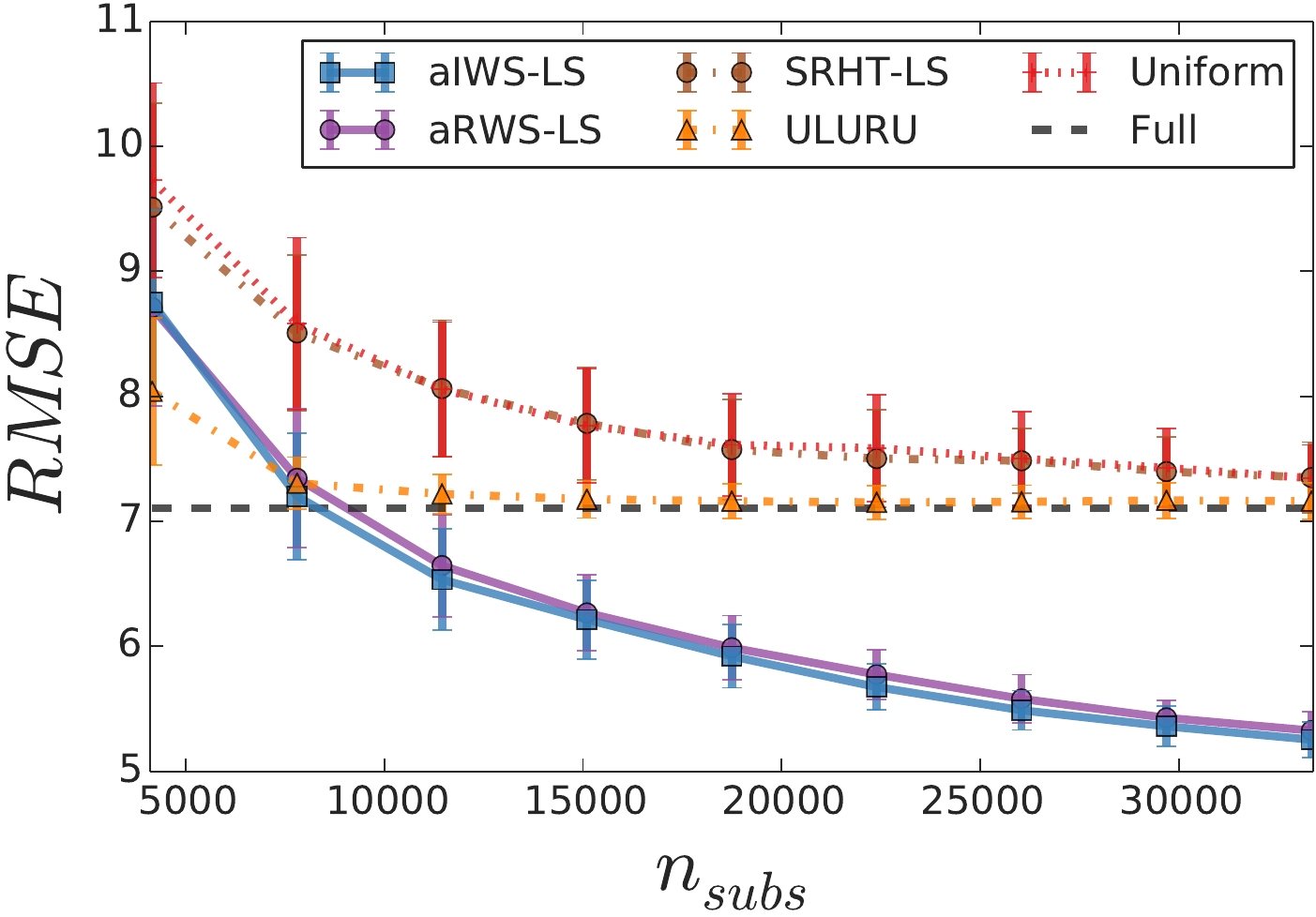}
\end{minipage}
}

\subfloat[{30\% Corruptions}]{
\begin{minipage}{0.5\textwidth}
    \includegraphics[width=0.95\textwidth, keepaspectratio=true]{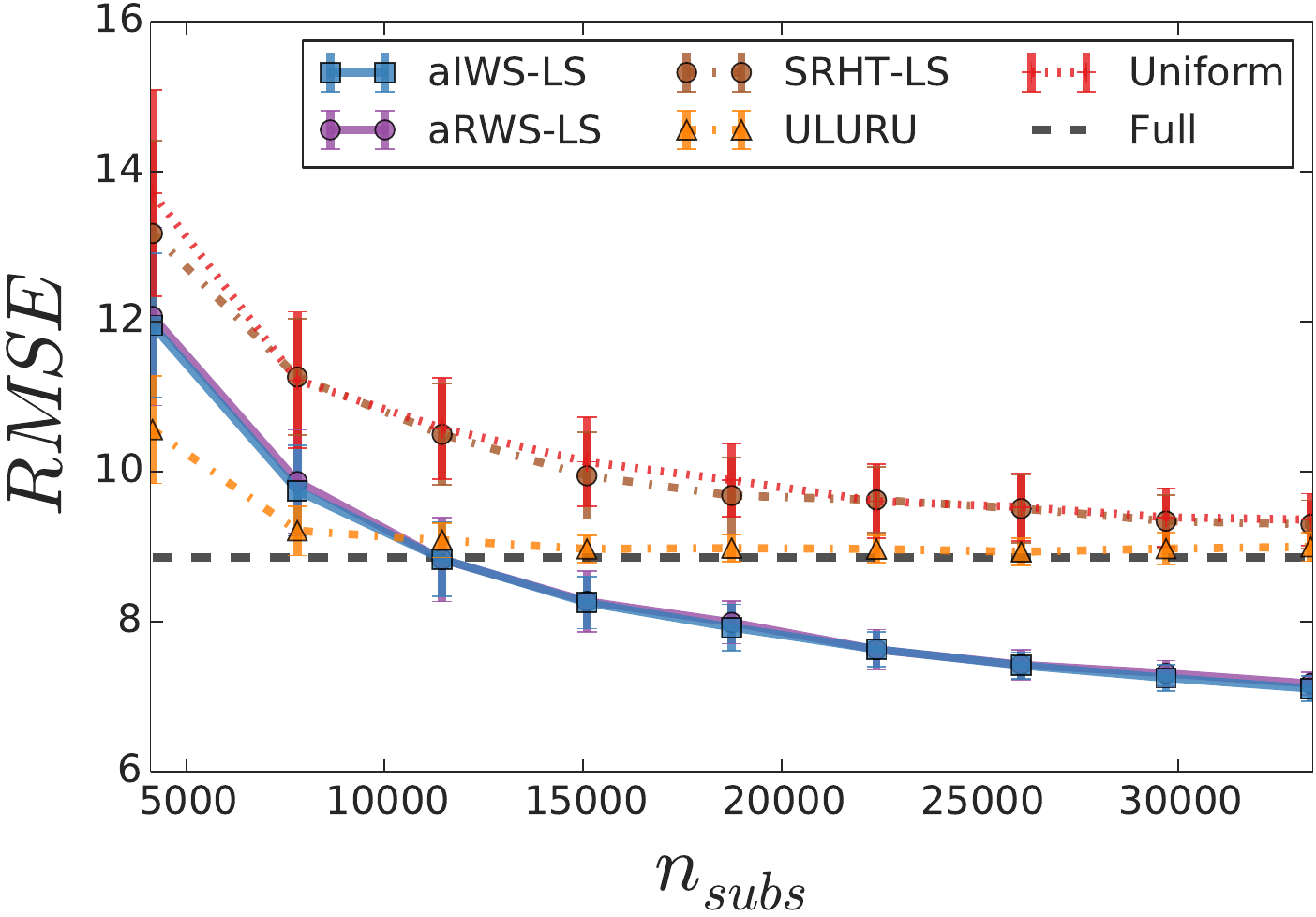}
\end{minipage}
}
\caption{Comparison of test RMSE and standard deviation on a selection of corrupted datasets. \label{fig:Corrupted_large_est}}
\end{centering}
\end{figure*}

\end{document}